\newcommand{\E}{\mathbb{E}}
\newcommand{\prob}{\mathbb{P}}
\DeclareMathOperator*{\argmin}{arg\,min}
\newtheorem{lemma}{Lemma}
\newtheorem{theorem}{Theorem}
\newtheorem{corollary}{Corollary}
\newtheorem{definition}{Definition}
\newcommand{\hz}[1]{{\color{blue} [HZ: {#1}]}}
\newcommand{\ktt}[1]{{\color{red} [KTT: {#1}]}}
\title{A Neural Network Based Choice Model for Assortment Optimization}
\author{}
\date{}
\author{Hanzhao Wang, Zhongze Cai, Xiaocheng Li, Kalyan Talluri}
\date{\small
Imperial College Business School, Imperial College London\\
(h.wang19, z.cai22, xiaocheng.li, kalyan.talluri)@imperial.ac.uk}
\begin{document}
\maketitle

\onehalfspacing

\begin{abstract}
Discrete-choice models are used in economics, marketing and revenue management to predict customer purchase probabilities, say as a function of prices and other features of the offered assortment. While they have been shown to be expressive, capturing customer heterogeneity and behaviour, they are also hard to estimate, often based on many unobservables like utilities; and moreover, they still fail to capture many salient features of customer behaviour. A natural question then, given their success in other contexts, is if neural   networks can eliminate the necessity of carefully building a context-dependent customer behaviour model and hand-coding and tuning the estimation.  It is unclear however how one would incorporate assortment effects into such a neural network, and also how one would optimize the assortment with such a black-box generative model of choice probabilities. In this paper we investigate first whether a single neural network architecture can predict purchase probabilities for datasets from various contexts and generated under various models and assumptions. Next, we develop an assortment optimization formulation that is solvable by off-the-shelf integer programming solvers.  We compare against a variety of benchmark discrete-choice models on simulated as well as real-world datasets, developing training tricks along the way to make the neural network prediction and subsequent optimization robust and comparable in performance to the alternates.
\end{abstract}

\section{Introduction}
What goes on in the consumer's mind during the purchase process? How, and why, do they decide to purchase? What features of the product or environment encourage them to purchase?  Given the heterogeneity and idiosyncratic thought processes of customers, these questions are unlikely to be answered definitely.  Nevertheless these are fundamental questions of interest to marketers, behavioural psychologists, economists and operations researchers.   Hence, for operational and algorithmic purposes, a number of stylized or reduced-form models have been proposed in the literature to explain individual-level decision outcomes as well as collective aggregate purchase behavior.

Discrete-choice models are one such class of models, widely used in marketing, economics, transportation and revenue management, especially suitable for situations where customers choose at most one of several alternatives. They are typically parsimonious models, based on micro-economic utility maximization principles in a stochastic framework, and expressive enough to incorporate product, customer and environment features, and tractable enough for estimation and incorporation into optimization models.  The classical Multinomial Logit choice model (MNL) is an early and prominent example that, along with mixed-logit, nested-logit and probit, has found widespread application in many fields, both in theoretical models as well as in empirical studies.  New alternatives to MNL, that are more flexible or more expressive, such as Markov chain choice \citep{blanchet2016markov} and Exponomial \citep{alptekinouglu2016exponomial}, have been proposed recently.

One could potentially design more elaborate choice models, folding in many known aspects of the purchase behavior. However, this line of development has some natural limitations as, one, the models have to be specialized and tuned to a particular industry or even to a firm's purchase funnel; two, would require developing and hand-coding its estimation which may be difficult both in theory and practice; and finally, using the model in an assortment optimization method may be computationally infeasible.

So, given the fundamental indeterminacy in how and why a customer chooses to purchase, our inability to observe all the data required by some of these models, and as the process itself is opaque and liable to change by industry and context, a reasonable research proposition is to search for a robust universal model that requires minimal tuning or expert modeling knowledge and works as well as behavioural models across all contexts and industries.  Naturally, machine learning methods, specifically neural networks, come to mind.   They have been applied with great success for many prediction tasks in industry precisely because they do not require fine-grained modeling specific to a situation and industry, are robust, and come with standardized training algorithms that work off-the-shelf.  With enough data they also have proved to be very good performers in many areas, sometimes proving to be superior even to hand-crafted models.

In this paper we develop a neural network based choice model suitable for assortment optimization.  The firm has to offer an assortment of products, potentially customized to each individual customer, and the customer either chooses one of the offered products or decides not to purchase.   This problem arises in a wide variety of industries where customers choose at most one item, such as hotel rooms, airline seats, or expensive one-off purchases in a single category such as automobiles, laptops or mobile phones.  (We leave extensions to multi-item purchases for future research.)

The challenges are the following:   First, the data is not at a scale typically required for neural networks training; a hotel for instance has only a few months of seasonal data it can use for predictions.  Second, the neural network predictions have to be used subsequently in an assortment optimization problem---essentially revenue-optimal subset-selection---and a neural network output promises no structure to make this optimization efficient.   Consequently we want a network as compact as possible so the optimization---in an integer programming formulation that we develop---is tractable for a reasonably large number of products.

Our contributions in this paper are the following:
\begin{enumerate}
\item We tackle the above-mentioned challenges and develop a neural network based choice model under two settings: (i) a feature-free setting where no additional features on products or customers are used in prediction and (ii) a feature-based setting that can utilize both  product and customer features.
\item We formulate an integer-program over the trained neural network to find an assortment that maximizes expected revenue.
\item We perform extensive numerical simulations that compare the predictive and optimization performance under the following scenarios:
    \begin{enumerate}
    \item  Synthetic simulations  where the ground-truth is generated by a panoply of models in the choice literature---MNL, Markov chain, non-parametric, mixed-logit---choice models and cross-validate them, both on raw prediction of choice probabilities, as well as the quality of the optimized assortment.
    \item Situations where the customer behavior does not follow the models' script or rational economic behaviour, but is documented experimentally and empirically  in the literature.
    \item Real-world assortment and sales transactions data from the following important industries: physical fashion-retail, airline, hotel, and transportation, and compare the predictive performances of the popular discrete-choice models vs. our neural network.
    \end{enumerate}
\item We device a meta-learning trick that allows us to obtain good robust prediction performance from a compact neural network even when trained on limited transactional purchase data. Moreover, we show a well-trained network can be warm-started when adding new products to the pool.
\end{enumerate}

Our numerical results suggest that the neural network gives comparable performance to the best choice model in most cases, both in simulations as well as on real data.  Some of the parametric models do perform better in simulations, unsurprisingly, when the ground-truth data-generation coincides with the model.   We find a similar pattern for real-world data, where on a hold-out sample our neural network gives robust predictions across the different industries. Moreover, we find that a one-layer neural choice model can capture what may be considered ``irrational" purchase behavior---documented to occur in experiments and empirical studies---that cannot be explained by random utility models.  In summary,  shallow neural networks enjoy both prediction power and computational tractability for assortment optimization.

\section{Literature Review}
%Probabilistic choice models originated in economics based on a stochastic utility model of the consumer.  They output a prediction of the  probability of the customer choosing a product given an \textit{assortment} (a set of offered products). They are typically based on a model of utility that is a function of the features of the products in the assortment as well as the context (industry, features of the customer etc.).

Probabilistic choice models originated in economics based on a stochastic utility model of the consumer.  The oldest and a still widely used discrete-choice model is the MNL, with alternates and generalizations proposed to overcome its known limitations, such as the nested-logit and mixed-logit.  We refer the reader to the seminal paper~\citet{mcfadden2000mixed} and the book \citet{Ben-Akiva85} for back-ground on discrete-choice models and their economic motivation and estimation from data.

We review the literature in Computer Science and Operations Research next on neural network modeling of discrete-choices and assortment optimization.

\subsection{Neural Network Models in Utility Modeling}
In the Computer Science community, \cite{bentz2000neural} use a neural network to capture the non-linear effects of features on the latent utility. A few recent works
\cite{wang2020deep,han2020neural,sifringer2020enhancing, wong2021reslogit,aouad2022representing,arkoudi2023combining} study different application contexts and explore different neural network architectures for the feature-utility mapping. In addition to the product and customer features, \cite{gabel2022product} also encode history purchase information. \cite{chen2021estimating,chen2022decision} take a different route and consider the random forest model as the mapping function. \cite{van2022choice} discuss the potential avenues for further integrating machine learning into choice modeling with a comprehensive review of the existing literature.

We term this line of works as deep learning based utility modeling instead of choice modeling, because these works mainly focus on predicting the unobserved product utilities with the observed features and applying an MNL-style operator for predicting purchase probabilities (except \cite{aouad2022representing}, who use a mixture of a set of estimated utilities to mimic the mixed-logit model). What is missing is the \textit{assortment effect} on utilities---the interactions between the products within (and possibly outside) the set of offered products. This has two negative implications: First, some of the neural network models require all the training samples to have a fixed-size assortment. This requirement is simply not met in many application contexts such as hotels, airlines or fashion retail where the assortments change frequently. Second, it fails to capture the effect that the assortment has on the product utilities. Intuitively, a product's utility is not only determined by product and customer features, but also affected by the assortment offered to the customer which is a distinguishing feature of our neural network.

Another common drawback of the existing deep learning approaches is that they all require the availability of product or customer features. When there are no features available, most of the models degenerate to a uniform choice probability. In contrast, our neural network based choice model is designed to handle the feature-free setting. Importantly, we make a distinction between the feature-free model and the feature-based model not by the availability of the features, but by whether the downstream application is interested in a population-level or personalized choice model for its assortment optimization.

\subsection{Assortment Optimization}
Assortment optimization is an important problem in operations management, used in the airline, hotel, and fashion retail  e-commerce. For more on the history and applications of assortment optimization,  we refer readers to the books \citet{talluri2004theory,gallego2019revenue}. Here we briefly review the complexity of the problem under some of the aforementioned choice models:

\noindent\textbf{Multinomial Logit Model (MNL):} Under the MNL model \cite{talluri2004revenue} show that the optimal assortment is a revenue-ordered nested assortment and thus the optimal policy is to order the items by their revenues to form a nested set of assortments and choose the assortment with the largest expected revenue in that nested set.  \cite{rusmevichientong2012robust} show such revenue-ordered policy (RO) is robust against the uncertainty in MNL parameters and solvable under a cardinality constraint.
%there exists polynomial time algorithm when the constraints are totally unimodular constraints \citep{davis2013assortment}. With capacity constraint where coefficients of items are different, \cite{desir2014near} show it is NP-hard and provide a FPTAS.

\noindent\textbf{Markov Chain Choice Model (MCCM):} \cite{blanchet2016markov} give a polynomial-time algorithm for finding the optimal assortment under this model by repeatedly using a Bellman operator until convergence. \cite{feldman2017revenue} further prove that the optimal assortment can be found by solving a linear programming.  With constraints, \cite{desir2020constrained} prove that this assortment optimization problem is NP-hard and introduce algorithms with a provable worst-case approximation ratio.

\noindent\textbf{Mixed-Multinomial Logit model (MMNL):} \cite{bront2009column} show that even without constraints, the assortment optimization problem under MMNL is NP-hard and then provide a column-generation method to solve the problem.  \cite{rusmevichientong2014assortment} show that a revenue-ordered policy can obtain the optimal assortment in MMNL when the choice model has some special structures. \cite{mendez2014branch} propose a branch-and-cut algorithm for solving the optimization problem.

\noindent\textbf{A General Choice Model:} When the revenue function underlying a choice model is a black-box function, algorithms for finding the optimal assortment cannot be tailored by utilizing structure. \cite{jagabathula2014assortment} purposes the ADXOpt algorithm which repeatedly finds the best action in adding (A), deleting (D), and exchanging (X) based on the current assortment with provable approximation ratio under some conditions. \cite{udwani2021submodular} defines submodular order functions and purposes several heuristics for assortment optimization under various types of constraints. With mild conditions, the heuristics have bounded approximation ratios.
%
%\subsection{ReLU Function and Optimization} \ktt{not too relevant}
%
%Modeling a neural network with rectified linear unit (ReLU) activation function as a MIP is used for checking the complexity \citep{serra2018bounding}, visualizing features \citep{fischetti2017deep}, testing the robustness \citep{tjeng2018evaluating,cheng2017maximum,fischetti2017deep} and reachability \citep{lomuscio2017approach} of the network. They are also used to optimize black-box functions \citep{grimstad2019relu,papalexopoulos2021constrained}, compressing networks \citep{serra2021scaling} and selection action in Reinforcement Learning \citep{delarue2020reinforcement}. For improving the solving time of such MIPs, \cite{anderson2020strong} introduce strong formulations  and \cite{grimstad2019relu} tighten their constraints.
%
%Although the framework of using neural networks with MIPs as surrogate models has good generality and is proved to perform well in many situations \citep{grimstad2019relu,anderson2020strong,delarue2020reinforcement}, there are two potential concerns:  First,   the fact that the numbers of constraints and variables on MIPs are linear in the number of neurons of the network can potentially limit the method to small networks. Using a shallow network may cause large estimation errors when the underlying problem has high complexity, and we have to balance such estimation error with the limitations of MIPs. Another concern is the performance: neural network based MIPs simply may not be superior to the state-of-the-art  \citep{papalexopoulos2021constrained}.

In this paper we show that in the context of assortment optimization, (1) a shallow network is enough to recover existing choice models and (2) good performance and robustness is achievable with a standard neural network and integer programming when dealing with different choice models compared to customized heuristics.

\section{Problem Setup}
The firm has a set of $n$ products $\mathcal{N}=\{1,2,...,n\}$ that can be offered to customers.   An assortment $\mathcal{S}\subseteq \mathcal{N}$ is a subset of $\mathcal{N}$ that the firm decides to present to the customer.   In practice, the assortment can be determined by the product availability or the seller may decide the assortment to maximize profits by limiting the choice of the customer. The choice model gives a prediction of the probability of the customer choosing product $i$ conditional on the assortment $\mathcal{S}$:
$$\prob(i|\mathcal{S}) \text{ for all } i\in\mathcal{N} \text{ and } \mathcal{S}\subseteq \mathcal{N}.$$
In particular, we assume that $\prob(i|\mathcal{S})=0$ for $i\notin \mathcal{S},$ i.e., the customer cannot choose a product not offered in the assortment. In this way, a choice model
\begin{equation}
    \mathcal{M} = \{\prob(i|\mathcal{S}): \mathcal{S}\subseteq \mathcal{N}\}
    \label{choice_model}
\end{equation}
dictates $2^n-1$ probability distributions, each of which corresponds to one possible nonempty assortment. In all applicable business contexts, there is usually a ``no-purchase'' option where the customer chooses not to purchase any of the items from the offered assortment. The no-purchase option can be captured by one product (say, indexed as $n$) in $\mathcal{N}$ and we can always have $ n \in \mathcal{S}.$

A tabular parameterization of the choice model $\mathcal{M}$ generally requires $\Omega(n\cdot 2^n)$ parameters. Additional structures are therefore imposed to facilitate efficient estimation and revenue optimization.

\subsection{Random Utility Models (RUM) and Beyond}
\label{subsec:RUM}
The \textit{random utility model} (RUM) is the dominant framework for choice models in practice. It assigns a random utility for each product and models the consumer's choice behavior according to the principle of utility maximization. Specifically, the utility of the $i$-th product
$$U_i=u_i+\epsilon_i \text{ for } i\in\mathcal{N}.$$
Here $u_i$ is deterministic and represents the mean utility (to the customer) of purchasing the $i$-th product among the population. The random quantities, $\epsilon_i$'s are assumed to have a zero mean and can be correlated among products, to model  unobservable factors as well as heterogeneity across products.  We note that customer heterogeneity is also modeled by assuming the random part is specific to product as well as the customer.
Under the RUM,
\begin{equation}
\prob(i|\mathcal{S}) \coloneqq \prob\left(U_{i} = \max_{j\in\mathcal{S}} U_{j}\right) \quad \text{for} \ i\in \mathcal{S}.
\label{eqn:RUM}
\end{equation}
For simplicity, we assume ties are broken randomly. Different distributions of $\epsilon_i$'s specialize in this general RUM framework into different choice models as in the following.

\noindent\textbf{Multinomial Logit Model (MNL):} The MNL choice model assumes $\epsilon_i$'s are i.i.d. from a mean-zero Gumbel distribution with scale parameter 1. Then the choice probability  for $i\in \mathcal{S}$  has a nice closed-form solution:
\begin{equation}
    \prob(i|\mathcal{S}) \coloneqq \frac{\exp(u_i)}{\sum_{j\in\mathcal{S}}\exp(u_{j})}.
    \label{eqn:MNL}
\end{equation}
When there is a feature vector $\bm{f}_i$ associated with each product, one can also represent the deterministic utility $u_i=\bm{\theta}^\top \bm{f}_i$ and obtain the linear-in-attributes MNL model. Several recent works model $u_i$ with a neural network function of the feature $\bm{f}_i$ \citep{bentz2000neural,sifringer2020enhancing,wang2020deep,arkoudi2023combining}.
%It is well known the MNL model has the \textit{independence of irrelevant alternatives} (IIA) property, which will be discussed later.

Although MNL model is simple to  estimate and optimize for an assortment, the model's assumptions make it restricted to the \textit{independence of irrelevant alternatives} (IIA)  property: for any $i,j\in \mathcal{S}\subset\mathcal{S'}$,
$$\frac{\mathbb{P}(i|\mathcal{S})}{\mathbb{P}(j|\mathcal{S'})}=\frac{\mathbb{P}(i|\mathcal{S})}{\mathbb{P}(j|\mathcal{S'})}.$$
This means the ratio of two products' choice probabilities is independent of the other products in the assortment.  We illustrate the effect of this for prediction and contrast it with the other methods.

Table~\ref{tab:IIA_example} is an example to illustrate the limitation of the IIA property: In case I there is one product $A$ with a no-purchase option. In Case II, when we add another product $A'$ identical to the product $A$, we assume the customer who would purchase product $A$ in Case I will have equal probability to purchase $A$ or $A'$ in Case II. From the predicted probabilities, we can see the estimated MNL model cannot capture this purchase behavior due to the IIA property.

\begin{table}[ht!]
    \centering
    \begin{tabular}{cc|c|ccc}
    \toprule
        &\multirow{2}{*}{Product} & \multirow{2}{*}{True Prob.} & \multicolumn{3}{c}{Pred. Prob.} \\
        & &  &NN&MNL-MLE&MCCM-EM\\
        \midrule
        \multirow{2}{*}{Case I}
        &No-purchase  & .40 &.42&.46 &.40   \\
        &A   &.60 &.58 &.54 &.60 \\
        \midrule
        \multirow{3}{*}{Case II}
        &No-purchase  & .40&.42 &.32 &.38 \\
        &A   &  .30&.28 &.38 &.32 \\
        &A'  & .30&.30 &.30 &.30 \\

    \bottomrule

    \end{tabular}
    \caption{Example illustrating the effects of the IIA property of MNL. Column True Prob. is the true choice probabilities in each case and Pred. Prob. is the predicted probabilities from each method:  NN is a one-layer Gated-Assort-Net neural choice model (that we introduce later), MNL-MLE is the MNL model calibrated via Maximum-Likelihood, while MCCM-EM implements the expectation-maximization algorithm \citep{csimcsek2018expectation} to fit a MCCM model.  All choice models are estimated based on 8000 training samples where each sample's assortment is randomly picked with equal probability. }
    \label{tab:IIA_example}
\end{table}

The following three RUM models (along with many others) were developed to mitigate this.

\noindent\textbf{Mixed-Multinomial Logit Model (MMNL):}
The MMNL choice model \citep{mcfadden2000mixed} is a generalization of MNL to multiple segments. It assumes there are several customer types and each type has its own deterministic utilities; the random terms are still assumed to be i.i.d. and Gumbel distruted. We denote $\mathcal{C}$ as the set of customer types and $\alpha_c$ as the probability (for a customer) of being $c$'s type. Then the choice probability for $i\in\mathcal{S}$ is:
\begin{equation*}
    \prob(i|\mathcal{S}) \coloneqq \sum_{c\in \mathcal{C}}\alpha_c \frac{\exp(u_{c,i})}{\sum_{j\in\mathcal{S}}\exp(u_{c,j})}.
\end{equation*}
For a continuous distribution of customer types, the summation is replaced by an integral.
\cite{mcfadden2000mixed} show any RUM can be represented by an MMNL choice model under mild conditions and thus MMNL also has a large model capacity. However, the estimation of an MMNL is challenging and many of the methods proposed in the literature fail to recover the latent types' parameters \citep{jagabathula2020conditional,hu2022learning}.

\noindent\textbf{Markov Chain Choice Model (MCCM):} The MCCM \citep{blanchet2016markov} defines a discrete-time Markov chain on the space of products $\mathcal{N}$ and models each customer's choice through the realization of a sample path. Specifically, it assumes a customer arrives to product $i$ with probability $\lambda_i$ (interpreted as the initial state of the Markov chain). Then if the product is in the assortment $\mathcal{S}$, the customer purchases it and leaves. Otherwise, the customer transitions from product $i$ to product $j$ with probability $\rho_{ij}$. Naturally, $\sum_{i\in\mathcal{N}} \lambda_i=1$ and $\sum_{j\in\mathcal{N}} \rho_{ij}=1$ for all $i\in\mathcal{N}.$ Let $X_t\in\mathcal{N}$ denote the product (state) that the customer visits at time $t$, and define the first hitting time $\tau=\{t\ge0: X_t\in\mathcal{S}\}.$ Then, under the MCCM,
$$\prob(i|\mathcal{S}) \coloneqq \prob(X_{\tau}=i).$$
The MCCM is a generalization of MNL, in the sense MNL arises for a specific form of the $\lambda$'s and $\rho$'s.  It has more parameters, hence has a larger model capacity to capture more complex choice behavior. \cite{berbeglia2016discrete} establishes MCCM as a RUM through a random walk argument.

\noindent\textbf{Non-parametric (NP) Choice Model:} The NP choice model or random orderings model has origins in classical microeconomic demand theory based on preference rankings \citep{block1959random} and recently reintroduced with a new estimation proposal in  \citep{farias2009data, farias2013nonparametric}. The model assumes that there exists a distribution $\lambda: \text{Perm}_\mathcal{N} \rightarrow[0,1]$ over the set $\text{Perm}_\mathcal{N}$ of all possible permutations of the products. There are $n!$ customer types and each customer type has a preference list of the products corresponding to one permutation in $\text{Perm}_\mathcal{N}$. Customers always purchase the most preferred product in the preference list. The choice probability is given by
$$\prob(i|\mathcal{S}) \coloneqq\sum_{\sigma \in \text{Perm}_i(\mathcal{S})} \lambda(\sigma) $$
where the set $$\text{Perm}_i(\mathcal{S})\coloneqq\{\sigma\in\text{Perm}_{\mathcal{N}}: \sigma(i)<\sigma(j)\text{ for all } j\neq i \in \mathcal{S}\}$$
contains all customer types/permutations under which product $i$ is the most preferable product in the assortment $\mathcal{S}.$

The NP choice model has exponentially many parameters and thus a larger model capacity than both MNL and MCCM. Indeed, \cite{block1959random} show that any RUM choice model can be represented by a NP model, and vice versa. Moreover, the estimation method proposed by \citep{farias2009data, farias2013nonparametric} needs to solve a linear programming with size $n!$. Although the linear programming enjoys good theoretical properties, it is computationally infeasible when $n$ is large and has to be further simplified or approximated by the sampling or representation method.

\noindent\textbf{Non RUM models:}\
RUM models have a \textit{regularity} property: if we add a new product to the assortment $\mathcal{S}$, then the purchase probability of all currently offered products will never increase.  However, as remarked by \cite{chen2022decision}, there is an increasing body of experimental evidence  which suggests that the aggregate choice behavior of customers is not always consistent with this regularity property, and thus often violates the premise of RUM. For instance, \cite{seshadri2019discovering} use the seminal experiment of \citep{tversky1985framing} and
\cite{chen2022decision} quote the behavioral experiment of \citep{ariely2008predictably} to motivate the development of choice models beyond the scope of RUMs. Here we use the examples from the above two behavioral experiments to show the limited capacity of RUM, while the neural choice model's potential to pick up such behaviour in the data.

\begin{itemize}
    \item Table~\ref{tab:Newspaper_example}  is an implementation of the  experiment in \citep{ariely2008predictably}: The price column shows the price of each option. In Case I, an assortment has two available products, while in Case II, a clearly inferior option of Print-Only is added. The addition of Print-Only twists customer utilities for the other two options, which is reflected by the change in the true choice probability.
    \item Table~\ref{tab:cycle_example} is an implementation of the gambles example in \citep{tversky1985framing}: The Win Prob. column shows the winning probability of each gamble and Payoff column indicates the winning payoff.  There exists a preference cycle over three gambles A, B, C with different winning probabilities and payoffs, where A is preferred to B, B is preferred to C based on the expected payoff. However, C is preferred to A since the winning probability of C is significantly larger than A.
\end{itemize}

\begin{table}[ht!]
    \centering
    \begin{tabular}{ccc|c|ccc}
    \toprule
        &\multirow{2}{*}{Option} & \multirow{2}{*}{Price} & \multirow{2}{*}{True Prob.} & \multicolumn{3}{c}{Pred. Prob.} \\
        & & & &NN&MNL-MLE&MCCM-EM\\
        \midrule
        \multirow{3}{*}{Case I}
        &No-purchase &-- & .14 &.14&.14 &.14  \\
        &Internet-Only  & \$59 &.57 &.58 &.43 &.43 \\
        &Print-\&-Internet & \$125 & .29&.28 &.43  &.43 \\
        \midrule
        \multirow{4}{*}{Case II}
        &No-purchase &-- & .14&.13 &.14 &.15 \\
        &Internet-Only  & \$59 &  .29&.31 &.43 &.43 \\
        &Print-\&-Internet & \$125 & .57&.55 &.43 &.42 \\
        &Print-Only & \$125 & .00  &.01 &.00 & .00\\
    \bottomrule

    \end{tabular}
    \caption{Example based on the behavioral experiment in \citep{ariely2008predictably}. The experiment setup is same as Table~\ref{tab:IIA_example}. }
    \label{tab:Newspaper_example}
\end{table}

\begin{table}[ht!]
    \centering
    \begin{tabular}{cccc|c|ccc}
    \toprule
        &\multirow{2}{*}{Gamble}& \multirow{2}{*}{Win Prob.} & \multirow{2}{*}{Payoff} & \multirow{2}{*}{True Prob.} & \multicolumn{3}{c}{Pred. Prob.} \\
        & & & & &NN&MNL-MLE&MCCM-EM\\
        \midrule
        \multirow{2}{*}{Case I}
        &A  & 1/4 &\$ $6$  &.75 &.76 &.42 &.62 \\
        &B & 1/3 &\$ $4$   & .25&.24 &.58  &.38 \\
        \midrule
        \multirow{2}{*}{Case II}
        &B  & 1/3 &\$ $4$  &.75 &.74 &.43 &.62 \\
        &C & 1/2 &\$ $2$   & .25&.26 &.57  &.38 \\
        \midrule
        \multirow{2}{*}{Case III}
        &A  & 1/4 &\$ $6$  &.20 &.20 &.36 &.24 \\
        &C & 1/2 &\$ $2$   & .80&.80 &.64  &.76 \\
    \bottomrule
    \end{tabular}
    \caption{Example based on the behavioral experiment in \citep{tversky1985framing}. The experiment setup is same as Table~\ref{tab:IIA_example}.  }
        \label{tab:cycle_example}
\end{table}

In sum, a parametric model that fits observed purchases in all situations is unlikely.
Certainly more elaborate models of consumer search and decisions exist, but their estimation, based on unobservable factors and limited firm-level data, and  subsequent optimization is daunting, and certainly will require a lot of development resources.  Hence, it is worth striving for a flexible large-capacity learning engine to predict the purchase probabilities directly without worrying about getting the precise consumer behaviour modeling right.

With this in mind, we develop neural network models that allow the utilities to be dependent on the assortment with or without including customer and product features. Hence our neural choice model allows violation of the regularity property, letting   the utilities change depending on the assortment that is offered.

\section{Neural Choice Model}
\label{sec:model}
\subsection{A Feature-free Neural Choice Model}
In this section, we introduce our feature-free neural choice model defined simply as a neural network function that maps the assortment input $\bm{S} \in \{0,1\}^n$ to the output $\bm{Y}\in [0,1]^n$:
$$\bm{Y}=g(\bm{S};\bm{\theta})$$
where $\bm{\theta}$ encapsulates all the model parameters. The input $\bm{S}=(s_1,...,s_n)$ is the binary encoding of an assortment $\mathcal{S}$ where $s_i=1$ if $i\in\mathcal{S}$ and $s_i=0$ if $i\notin\mathcal{S}$. The output $\bm{Y}=(Y_1,...,Y_n)$ is a vector supported on the probability simplex with $Y_i=\prob(i|\mathcal{S})$.

\begin{figure}[ht!]
    \centering
    \begin{subfigure}[b]{0.4 \textwidth}
        \centering
        \includegraphics[height=2.7in]{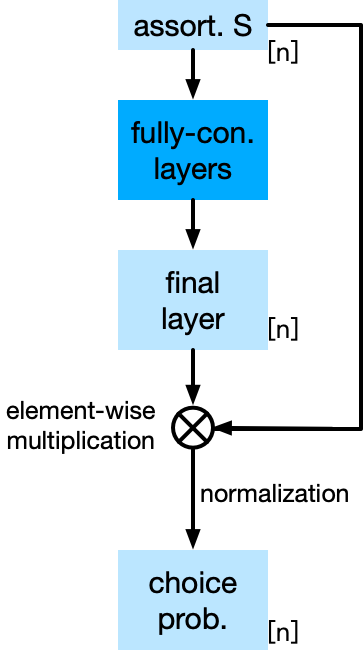}
        \caption{Gated-Assort-Net}
    \end{subfigure}%
    \begin{subfigure}[b]{0.4 \textwidth}
        \centering
        \includegraphics[height=2.7 in]{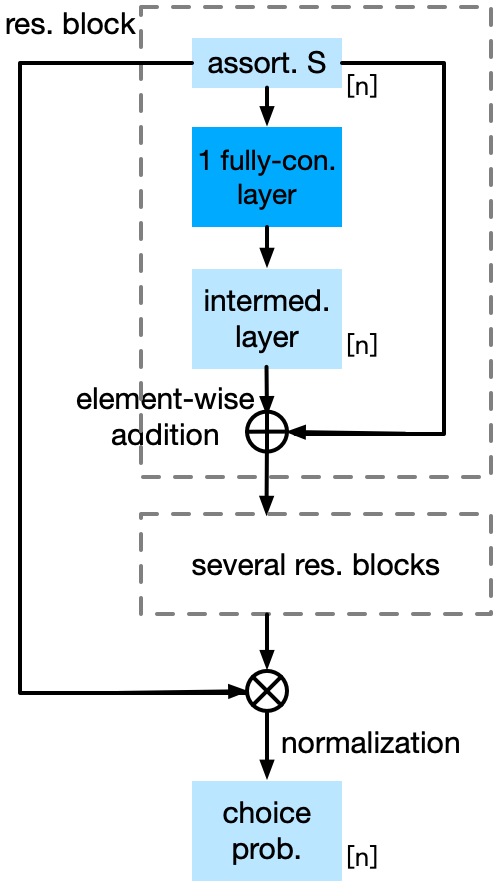}
        \caption{Res-Assort-Net}
    \end{subfigure}
    \caption{Neural choice models. $[n]$ denotes the dimension of the corresponding layer.}
    \label{fig:net_no_feat}
\end{figure}

We consider two neural network  architectures  illustrated in Figure~\ref{fig:net_no_feat}. For both architectures, we let $\bm{W}_l\in \mathbb{R}^{n_{l}\times n_{l-1}}$ and $\bm{b}_l\in \mathbb{R}^{n_{l}}$ are the weight matrix and bias vector respectively for layer $l$ with $n_l$ nodes in the $l$-th layer, and $n_1=n_L=n$.  We let $\bm{z}_{l} = (z_{l,1},...,z_{l,n_l}) \in \mathbb{R}^{n_l}$ be the output of the $l$-th layer and the input for the next layer. The parameters $L$ and $\{n_l,\ l=1,...,L\}$ need to be specified. (But we will show that $L=1$ and $n_l=n$ can perform well in both the real data and simulated data across different generative models.)

The final layer's output $\bm{z}_L=(z_{L,1},...,z_{L,n})$ are to set the choice probabilities given by a gated operator
$$
Y_i = \begin{cases}
\frac{\exp(z_{L,i})}{\sum_{i'\in\mathcal{S}} \exp(z_{L,i'})}, & i\in \mathcal{S},\\
0,& i\notin \mathcal{S}.
\end{cases}$$
While this may appear to be a standard MNL-like function, and hence suffer from the IIA property etc., it in fact does not; we note an important difference: The $\bm{z}_L$ is a function of $\bm{z}_0=\bm{S}$, i.e., the assortment effect is encoded into each $z_{L,i}$ and subsequently in $Y_i$ through the neural network layers.  (We should be hence be representing $\bm{z}_L$ as $\bm{z}_L(S)$ but for avoiding cumbersome notation, just write as $\bm{z}_L$. 

The differences in the architectures are as follows.

\paragraph{Gated-Assort-Net (GAsN)}\

The GAsN model is a fully-connected neural network designed specifically for the discrete-choice model. It adopts the classic structure commonly found in traditional neural networks, making it straightforward to implement. It takes the assortment vector $\bm{S}$ as its input layer and runs through a number of fully connected layers. Finally, it uses the assortment $\bm{S}$ to create an output gate to ensure $\prob(i|\mathcal{S})=0$ if $i\notin \mathcal{S}$.  We initialize $\bm{z}_{0} = \bm{S}$ as the input layer and for each layer $l=1,...,L,$
\begin{equation}
\bm{z}_{l} = \left(\bm{W}_l\bm{z}_{l-1}+\bm{b}_l\right)^{+}
\label{eq:GAsN}
\end{equation}
where $(\cdot)^+=\max\{\cdot,0 \}$.

\paragraph{Res-Assort-Net (RAsN)}\

The RAsN model incorporates the concept of residual learning, which has demonstrated a remarkable enhancement in image recognition compared to the performance of plain networks \citep{he2016deep}.  As in the GAsN, it is initialized by $\bm{z}_{0} = \bm{S}$ and finally uses the assortment to create an output gate to ensure $\prob(i|\mathcal{S})=0$ if $i\notin \mathcal{S}$. However, for each layer $l=1,...,L$ in RAsN,
\begin{equation}
\bm{z}_{l} = \left(\bm{W}_l\bm{z}_{l-1}+\bm{b}_l\right)^{+}+\bm{z}_{l-1}.
\label{eq:RAsN}
\end{equation}
 Here $\bm{z}_{l} = (z_{l,1},...,z_{l,n}) \in \mathbb{R}^{n}$ is the output of the $l$-th layer and the input for the next layer, $\bm{W}_l\in \mathbb{R}^{n\times n}$ and $\bm{b}_l\in \mathbb{R}^{n}$ are the weight matrix and bias vector, parameters of the neural network model. Note we require the dimensions $n_l=n$ for all $l=1,...,L$ for making the operation of adding $\bm{z}_{l-1}$ in \eqref{eq:RAsN} available.

 Compared to \eqref{eq:GAsN}, the additional (directly added) $\bm{z}_{l-1}\in \mathbb{R}^n$ in \eqref{eq:RAsN}   represents the residual effect.   Intuitively, directly adding it in \eqref{eq:RAsN} can avoid forgetting/distorting the information extracted from previous
layers with extreme $\bm{W}_l$ (e.g., all entries are close to $0$) during the training, which can hopefully further lead to a more robust training process and a better trained network.

%For the final layer, the gated operator gives the final choice probability by
%$$
%Y_i = \begin{cases}
%\frac{\exp(z_{L,i})}{\sum_{i'\in\mathcal{S}} \exp(z_{L,i'})}, & i\in \mathcal{S},\\
%0,& i\notin \mathcal{S}.
%\end{cases}$$

\paragraph{Computational Experiment Design}\

We give a preview of our numerical experiment methodology and also demonstrate the model capacity of the two neural choice model architectures  (full details are presented in Appendix~\ref{apnx:no_fea_model}).

\begin{table*}[ht!]
    \centering
        \resizebox{\columnwidth}{!}{%
    \begin{tabular}{c|c|cc|cc|cc|cc}
    \toprule
    &\multirow{2}{*}{\# Samples}&\multicolumn{2}{c|}{MNL}  &\multicolumn{2}{c|}{MCCM}&\multicolumn{2}{c|}{NP}&\multicolumn{2}{c}{MMNL}  \\
        &&$|\mathcal{N}|=20$&$|\mathcal{N}|=50$  &$|\mathcal{N}|=20$&$|\mathcal{N}|=50$ &$|\mathcal{N}|=20$&$|\mathcal{N}|=50$&$|\mathcal{N}|=20$&$|\mathcal{N}|=50$ \\
         \midrule
   Uniform & -- & 2.51 & 3.4 &2.52 & 3.44 & 2.50&3.43&2.48&3.40\\
\midrule
   \multirow{3}{*}{MNL-MLE}  & 1,000&	1.87&2.65&1.96&0.50&1.73&2.50&1.95&2.41\\
   & 5,000&1.86& 2.63& 1.96& 0.50& 1.71& 2.48&1.96&2.37\\
   &100,000&1.86& 2.62&1.95& 0.52& 1.71& 2.47&1.96&2.36\\
 \midrule
  \multirow{3}{*}{MCCM-EM}  & 1,000&1.86&2.67&1.92&0.54&1.67&2.35&1.86&2.27\\
   &5,000&1.86 &2.66 &1.88 &0.53 &1.60 &2.35&1.88&2.23 \\
   & 100,000 &1.87 &2.66	&	1.88&0.51	&	1.62&2.37&1.88&2.22	\\
\midrule
   \multirow{3}{*}{Gated-Assort-Net}  & 1,000&2.00&3.04& 1.97& 0.54& 1.69&2.71&2.01&2.66\\
   & 5,000& 1.89& 2.81& 1.88&0.48&1.55&2.43&1.87&2.30\\
   &100,000& 1.86& 2.63& 1.84& 0.40&1.50&2.23&1.82&2.08\\
\midrule
   \multirow{3}{*}{Res-Assort-Net}  & 1,000&2.05&3.05& 2.03& 0.61&1.80&2.79&2.18&2.96\\
   & 5,000&1.93&2.82&1.91&0.56&1.56& 2.51&1.97&2.53\\
   &100,000& 1.86& 2.64& 1.84&0.41&1.49& 2.22&1.82&2.10\\
\midrule
  Oracle & -- & 1.86&	2.62&	1.82&	0.38&	1.42&	2.13&1.80&2.04\\
   \bottomrule
\end{tabular}}
\caption{Feature-free choice modeling: The training data are generated from the model of MNL, MCCM, NP and MMNL, with randomly generated true parameters. The number of samples refers to the number of training samples, and additional 5,000 samples are reserved to validate the training procedure. The reported numbers are the average out-of-sample cross-entropy (CE) loss over 10,000 test samples and 10 random trials. The row Uniform denotes the performance of a uniform distribution for predicting. MNL-MLE implements the standard maximum likelihood estimation to estimate a MNL model, while MCCM-EM implements the expectation maximization algorithm to fit a MCCM model. GAsN and RAsN both have one hidden fully-connected layer. The row Oracle denotes the CE loss of the true model.}
    \label{tabRP}
\end{table*}

We first generate training data from four models:  MNL, MCCM, NP and MMNL.  We then compare the predictive performance of our two neural choice models, GAsN and RAsN, against three benchmarks and an oracle that has knowledge of the true generative model (equivalently, the negative entropy of the true distribution).   Table~\ref{tabRP} shows the results.

First, both neural choice models benefit from a larger sample size, while the performances of maximum likelihood estimation (MLE) and expectation maximization (EM) do not improve much as the sample size increases. Second, when the true model is MNL, we know the method of MLE is provably asymptotically optimal and this is unsurprisingly verified from our experiment. But when the true model becomes more complex, such as NP and MMNL, the neural network models show their advantage. Third, the learning of a choice model requires a large sample size when the underlying true model becomes more complex. For example, under the true model of MCCM, NP or MMNL, none of these methods give a satisfactory performance when the sample size $m=1000$. Finally, the neural choice models perform consistently well in recovering these true models with a large amount of data, $m=100000$. They have the model capacity to capture complex structures and they are also capable of fitting a simpler true model such as MNL. In comparison, MNL choice model suffers from too little model capacity to capture them. Meanwhile, it is widely acknowledged that the models of NP and MMNL are certainly flexible and can model behaviour realistically, but lack an effective learning/estimation algorithm given the number of variables that are unobservable in their stories, hence making them ill-suited for operational purposes.

\subsection{Theoretical Analysis}
\label{subsec:theorem}
In this subsection, we briefly discuss some theoretical properties of the neural choice model in terms of its parameter estimation. Throughout this subsection, we will focus on the model of GAsN. We remark that the RAsN can be analyzed in the same way. First, we recall that the neural network model can be written as
$$\bm{Y}=g(\bm{S};\bm{\theta}) = (g_1(\bm{S};\bm{\theta}),...,g_n(\bm{S};\bm{\theta}))^\top$$
which maps the assortment $\bm{S} \in \{0,1\}^n$ to the choice probability vector $\bm{Y}\in [0,1]^n$. The parameters $\bm{\theta}$ are estimated through \textit{empirical risk minimization} on a dataset $$\mathcal{D} = \left\{(i_1,\bm{S}_1),....,(i_m,\bm{S}_m)\right\}$$
where $(i_k,\bm{S}_k)$'s are i.i.d. samples from some unknown distribution $\mathcal{P}$. Specifically, we define the \textit{risk} as the negative log-likelihood function (also known as the cross-entropy loss)
$$r((i,\bm{S});\bm{\theta}) \coloneqq -\log g_{i}(\bm{S};\bm{\theta})$$
where $g_{i}(\bm{S};\bm{\theta})$ gives the probability that the $i$-th product is chosen under parameter $\bm{\theta}$.

The estimated parameters $\bm{\theta}$ are obtained by maximum likelihood estimation as follows
$$\hat{\bm{\theta}} \coloneqq \argmin_{\bm{\theta}\in\Theta} \hat{R}_m(\bm{\theta}) \coloneqq \frac{1}{m}\sum_{k=1}^m r((i_k,\bm{S}_k);\bm{\theta})$$
where the function $\hat{R}_m(\cdot)$ is also known as the \textit{empirical risk}.  Here the set $\Theta$ is defined as follows:
$$\Theta \coloneqq \left\{\bm{\theta}:  \|\bm{W}_l\|_{\infty}\leq \bar{W}, \|\bm{b}_l\|_{\infty}\leq \bar{b}, \text{ for } l=1,...,L \right \}$$
where $\bm{W}_l$ and $\bm{b}_l$ are the weight matrix and the bias vector of the $l$-th layer of the neural network.
Here for a matrix $\bm{W}\in \mathbb{R}^{n_1\times n_2}$ we define $\|\bm{W}\|_{\infty}\coloneqq \max_{i=1,.., n_1}\sum_{j=1}^{n_2}|W_{ij}|$ and for a vector $\bm{b}_l\in \mathbb{R}^{n_1}$, $\|\bm{b}_l\|_{\infty}\coloneqq \max_{i=1,.., n_1}|b_i|$. The restriction to this bounded set usually improves both the stability of the training procedure and the generalization performance of the neural network.

Also, we define the \textit{expected risk} as
$$R(\bm{\theta}) = \E\left[r((i,\bm{S});\bm{\theta})\right]$$
where the expectation is taken with respect to $(i,\bm{S})\sim\mathcal{P}.$ The expected risk can be interpreted as the expected negative likelihood on some unseen/new data sample. We further define $\bm{\theta}^*$ as its minimizer:
$$\bm{\theta}^*:=\argmin_{\bm{\theta}\in \Theta}R(\bm{\theta}).$$
The theorem below shows the gap between $R(\hat{\bm{\theta}})$, the expected risk of the learned parameter,  and the optimal risk $R(\bm{\theta}^*)$.
\begin{theorem}
\label{thm:gen_bound}
The following inequality holds with probability $1-\delta$,
$$R(\hat{\bm{\theta}}) \le R(\bm{\theta}^*) + \frac{4n}{\sqrt{m}}\left(\bar{b}\cdot\frac{(2\bar{W})^{L}-1}{2\bar{W}-1}+(2\bar{W})^{L} \sqrt{2\log(2n)}\right)+5C\sqrt{\frac{2\log(8/\delta)}{m}}$$
where $C$ is the upper bound of risk function such that $|r((i,\bm{S});\bm{\theta})|\leq C$ holds for all $(i,\bm{S})\in \mathcal{D}$ and $\bm{\theta} \in \Theta$.
\end{theorem}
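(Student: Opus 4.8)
The plan is to treat Theorem~\ref{thm:gen_bound} as a standard uniform-convergence bound for empirical risk minimization, driven by a Rademacher-complexity analysis of the log-softmax loss composed with the bounded-weight network. First I would write the excess-risk decomposition
$$R(\hat{\bm{\theta}}) - R(\bm{\theta}^*) = \underbrace{\left(R(\hat{\bm{\theta}}) - \hat{R}_m(\hat{\bm{\theta}})\right)}_{(\mathrm{I})} + \underbrace{\left(\hat{R}_m(\hat{\bm{\theta}}) - \hat{R}_m(\bm{\theta}^*)\right)}_{(\mathrm{II})} + \underbrace{\left(\hat{R}_m(\bm{\theta}^*) - R(\bm{\theta}^*)\right)}_{(\mathrm{III})},$$
noting that $(\mathrm{II})\le 0$ by optimality of $\hat{\bm{\theta}}$ for $\hat{R}_m$; that $(\mathrm{III})$ is a deviation at a single fixed parameter, controlled by a scalar bounded-differences (McDiarmid) inequality using $|r|\le C$; and that $(\mathrm{I})$ is dominated by the uniform deviation $\sup_{\bm{\theta}\in\Theta}\bigl(R(\bm{\theta}) - \hat{R}_m(\bm{\theta})\bigr)$. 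By symmetrization plus McDiarmid, this uniform deviation is bounded with high probability by $2\,\mathcal{R}_m(\mathcal{L}) + C\sqrt{2\log(\cdot)/m}$, where $\mathcal{L}=\{(i,\bm{S})\mapsto r((i,\bm{S});\bm{\theta}):\bm{\theta}\in\Theta\}$ is the loss class and $\mathcal{R}_m$ denotes its empirical Rademacher complexity. The entire task then reduces to bounding $\mathcal{R}_m(\mathcal{L})$ and collecting the concentration terms.

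The next step is a contraction argument to strip the loss off the network. Writing $r((i,\bm{S});\bm{\theta}) = -z_{L,i} + \log\sum_{i'\in\mathcal{S}}\exp(z_{L,i'})$ as a function of the output vector $\bm{z}_L\in\R^n$, its gradient is $-\bm{e}_i + \mathrm{softmax}_{\mathcal{S}}(\bm{z}_L)$, whose $\ell_1$-norm is at most $2$; hence $r$ is $2$-Lipschitz in $\bm{z}_L$. Applying the vector-valued contraction inequality (Maurer) removes the log-softmax at the cost of a factor proportional to the Lipschitz constant and a summation over the $n$ output coordinates, reducing $\mathcal{R}_m(\mathcal{L})$ to the Rademacher complexity of the scalar network-output functions $\{\bm{S}\mapsto z_{L,i}(\bm{S};\bm{\theta})\}$. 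This is exactly where the leading factor $4n$ originates: $2$ from symmetrization, $2$ from the Lipschitz constant of the log-softmax, and $n$ from the coordinate summation in the vector contraction.

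The core of the argument is then a layer-peeling induction on the output complexity. Each ReLU is $1$-Lipschitz and fixes the origin, so the scalar contraction lemma removes each activation at no cost. For each affine map, the constraint $\|\bm{W}_l\|_\infty\le\bar{W}$ means every output coordinate is a $\bar{W}$-bounded $\ell_1$-combination of the previous post-activation coordinates; peeling one affine-plus-ReLU layer multiplies the complexity by $2\bar{W}$ (the factor $2$ coming from symmetrizing the sign of the weights together with the positive homogeneity of ReLU) and additively injects a $\bar{b}$ term from the bias. Unrolling over $L$ layers yields the factor $(2\bar{W})^L$ in front of the base (input-layer) complexity and the geometric accumulation $\bar{b}\sum_{l=0}^{L-1}(2\bar{W})^l = \bar{b}\,\frac{(2\bar{W})^L-1}{2\bar{W}-1}$ for the biases. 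At the bottom of the recursion I must bound the Rademacher complexity of linear functionals $\bm{S}\mapsto\langle\bm{w},\bm{S}\rangle$ over the binary inputs $\bm{S}\in\{0,1\}^n$; since $\|\bm{S}\|_\infty\le 1$ and there are $n$ coordinates, Massart's finite-class lemma contributes the $\sqrt{2\log(2n)}$ factor that multiplies $(2\bar{W})^L$. Combining the peeled bound with the contraction factor $4n/\sqrt{m}$ reproduces the complexity term verbatim.

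Finally I would assemble the pieces: the symmetrization/uniform-deviation event and the single-parameter deviation $(\mathrm{III})$ are each handled by McDiarmid with per-coordinate bounded differences $2C/m$, and a union bound over these events (plus the in-expectation-to-high-probability passage) yields the residual term $5C\sqrt{2\log(8/\delta)/m}$ with its constants $5$ and argument $8/\delta$. I expect the main obstacle to be the peeling recursion itself: carrying the bias bookkeeping cleanly through each layer to produce the exact geometric series, and pinning down the correct constant in the vector-contraction step so that the dependence on $n$ and the $(2\bar{W})^L$ growth match the stated bound. The activation-stripping and Massart base case are routine by comparison.
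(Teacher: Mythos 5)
Your proposal follows the paper's proof essentially step for step: the same reduction via a standard Rademacher-complexity generalization bound (the paper invokes Theorem 26.5 of Shalev-Shwartz and Ben-David, which packages your decomposition (I)--(III), symmetrization, and McDiarmid, producing exactly the $5C\sqrt{2\log(8/\delta)/m}$ term), the same use of Maurer's vector contraction inequality to strip the gated log-softmax loss and reduce to the $n$ scalar output classes, the same layer-peeling recursion yielding the factor $(2\bar{W})^{L}$ and the geometric bias series $\bar{b}\,\frac{(2\bar{W})^{L}-1}{2\bar{W}-1}$, and the same Massart-type base case for $\ell_1$-bounded linear functionals on $\{0,1\}^n$ giving $\sqrt{2\log(2n)}$ (the paper cites Lemma 26.11 of the same book, whose proof is precisely your finite-class argument over the $2n$ extreme points $\pm\bm{e}_j$).

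The one place your bookkeeping does not close is the contraction constant. You bound the gradient of the loss in $\ell_1$ norm by $2$ and then attribute the ``$2$'' in $4n = 2\times 2\times n$ to this Lipschitz constant --- but Maurer's inequality, as you invoke it, carries its own intrinsic factor $\sqrt{2}$ in addition to the Euclidean Lipschitz constant $\kappa$, i.e.\ it costs $\sqrt{2}\,\kappa$, not $\kappa$. With your estimate $\kappa \le 2$ (the $\ell_1$ bound only gives $\|\nabla\phi\|_2 \le \|\nabla\phi\|_1 \le 2$) the contraction step costs $2\sqrt{2}$ and your argument delivers $4\sqrt{2}\,n$ rather than the stated $4n$. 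The fix is the paper's sharper Euclidean estimate: writing $p$ for the gated softmax vector, $\|\nabla\phi\|_2^2 = (1-p_i)^2 + \sum_{j \in \mathcal{S}\setminus\{i\}} p_j^2 \le 2$, so $\kappa = \sqrt{2}$ and the net contraction cost is $\sqrt{2}\cdot\sqrt{2} = 2$, which combined with the symmetrization factor $2$ and the coordinate sum $n$ recovers $4n$ exactly. This is a local, fixable slip rather than a structural gap; everything else in your outline, including the mechanism for the per-layer factor $2\bar{W}$ (sign-symmetry of the Rademacher variables removing the absolute value, with the zero function in each coordinate class) and the additive $\bar{b}/\sqrt{m}$ per layer, matches the paper's argument.
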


Theorem~\ref{thm:gen_bound} shows the excess risk of the empirical risk minimizer $\hat{\bm{\theta}}$. The proof is based on the standard analysis of neural networks \citep{wan2013regularization,neyshabur2015norm,golowich2018size}, but we need to customize it to GAsN due to the special gated operator in the final layer. We defer the proof to Appendix~\ref{apnx:proof}.  By exploring the structure of risk function, we can further bound the distance between the true distribution of choice probability and the estimated choice probability by $\hat{\bm{\theta}}$:

\begin{corollary}
The following inequality holds with probability $1-\delta$,
$$\E\left[\text{KL}(\mathcal{P}_{\bm{S}},g_{\cdot}(\bm{S};\hat{\bm{\theta}}))\right]\leq \E\left[\text{KL}(\mathcal{P}_{\bm{S}},g_{\cdot}(\bm{S};\bm{\theta}^*))\right]+\frac{4n}{\sqrt{m}}\left(\bar{b}\cdot\frac{(2\bar{W})^{L}-1}{2\bar{W}-1}+(2\bar{W})^{L} \sqrt{2\log(2n)}\right)+5C\sqrt{\frac{2\log(8/\delta)}{m}}$$
where $C$ is defined as in Theorem~\ref{thm:gen_bound}, $\mathcal{P}_{\bm{S}}$ is defined as the distribution of choice probability given assortment $\bm{S}$, $\text{KL}(\cdot,\cdot)$ is defined as the Kullback–Leibler divergence over two choice probability distributions and expectation is with respect to $\bm{S}$.
\end{corollary}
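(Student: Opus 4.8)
The plan is to reduce the corollary directly to Theorem~\ref{thm:gen_bound} by showing that the expected KL divergence and the expected risk $R(\bm{\theta})$ differ only by an additive constant independent of $\bm{\theta}$. First I would write out the KL divergence for a fixed assortment $\bm{S}$ as
$$\text{KL}(\mathcal{P}_{\bm{S}}, g_{\cdot}(\bm{S};\bm{\theta})) = \sum_{i} \mathcal{P}_{\bm{S}}(i)\log \mathcal{P}_{\bm{S}}(i) - \sum_i \mathcal{P}_{\bm{S}}(i)\log g_i(\bm{S};\bm{\theta}),$$
i.e. as the negative entropy $-H(\mathcal{P}_{\bm{S}})$ of the true conditional plus a cross-entropy term. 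The first summand depends only on the ground-truth distribution $\mathcal{P}_{\bm{S}}$ and not at all on the parameters $\bm{\theta}$.

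Next I would take the expectation over $\bm{S}$ and identify the cross-entropy term with the expected risk. By the law of iterated expectations, decomposing $\mathcal{P}$ into the marginal over $\bm{S}$ composed with the conditional $i\mid\bm{S}\sim\mathcal{P}_{\bm{S}}$,
$$R(\bm{\theta}) = \E_{(i,\bm{S})\sim\mathcal{P}}\left[-\log g_i(\bm{S};\bm{\theta})\right] = \E_{\bm{S}}\left[\sum_i \mathcal{P}_{\bm{S}}(i)\,\bigl(-\log g_i(\bm{S};\bm{\theta})\bigr)\right],$$
which is exactly the expectation of the cross-entropy term. Hence
$$\E_{\bm{S}}\left[\text{KL}(\mathcal{P}_{\bm{S}}, g_{\cdot}(\bm{S};\bm{\theta}))\right] = R(\bm{\theta}) - \E_{\bm{S}}\left[H(\mathcal{P}_{\bm{S}})\right].$$

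Because the entropy term $\E_{\bm{S}}[H(\mathcal{P}_{\bm{S}})]$ does not involve $\bm{\theta}$, it cancels when I subtract this identity evaluated at $\bm{\theta}^*$ from the one at $\hat{\bm{\theta}}$, yielding
$$\E_{\bm{S}}\left[\text{KL}(\mathcal{P}_{\bm{S}}, g_{\cdot}(\bm{S};\hat{\bm{\theta}}))\right] - \E_{\bm{S}}\left[\text{KL}(\mathcal{P}_{\bm{S}}, g_{\cdot}(\bm{S};\bm{\theta}^*))\right] = R(\hat{\bm{\theta}}) - R(\bm{\theta}^*).$$
Applying Theorem~\ref{thm:gen_bound} to bound the right-hand side by the stated quantity, which holds with probability $1-\delta$, finishes the proof.

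The point requiring care, rather than a genuine obstacle, is ensuring that the data-generating conditional law of the chosen product $i$ given the assortment $\bm{S}$ is precisely $\mathcal{P}_{\bm{S}}$, so that the tower-property identification of the cross-entropy with $R(\bm{\theta})$ is valid; and that the gated softmax output $g_{\cdot}(\bm{S};\bm{\theta})$ is supported exactly on $\mathcal{S}$, matching the support of $\mathcal{P}_{\bm{S}}$, so that every logarithm is finite and the KL divergence is well-defined. Given these, the argument is purely an entropy/cross-entropy bookkeeping step layered on top of Theorem~\ref{thm:gen_bound}, with no new concentration analysis needed.
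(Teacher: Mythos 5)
Your proposal is correct and follows essentially the same route as the paper's own proof: decompose the KL divergence into a $\bm{\theta}$-independent entropy term plus a cross-entropy term, identify the expected cross-entropy with $R(\bm{\theta})$ via the tower property, and invoke Theorem~\ref{thm:gen_bound}. Your write-up is in fact more careful than the paper's (spelling out the iterated-expectation step and the support-matching caveat), but the argument is the same.
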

\begin{proof}
Given an assortment $\mathcal{S}$, the Kullback–Leibler divergence over the true choice probability $\mathcal{P}_{\bm{S}}$ and $g_{\cdot}(\bm{S};\bm{\theta})$ is
$$\text{KL}(\mathcal{P}_{\bm{S}},g_{\cdot}(\bm{S};\bm{\theta})) =\sum_{i\in \mathcal{S}} P(i|\mathcal{S})\left(\log(P(i|\mathcal{S}))-\log(g_i(\bm{S};\bm{\theta}))\right),$$
where $P(i|\mathcal{S})$ is the choice probability of $i$-th product given assortment $\mathcal{S}$.

Note that
$$\E\left[-\sum_{i\in \mathcal{S}} P(i|\mathcal{S})\log(g_i(\bm{S};\bm{\theta}))\right]=R(\theta),$$
where expectation is with respect to the $\mathcal{S}$, with Theorem~\ref{thm:gen_bound}, we complete our proof.
\end{proof}

The term $\E\left[\text{KL}(\mathcal{P}_{\bm{S}},g_{\cdot}(\bm{S};\bm{\theta}^*))\right]$ is usually referred as \textit{approximation error}, which depends on the capacity of function family $\{g_{\cdot}(\bm{S};\bm{\theta}): \bm{\theta} \in \Theta\}$, and the remaining two terms with order $O(1/\sqrt{m})$ are usually together referred as \textit{estimation error}, which depends on the number of training samples $m$ and also the capacity of function family through $W$, $\Bar{b}$ and $L$. In general, with fixed training samples, a larger function family $\{g_{\cdot}(\bm{S};\bm{\theta}): \bm{\theta} \in \Theta\}$ (i.e., deeper and wider networks) will reduce the approximation error but will increase the estimation error.

\subsection{Feature-based Neural Choice Models}
\begin{figure}[ht!]
    \centering
    \includegraphics[scale=0.25]{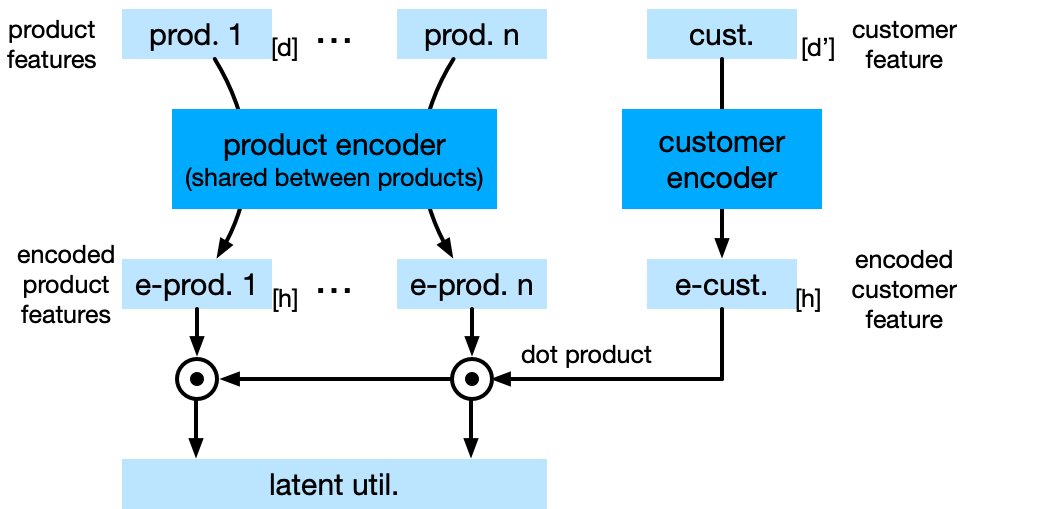}
    \caption{Feature encoder: Product features and customer features are encoded and then taken inner product to obtain the latent utility for each product. $d$ is the dimension of the product features, and $d'$ is the dimension of the customer features. Both features are encoded to $h$-dimensional latent features, and the latent utilities are obtained by the inner product of the corresponding latent features.}
    \label{fig:feat_encode}
\end{figure}

Now we extend our neural network models to the setting  with features. We first make a distinction between product feature and customer feature:

\begin{itemize}
\item Product feature: there is a feature vector $\bm{f}^P_i\in\mathbb{R}^d$ associated with each product $i\in\mathcal{N}$. We also refer to these features as \text{static} features as they usually remain the unchanged over all the samples in the training data $\mathcal{D}$.
\item Customer feature: Sample in $\mathcal{D}$ represents different customers, each of which is associated a feature vector $\bm{f}^C_k\in\mathbb{R}^{d'}$ for $k=1,...,m$. With these features, the dataset is augmented as
$$\mathcal{D} = \{(i_k, \mathcal{S}_k, \bm{f}^C_k), k=1,...,m\}.$$
We refer to these features as \textit{dynamic} features as they may vary across different samples. Modeling-wise, if there are product features that change over time, we can simply view them as dynamic features.
\end{itemize}

\begin{figure}[ht!]
    \centering
    \begin{subfigure}[b]{0.25\textwidth}
        \centering
        \includegraphics[height=3.5in]{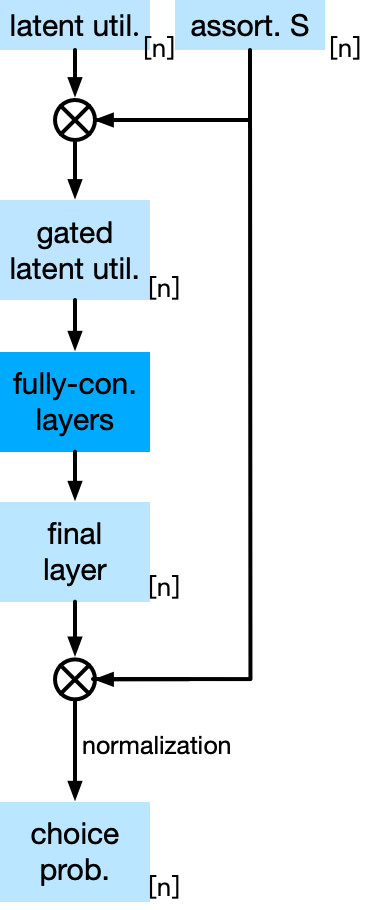}
        \caption{Gated-Assort-Net(f)}
    \end{subfigure}%
    \begin{subfigure}[b]{0.25\textwidth}
        \centering
        \includegraphics[height=3.5in]{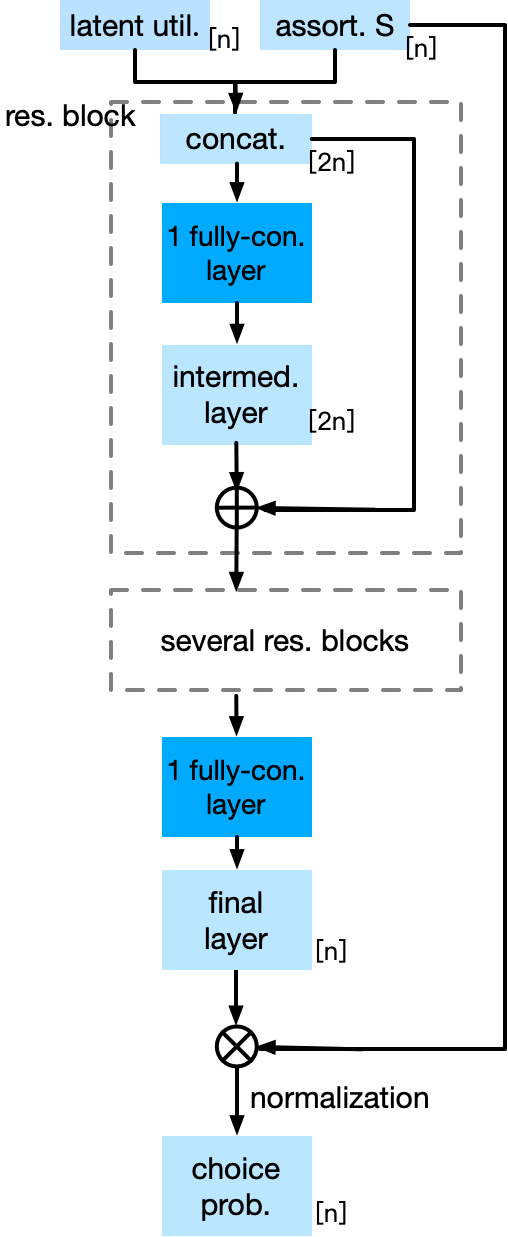}
        \caption{Res-Assort-Net(f)}
    \end{subfigure}
    \caption{Choice networks with features: The networks take (i) the latent utilities from feature encoder (in Figure~\ref{fig:feat_encode}) and (ii) the assortment vector as inputs.}
    \label{fig:net_feat}
\end{figure}

Figure~\ref{fig:feat_encode} and Figure~\ref{fig:net_feat} describe our choice networks with features. The neural networks inherit the architectures in the feature-free setting, but both take an additional vector of inputs which we call  latent utilities. Both feature-based networks of GAsN(f) and RAsN(f) encode product features and customer features to obtain one latent utility for each product. The product encoder is an $L$-layer fully-connected network shared by all the products ($L$ is $1$ or $2$ in our experiments). By default, the number of nodes in the intermediate layers of the encoder is the same as the dimension of the encoder's input. When there is no available customer feature, we can simply treat the customer feature as 1 for all the samples.

\begin{table}[ht!]
\centering
\begin{tabular}{c|cc}
\midrule
  &  MNL(f)  & MCCM(f) \\
\midrule
 MNL(f)-MLE & 2.903 & 2.212\\
 \midrule
Gated-Assort-Net &  2.921& 2.010\\
Gated-Assort-Net(f) & 2.918 & 2.010\\
\midrule
Res-Assort-Net & 2.914 & 1.988\\
Res-Assort-Net(f) & 2.914 & 1.984\\
\midrule
Oracle & 2.900 & 1.932\\
\bottomrule
\end{tabular}
\caption{Choice modeling with only product (static) features: The training data (with $n=50$, $m=100,000$, and $d=5$) are generated from the feature-based version of MNL and MCCM. All networks have two-layer structures (for the networks with features, the encoder part has one layer). The reported numbers are the average out-of-sample cross-entropy (CE) loss over 10 random trials. The benchmark method implements MLE for the featured-based MNL model. More details about experiment setup can be found in Appendix B.1. }
\label{tabFeat1}
\end{table}

\paragraph{Unnecessity of Static Features}\

Table~\ref{tabFeat1} presents a synthetic experiment with only product (static) features to illustrate a new insight for choice modeling with features.  The experiment first ignores all the product features and trains GAsN and RAsN as in the feature-free setting. Then it takes into account the product features and trains the feature-based version of the two neural networks. We find that both versions of the networks are capable of recovering the true model from the comparison against the oracle performance, and that the two versions of each neural network achieve a similar performance. This might seem counter-intuitive at first sight, because the product features may contain useful information about the product's utility, and ignoring the features may accordingly hurt the performance. However, we argue that the product utilities are essentially encoded in the choice data $(i_k, \mathcal{S}_k)$'s but not in the product features. In fact, the experiment shows that such utilities can be learned by the neural networks of GAsN and RAsN without explicitly accessing the product features. An alternative interpretation is that the product utility here should be viewed as a population-level utility that reflects the preference or popularity of a product over the whole population. Things become different when customer features are also available. In that case, the choice behaviour of each data sample is determined by the personalized utilities, and thus the product and customer features become indispensable.

We emphasize that whether to include the (dynamic) customer features in choice modeling is determined not by the availability of the features, but by the downstream application. For example, the assortment optimization problem aims to find an assortment that maximizes profits under a choice model. For a brick-and-mortar store or an online retailer where the personalized assortment is not allowed, customer features should not be used to fit the choice model even if they are available, because one is more interested in the population-level choice behavior.   It also underscores the potential of feature-free networks of GAsN and RAsN despite their simple architectures.
%This explains why the majority of the literature on choice modeling has been devoted to feature-free models.
%In parallel, if the downstream task is to do a personalized assortment or recommendation, the customer features should definitely be taken into account for choice modeling.

\subsection{Comparison on Real Datasets}
In this subsection, we test the predictive performances of our neural choice model with several benchmarks on four real datasets, two private datasets from revenue management industries without features and two public ones with features.    We repeat all experiments $10$ times, choosing the train/validation/test sets independently and randomly, and report the averaged results below.  In Section~\ref{sec:OPT_test} we perform extensive numerical experiments on synthetic datasets to jointly test prediction and optimization performance.

\subsubsection{Feature-free Neural Choice Model}

 We conduct numerical experiments on two private real datasets: airline data and retailing data. The airline data contains three flight markets and the offered products are different bundles, i.e., bundle of seat, food etc. The offered assortments are decided by both the selling strategy and remaining seats. The retailing data is collected from a  fashion-retailing company, where the products are clothes (from a same category and thus most customers  purchase at most one item) and the assortments are decided by the inventories. Both datasets  contain  records of purchase transactions only, so we add no-purchases as in \cite{csimcsek2018expectation}: for each purchase, we create four no-purchases with the same assortment as the original transaction but as choosing the outside no-purchase option.  The summary statistics of the dataset is given in Table~\ref{tab:No_fea_data}.

\begin{table}[ht!]

    \centering
    \begin{tabular}{c|c|c|c|c}
    \toprule
       &Flight $1$&Flight $2$&Flight $3$& Retailing \\
\midrule
\# Samples&90634&37075&70600&17200\\
\# Products&25&25&25&20\\
\# Train Samples&60000&30000&60000&10000\\
\# Validate Samples&1000&1000&1000&1000\\
\# Test Samples&10000&5000&10000&5000\\
 \bottomrule
    \end{tabular}
    \caption{Description of flights and retailing data.}
    \label{tab:No_fea_data}
\end{table}

 The testing result is summarized in Table~\ref{tab:Air_result}, where the reported numbers are out-of-sample cross-entropy loss of two benchmark methods (with the same configurations as in Table~\ref{tabRP}) and the two neural network models with one layer.

 We observe that our neural networks achieve the best performance on these datasets where the generative model is unknown and likely complex. One interesting observation is unlike other numerical results in this paper, MNL-MLE is better than MCCM-EM for the airline markets. One potential reason for the relatively poor performance of MCCM-EM might be the complicated assortment generation: in Subsection~\ref{subsec:OOD}, we numerically show the performances of MCCM-EM is correlated with the assortment distributions, which are special in the airline markets due to their nested booking-limits selling strategy.
%Indeed, we tried several random seeds \ktt{in what sense? randomizing for choosing the test sets?  better leave out this statement} but still get similar results.

\begin{table}[ht!]
    \centering
    \begin{tabular}{c|c|c|c|c}
    \toprule

       &Flight $1$&Flight $2$&Flight $3$&Retailing   \\
\midrule
MNL-MLE
&2.798&2.490&2.547&1.073\\
MCCM-EM
&3.972&3.470&3.596&1.063\\
%Random Forest&0.745&0.870&0.923&0.653&0.654\\

\midrule
Gated-Assort-Net & 2.738 & 2.436 & \textbf{2.489} &1.063\\
Res-Assort-Net & \textbf{2.734} & \textbf{2.434} & 2.490 & \textbf{1.060}\\
 \bottomrule
    \end{tabular}
    \caption{Performance on the airline data and retailing data. The reported numbers are out-of-sample CE loss.}
    \label{tab:Air_result}
\end{table}

\subsubsection{Feature-based Neural Choice Model}
For the case of estimation with features, we perform numerical experiments on two public datasets -- SwissMetro  and Expedia Search. These two datasets contain dynamic customer/product features; that is, the features associated with each training sample may be different.

The SwissMetro dataset is a public survey dataset to analyze traveller preference among three transportation modes, and the Expedia dataset consists of hotel search records and the associated customer choices. More details of datasets, experiment setup and benchmarks can be found in Appendix~\ref{sec:apx_fea_model}.

%(\url{https://transp-or.epfl.ch/pythonbiogeme/examples_swissmetro.html})
%(\url{https://www.kaggle.com/competitions/expedia-personalized-sort/overview})

\begin{table}[ht!]
    \centering
    \begin{tabular}{c|c|c}
    \toprule
    &SwissMetro  &Expedia\\
    \midrule
    MNL-MLE&0.883&2.827\\
MNL(f)-MLE&0.810&2.407\\
\midrule
TasteNet
&0.681&2.436\\
DeepMNL
&0.741&2.374\\
Random Forest
&0.633&2.458\\
\midrule
Gated-Assort-Net(f)
&\textbf{0.598}&2.403\\
Res-Assort-Net(f)&
0.607&\textbf{2.325}\\
   \bottomrule
    \end{tabular}
    \caption{Performance on the SwissMetro and Expedia. The reported numbers are out-of-sample CE loss.}
    \label{tabExp}
\end{table}

We implement a number of benchmark models: MNL, feature-based MNL, TasteNet \citep{han2020neural}, DeepMNL \citep{wang2020deep, sifringer2020enhancing}, random forest \citep{chen2021estimating}. The MNL and feature-based MNL are learned by MLE.  TasteNet and DeepMNL model the product utility $u_i$ as a function of the product and customer features and map the utility to choice probability via MNL \eqref{eqn:MNL}. TasteNet assumes the utility $u_i$ is a linear function of product features and uses the customer feature to determine the coefficients of the linear function, while DeepMNL concatenates the customer and product features and feeds both into a fully-connected network to obtain the utility $u_i$. Both models bring assortment effects only through the MNL part.

From the results in Table~\ref{tabExp}, our two neural networks give a better performance than the benchmark models. Even though the benchmark models include neural networks in this case, the key difference is that our models explicitly take the assortment as an input. This experiment result shows that even with the presence of product and customer features, incorporating assortment effects can help to better explain customer choices. Comparing the two datasets, the Gated-Assort-Net(f) performs better on SwissMetro, while Res-Assort-Net(f) performs better on Expedia. Recall that the Res-Assort-Net(f) makes a stronger usage of the assortment vector throughout the architecture than Gated-Assort-Net(f). Accordingly, one explanation can be that for the transportation setting, customer choices are less affected by the available options but more by their personal preference, but for the hotel search, the customer choices are more affected by the provided assortment, which gives Res-Assort-Net(f) more of an advantage. As for the random forest model, it is trained as a discriminative model using both features and assortment as input, so we believe it also has better potential with some further model recalibration.

\section{Assortment Optimization for Neural Choice Model}
\label{sec:OPT}
An important downstream task after the estimation of a choice model is the assortment optimization problem, where the seller aims to find an assortment that maximizes the profits
\begin{equation}
\max_{\mathcal{S}\subset \mathcal{N}} \text{Rev}(\mathcal{S}) \coloneqq \sum_{i\in\mathcal{N}}\mu_{i} \prob(i|\mathcal{S})
\label{assort_opt}
\end{equation}
where $\mu_i$ is the profits/revenue of selling one unit of product $i$. The problem usually has additional space or cardinality constraints.

%For choice models such as MNL and MCCM, there exist polynomial-time algorithms to solve the problem. For general choice models, heuristic algorithms are proposed, such as the revenue-ordered strategy \citep{talluri2004revenue} and the greedy strategy \citep{jagabathula2014assortment}.

The literature on assortment optimization fixes an underlying choice model $\mathcal{M}$ and devises algorithms with provable performance guarantees. However,  in practice,  the underlying choice model is unknown a priori and a model has to be selected first and its parameters estimated from data $\mathcal{D}$. In this case, the suboptimality of a proposed assortment may come from two sources: (i) the approximation error: the inaccuracy of the choice model estimation and (ii) the optimization error: the optimality gap induced by solving the assortment optimization problem under the estimated model. The following bound shows the intuition:
%\begin{proposition}
\begin{align}
\mathrm{Rev}(\mathcal{S}^*) - \mathrm{Rev}(\hat{\mathcal{S}}) \le \underbrace{\bar{\mu}\cdot \mathrm{dist}(\mathcal{M}, \hat{\mathcal{M}})}_{\mathrm{Model\ Approx. Error}} + \underbrace{\mathrm{Opt.\ Gap}}_{\mathrm{Opt.\ Error}}.\label{ineq_decompose}
\end{align}
Here $\mathcal{M}$ denotes the true choice model and $\hat{\mathcal{M}}$ is the choice model estimated from the data $\mathcal{D}$.  $\mathcal{S}^*$ is the optimal assortment obtained from \eqref{assort_opt} using the true model $\mathcal{M}$, and $\hat{\mathcal{S}}$ is obtained from $\hat{\mathcal{M}}$ via some assortment optimization algorithm, where the term $\mathrm{Opt.\ Gap}$ refers to the suboptimality induced by the algorithm for assortment optimization. The function $\mathrm{dist}(\cdot, \cdot)$ refers to some (pseudo-)distance function between two models, and $\bar{\mu}\coloneqq \max_{i\in\mathcal{N}} \mu_i$.
%\end{proposition}

We use the inequality \eqref{ineq_decompose} to emphasize the decomposition of the revenue gap into the approximation error and the optimization error. The existing works on assortment optimization have striven to improve the optimization error, while assuming the approximation error is zero. We make the case that both errors should be taken into account when measuring the performance of an assortment algorithm as the firm is interested in the combined performance.  In  light of this, we measure the performance of our neural choice model illustrating the tradeoff between these two errors.

\subsection{Integer Programming Formulation of the Neural Choice-based Assortment Optimization}
In this section we present a mixed integer programming (MIP) formulation for the assortment optimization problem under the neural choice model. Denote $\bm{\mu} = (\mu_1,...,\mu_n)^\top\in \mathbb{R}^n$ as the revenue vector as in \eqref{assort_opt}. Throughout this section, we will focus on the model of GAsN. We remark that the RAsN gives a similar MIP formulation and numerical performance. We formulate the assortment optimization problem under the GAsN as follows,

\begin{align}
 \max \ \ &  \frac{\sum_{i=1}^n\mu_iz_{0,i}\exp(z_{L,i})}{\sum_{i=1}^n z_{0,i}\exp(z_{L,i})} \label{MIP_OPT}\\
\text{s.t. }\
& \bm{z}_{l}-\tilde{\bm{z}}_l = \bm{W}_l \bm{z}_{l-1}+\bm{b}_l, \text{  for } l=1,...,L,\nonumber\\
&\bm{0}\le \bm{z}_{l}\leq M\bm{\zeta}_{l}, \ \bm{0}\le \tilde{\bm{z}}_{l}\leq M(\bm{1}-\bm{\zeta}_{l}), \text{  for } l=1,...,L,\nonumber\\
& \bm{\zeta}_{l}\in \{0,1\}^{n_{l}}, \text{  for } l=1,...,L,\nonumber\\ &\bm{z}_0=(z_{0,1},...,z_{0,n})^\top \in \{0,1\}^n,\nonumber
\end{align}
where the decision variables are $\bm{z}_l\in\mathbb{R}^{n_{l}}$, $\tilde{\bm{z}}_l\in\mathbb{R}^{n_{l}}$, $\bm{\zeta}_l\in \{0,1\}^{n_{l}}$, all for $l=1,...,L$, and $\bm{z}_0\in\{0,1\}^{n}.$ The decision variables $\bm{z}_0$ for the input layer provide a binary representation of the assortment decision, and $\bm{z}_l$ represents the values of the intermediate layers of the neural network. The equality constraints describe the forward propagation of the neural network layers, and the inputs $\bm{W}_l$ and $\bm{b}_l$ are the parameters of the neural network learned from the data. The objective function describes the mapping from the final layer of the network to the revenue under the assortment $\bm{z}_0.$

The (auxiliary) decision variables $\tilde{\bm{z}}_l$ and $\bm{\zeta}_{l}$ implement the big-M method \citep{fischetti2017deep, conforti2014integer} where the value of $M$ is a large positive number obtained by some prior estimate. These two jointly ensure that
$$\bm{z}_l=\left(\bm{W}_l \bm{z}_{l-1}+\bm{b}_l\right)^+.$$
It is well known that the choice of $M$   affects the running time of solving the MIP \citep{belotti2016handling}. In our numerical experiments, we follow the methods in \citep{fischetti2017deep} to tighten the upper bound. Also, additional linear constraints can be added to incorporate the cardinality or capacity constraints. When there is a no-purchase option, one may introduce an additional constraint of $z_{0,n}=1$ ($n$ being the index of the no-purchase option).

We remark that the constraints of the MIP \eqref{MIP_OPT} are all linear, and the objective function resembles that of the MNL model. For a better numerical performance, we replace the exponential function in the objective function with its second-order approximation $\exp(x)\sim 1+x+x^2/2$ \citep{banerjee2020exploring}. Though the problem \eqref{MIP_OPT} cannot be solved in polynomial time as is the case for some of the other models such as MNL and MCCM (when there is no capacity constraint), it does provide a formulation that allows us to take advantage of off-the-shelf integer programming solvers. %In parallel, the optimization of ReLU network has been an active research area in the machine learning community that aims to improve the adversarial robustness for neural network models \citep{fischetti2017deep,cheng2017maximum,fischetti2017deep,tjeng2018evaluating, serra2018bounding}. Our assortment optimization formulation here can also take advantage of the latest techniques and theories therein.

\subsection{Numerical Study of Joint Estimation and Optimization}\label{sec:OPT_test}

In this subsection, we provide a comparative study of the four models (MNL, MCCM, 1 and 2-layer GAsN)  along with their corresponding estimation methods and assortment optimization algorithms by running them on a total of 9600 different synthetic problem instances generated by a mix of ground-truth strategies. Since there are no computationally efficient estimation methods for NP and MMNL choice models,  we do not include them here.   Each problem instance is a combination of historical sales and revenue data, and additional capacity constraints (if any), and the algorithm(s) output a recommended assortment following this data-to-decision pipeline:
$$\mathcal{D}=\{(i_k, \mathcal{S}_k), k=1,...,m\}, (\mu_{1},...,\mu_{n}), \text{Constraints} \ \rightarrow \text{ Recommended Assortment }\hat{\mathcal{S}}.$$
The numerical experiment is to test performance where the true model is unknown and has to be estimated from data.

We generate the training data $\mathcal{D}$ from four different choice models as ground truth, MNL, MCCM, NP, and MMNL with the number of training samples fixed at $m=30,000$ for all the trials. The exact optimal assortment $\mathcal{S}^*$ is computed with the knowledge of the generative model, and then the recommended assortment $\hat{\mathcal{S}}$ is evaluated by the optimality ratio
$$\text{Opt. Ratio} \coloneqq \frac{\text{Rev}(\hat{\mathcal{S}})}{\text{Rev}(\mathcal{S}^*)}.$$
For each problem instance, we consider both unconstrained and constrained settings and implement several benchmark methods as follows. For the estimation part, we implement the maximum-likelihood estimation to estimate the parameters of the MNL model, and the expectation-maximization algorithm to estimate those of MCCM.  We provide more details of the experiments in Appendix~\ref{apnx:assort_opt}. Below are the assortment optimization algorithms:
\begin{itemize}
\item Unconstrained: The estimated choice model is  used to solve for a recommended assortment through the assortment optimization specific for the method.
\item Constrained:
\begin{itemize}
\item  Revenue-ordered policy (RO): All the $n$ products are ordered in an decreasing order by their revenues to form a nested set of assortments and the feasible assortment with largest expected revenue is picked from that nested set \citep{talluri2004revenue}.
\item MNL-MIP: A mixed integer programming is solved based on the estimated MNL model \citep{mendez2014branch}.
\item ADXOpt: The ADXOpt algorithm proposed by \cite{jagabathula2014assortment} is a greedy algorithm to solve assortment optimization under a general choice model. We implement the algorithm on the estimated Markov chain choice model. In our implementation, we increase the removal limit (the maximal number of times for a product to be removed from the assortment) from $1$ in \citep{jagabathula2014assortment} to $5$ so as to boost the algorithm performance.
\item  Markov Chain Capacity-Assort (MC-CA): We implement the algorithm given in \cite{desir2020constrained} on the estimated Markov chain choice model.
\item Neural Network Optimization (NN): We implement the MIP \eqref{MIP_OPT} on the fitted GAsN with $1$ or $2$ layers (denoted as NN(1) and NN(2)).
\end{itemize}
\end{itemize}

The results of our numerical experimets are given in Table~\ref{tab:assortOPT_uni_NC} and Table~\ref{tab:assortOPT_uni_C}.
In general, the neural network with MIP formulation performs well in all underlying choice models achieving  the best performance in several cases (averaged over different numbers of products), with and without a capacity constraint; and under MNL choice model, the (averaged) approximation ratios of NN(1) are above $95\%$, both with and without constraint.

The advantage of a neural network is more apparent when the generative models are complicated: with data generated by NP and MMNL and with constraints, the averaged approximation ratios of NN are $92.55\%$ and $84.79\%$, where the best averaged performance of the other methods are $88.28\%$ and $67.14\%$. The relatively poor performances of the other methods can be due to the limited expressive power or the hardness of the subsequent optimization. While the structural limitations of MNL are well known, and hence is not surprising that it cannot approximate NP and MMNL generated data. For the MCCM, \cite{blanchet2016markov} show that the worst case error bound of MCCM to approximate a choice probability of an assortment in MMNL is negatively correlated to ``the maximum probability that the most preferable product for a random customer from any segment belongs to that assortment'' (Theorem 4.2). Thus, when the offered assortment includes many ``preferable products'', the estimated MCCM may not approximate the underlying MMNL choice model well.   Also, constraints   make approximation ratios worse by making the optimization harder; this can be observed by comparing Table~\ref{tab:assortOPT_uni_NC} and Table~\ref{tab:assortOPT_uni_C}.

Comparing NN(1) with NN(2), NN(1)'s average approximation ratios are better than NN(2)'s in all cases except MCCM with constraint. In fact, we observe the overfitting of NN(2) during the training in some cases, which is a plausible explanation for its worse relative performance.

In summary, when trying to model and optimize the assortment optimization problems, our finding is that, when evaluating joint estimation and assortment optimization, a shallow network may be preferable to a deep neural network, as it reduces overfitting and leads to a more manageable optimization problem. Of course, if the its estimation error is large, one can try adding more layers to enlarge the capacity and expressive power of the neural network.

\begin{table*}[ht!]
\centering
\begin{tabular}{c|ccc|c|ccc|c}
\toprule
Model&\multicolumn{4}{c|}{MNL}&\multicolumn{4}{c}{MCCM} \\
$|\mathcal{N}|$&20&40&60&Avg.&20&40&60&Avg.\\
\midrule
MNL (MLE)
&99.94&99.36&99.11&99.47&93.54&94.26&94.59&94.13\\
MCCM (EM)
&99.70&99.44&99.65&\textbf{99.60}&93.88&96.80&97.77&96.15 \\
NN (1-layer)
&99.71&98.49&97.80&98.67&96.93&95.78&97.68&\textbf{96.80}\\
NN (2-layer)
&98.03&91.14&86.60&91.92&96.60&90.46&91.83&92.96\\
\bottomrule
\end{tabular}
\vspace{0.4cm}

\begin{tabular}{c|ccc|c|ccc|c}
\toprule
   Model&\multicolumn{4}{c|}{NP}&\multicolumn{4}{c}{MMNL} \\
$|\mathcal{N}|$&20&40&60&Avg.&20&40&60&Avg.\\
\midrule
MNL (MLE)
&91.10&92.17&92.88&92.05&79.87&88.59&92.17&86.88\\
MCCM (EM)
&92.89&92.88&92.99&92.92&84.36&89.24&91.12&88.24\\
NN (1-layer)
&95.36&93.65&93.84&\textbf{94.28}&89.22&90.81&91.35&\textbf{90.46}\\
NN (2-layer)
&95.80&87.00&78.65&87.15&92.15&83.40&75.28&83.61\\
\bottomrule
\end{tabular}
\caption{The approximation ratios ($\times 100\%$) of heuristics for assortment optimization without capacity constraint. Each reported value is averaged over $100$ trials.}
\label{tab:assortOPT_uni_NC}
\end{table*}

\begin{table}[ht!]
    \centering
    \begin{tabular}{cc|ccc|c|ccc|c}
    \toprule
    \multicolumn{2}{c|}{Model}&\multicolumn{4}{c|}{MNL} &\multicolumn{4}{c}{MCCM} \\
\multicolumn{2}{c|}{$|\mathcal{N}|$}&20&40&60&Avg.&20&40&60&Avg.\\
\midrule
\multirow{2}{*}{MNL (MLE)}&RO
&83.27&76.82&77.93&79.34&77.44&74.70&90.36&80.83\\
&MIP
&99.92&99.36&99.02&\textbf{99.43}&89.48&92.01&93.22&91.57\\
\midrule
\multirow{2}{*}{MCCM (EM)}&MC-CA
&92.25&89.98&88.01&90.08&85.83&87.33&95.05&89.40\\
&ADXOpt
&96.91&96.89&96.04&96.61&90.26&92.07&94.48&92.27 \\
\midrule
NN (1-layer)&MIP
&99.14&98.73&98.85&98.91&93.58&92.03&92.71&92.77\\
NN (2-layer)&MIP
&97.49&96.20&94.33&96.01&93.17&93.51&97.08&\textbf{94.59}\\
\bottomrule
\end{tabular}
\vspace{0.4cm}

\begin{tabular}{cc|ccc|c|ccc|c}
    \toprule
    \multicolumn{2}{c|}{Model}&\multicolumn{4}{c|}{NP}&\multicolumn{4}{c}{MMNL} \\
\multicolumn{2}{c|}{$|\mathcal{N}|$}&20&40&60&Avg.&20&40&60&Avg.\\
\midrule
\multirow{2}{*}{MNL (MLE)}&RO
&77.82&78.03&74.47&76.77&59.81&47.37&45.46&50.88\\
&MIP
&86.86&90.37&87.20&88.14&75.39&65.44&60.60&67.14\\
\midrule
\multirow{2}{*}{MCCM (EM)}&MC-CA
&84.96&85.06&84.49&84.84&70.54&55.26&56.17&60.66\\
&ADXOpt
&89.02&88.81&87.02&88.28&73.60&61.70&59.26&64.85\\
\midrule
NN (1-layer)&MIP
&93.90&93.28&90.47&\textbf{92.55}&92.76&84.99&76.63&\textbf{84.79}\\
NN (2-layer)&MIP
&94.80&91.27&87.53&91.20&91.58&77.59&68.28&79.15\\
\bottomrule
\end{tabular}
\caption{The approximation ratios ($\times 100\%$) of heuristics for assortment optimization with capacity constraint. Each reported value is averaged over $100$ trials.}
\label{tab:assortOPT_uni_C}
\end{table}

\section{Extensions and Discussions}
\label{sec:extension}
In this section, we discuss several properties of neural choice models gained from our experience with the experiments, including robustness under the assortment distribution's shift in the testing data and the effects on the depth and width of the networks. We also list some tricks for deploying neural choice models when we need to add new products or the training samples are not sufficient. We defer the details of all experiments in this section to Appendix~\ref{apnx:extension}.

\subsection{Assortment Distribution Shift in the Training Data}
\label{subsec:OOD}
A very practical but often neglected aspect of learning a choice model is the assortment distribution in the training data $\mathcal{D},$ i.e., the distribution of $\mathcal{S}_k$'s. Figure~\ref{fig:EMMC-assort} gives empirical evidence that the learning of an MCCM using the EM algorithm (implementing \citep{csimcsek2018expectation}) is affected by the assortment size of the training samples. Intuitively, a smaller assortment size means it will take longer for the Markov chain to hit the exiting state, and thus the E-step needs to impute larger unobserved quantities which may result in larger variance. Alternatively, we find the performance of the neural network model is quite robust in terms of the generalization on the assortment domain. Specifically, in Appendix~\ref{apnx:OOD}, we train the neural network model with one distribution to generate the assortments $\mathcal{S}_k$'s (in training data $\mathcal{D}$) and find that it performs surprisingly well on the test data where the assortments are generated from a different assortment generating distribution (an aspect of out-of-domain performance).

\begin{figure}[ht!]
\centering
\includegraphics[scale=0.4]{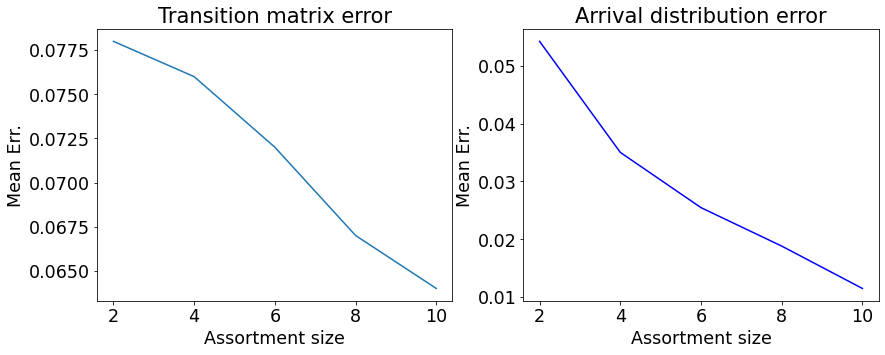}
\caption{Smaller assortment size leads to larger estimation error. In the experiment, the total number of products $n=20,$ and the training data consists of $m=10,000$ samples with a fixed-size assortment $\|S_{k}\|$. The x-axis gives the assortment size and the y-axis represents the mean error. The plotted curves are based on an average over 10 independent trials. }
\label{fig:EMMC-assort}
\end{figure}

\subsection{Depth and Width of the Neural Networks}
\label{subsec:depth}
Throughout the paper, we use one-layer or two-layer neural networks for our neural choice models, i.e., $L=1$ or $2$. We find this provides a sufficiently good fit for both synthetic and real data. For the number of neurons/nodes in the intermediate layers, we set $n_l=n$ for all $l$'s by default. In Appendix~\ref{apnx:DepWidNN}, we provide more numerical results that the performance of the neural choice models remain stable with more number of layers and wider width for the intermediate layers. Generally, it is not recommended to have many layers for the neural network, as it results in a bad landscape for the loss function and consequently bad local minima that do not generalize well \citep{keskar2016large}.
\subsection{Network Augmentation with Warm Start}
\label{subsec:warm_start}
One drawback of the models proposed in this paper is that they all require the universe of available products to be fixed, i.e., $\mathcal{N}=\{1,...,n\}$ is fixed. When an additional set of products $\mathcal{N}'=\{n+1,...,n'\}$ is available, the model has to be retrained on the new product set of $\mathcal{N} \cup \mathcal{N}'$. One option is to initialize the weights of the new network according to the previously well-trained network of $\mathcal{N}$ as a warm start. Figure~\ref{fig:WarmStart} shows that the well-trained network of $\mathcal{N}$ can provide a good warm start for retraining the new network.

\begin{figure}[ht!]
    \centering
    \begin{subfigure}[b]{0.5 \textwidth}
        \centering
        \includegraphics[width=1.\textwidth]{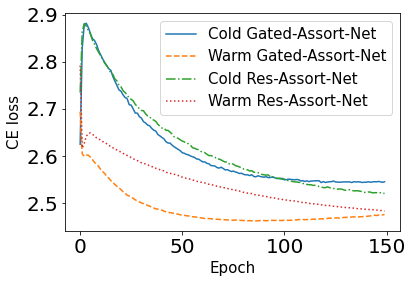}
    \end{subfigure}%
    \begin{subfigure}[b]{0.5 \textwidth}
        \centering
        \includegraphics[width=1.\textwidth]{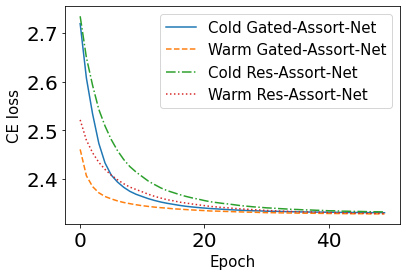}
    \end{subfigure}
    \caption{Validation losses using cold start and warm start: The left panel uses  $2,000$ samples for retraining and the right panel uses $100,000$ samples.}
    \label{fig:WarmStart}
\end{figure}

\subsection{Neural Choice Model as a Meta Choice Model}
\label{subsec:meta}
From our numerical results in Table~\ref{tabRP}, we gain a new perspective, namely, to use neural choice model as a meta choice model. Table~\ref{tabRP} shows how the neural choice model can benefit from large data size but can also be outperformed when the data size is small. In this case, we can use the  estimated choice model with the smallest validation error to generate synthetic data to re-train another neural choice model and fine-tune it by the real training data. The pseudo code is given in Algorithm~\ref{alg:meta}. Intuitively, the strong representation ability of the neural choice model can make its performance arbitrarily close to the well-fitted generative model in the re-training step and can be further improved by real data in the fine-tuning step.   Since the re-training step is indeed learning the trained choice model $\hat{\mathcal{M}}_{k^*}$ with the \textit{best} validation error from all other trained models $\hat{\mathcal{M}}_{k}$,  which has the  \textit{best} training error of the corresponding choice model class $\mathcal{M}_{k}$, we name this algorithm \textit{Learning the Best of the Bests}. In this way, the neural choice model is not trained as a competitor to the existing choice models, but more as a complementary meta choice model. In Appendix~\ref{apnx:Hotel}, we implement Algorithm~\ref{alg:meta} on a public Hotel data \citep{bodea2009data} as an example.

\begin{algorithm}[ht!]
\caption{Learning the Best of the Bests}
\label{alg:meta}
\begin{algorithmic}[1]
\State Input: Training data $\mathcal{D}_{\text{train}}$, validation data $\mathcal{D}_{\text{val}}$, a set of choice models $\{(\mathcal{M}_k, \mathcal{L}_k)\}_{k=1}^K$ where each choice model $\mathcal{M}_k$ is equipped with a learning algorithm $\mathcal{L}_k$. $(\mathcal{M}_1, \mathcal{L}_1)$ corresponds to neural choice model.
\State \textcolor{blue}{\%\% Training phase}
\For {$k=1,...,K$}
\State \%\% We omit the training details such as cross-validation
\State Fit the $k$-th candidate choice model using dataset $\mathcal{D}_{\text{train}}$ and algorithm $\mathcal{L}_k$
\State Denote the trained model as $\hat{\mathcal{M}}_k$
\EndFor
\State \textcolor{blue}{\%\% Comparison phase}
\State Compute the test performance of all $\hat{\mathcal{M}}_k$'s using $\mathcal{D}_{\text{val}}$
\State Let $k^*$ be the index of the model with best test performance
\If {$k^*=1$}
\State Let $\mathcal{M}_{\text{final}}=\hat{\mathcal{M}}_1$
\Else
\State \textcolor{blue}{\%\% Re-training and fine-tuning}
\State Generate $m'(=100,000)$ new training samples as $\mathcal{D}_{\text{train}}'$ from the model $\hat{\mathcal{M}}_{k^*}$
\State Use $\mathcal{D}_{\text{train}}'$ to train (re-train) a neural choice model $\hat{\mathcal{M}}_\text{tmp}$ from scratch
\State Use $\mathcal{D}_{\text{train}}$ to further train (fine-tune) $\hat{\mathcal{M}}_\text{tmp}$ and obtain $\hat{\mathcal{M}}_1'$
\State Let $\mathcal{M}_{\text{final}}$ be the better one among $\hat{\mathcal{M}}_1'$ and $\hat{\mathcal{M}}_{k^*}$
\EndIf
\State Output: $\mathcal{M}_{\text{final}}$
\end{algorithmic}
\end{algorithm}

\section{Conclusion and Future Directions}

The existing literature of choice modeling in the feature-free and  feature-based settings have been largely separate from each other. In this paper, we propose a neural choice model, a unified neural network framework that applies to both settings through a binary representation of the assortment and feature encoders.
In addition, we provide a MIP formulation for the assortment optimization for a trained neural choice model.

From extensive numerical experiments, we gain insights into the performance of the various methods and summarize into the following three points: First, a single neural choice architecture is capable of recovering all the existing choice models, with a standard learning procedure. The neural choice model becomes particularly effective when the underlying model/training data is too complex to be described by a simple model such as MNL and when there is a sufficient amount of training data.  Second, the neural choice models are robust under the assortment's distribution shift and can serve as a meta choice model. Third, the combined calibration and assortment optimization regime based on the neural network can outperform other optimization heuristics, especially when the underlying model is complex and when we have capacity constraints; the drawback nevertheless is that as it is based on a MIP, the size of the optimization cannot scale to more than around a hundred products; we need further research and new optimization algorithms to scale beyond this.

Moreover, we conclude with the following two future directions:
\begin{itemize}
    \item \textbf{Multiple-purchase choice model.} The existing choice models focus exclusively on modeling the single-choice behavior where only one product is chosen from the offered assortment. The multiple-purchase choice model is usually studied under an inverse optimization framework in the literature of revealed preference \citep{zadimoghaddam2012efficiently,amin2015online,birge2022learning}. The deep learning-based choice models provide a natural framework to study the multiple-choice behavior, and they can fit to such training data by modifying  the loss function.
    \item \textbf{Pricing optimization.} Although we equip our neural networks with an assortment optimization MIP, in a retailing context, the product price is another important factor that affects customer choice. Our current models do not account for this factor but assume prices as fixed.  It is an interesting research problem to study optimal pricing under the neural choice model.
\end{itemize}

\subsection*{Acknowledgment}
The authors thank Antoine Desir for helpful discussions and pointing to the two public datasets for numerical experiments.

\bibliographystyle{informs2014}
\bibliography{main}

\appendix

\section{Appendix for Numerical Experiments without Features}
\label{apnx:no_fea_model}
\subsection{Model Details for Table~\ref{tabRP}}

In this subsection, we describe the experiment setup of Table~\ref{tabRP}.  For all neural networks trained in all experiments in the paper,  we always set the batch size of training as $100$ with learning rate as $0.0005$ and training epoch as $100$.

\paragraph{Data Generation:}\

The assortments (in samples) are generated by first deciding its size with a uniform distribution and then randomly select the items in the assortment. The parameters for different underlying choice models are generated in the following way:
\begin{itemize}
    \item MNL: We sample the mean utility $u_i\in \mathbb{R}$ for each product $i=1,...,n$ i.i.d. from the standard Normal distribution  $\mathcal{N}(0,1)$.
    \item MCCM: To distinguish the underlying true MCCM from an MNL model (which as we mentioned is a special case of MCCM), we consider the following generation mechanism:

    We first set $\sigma$ to control deviation ($\sigma = 2.5$ for dataset $n=20$ and $\sigma = 4$ for dataset $n=50$). We divide products into groups with a cluster number $c_{\text{num}}$ ($c_{\text{num}} = 4$ for dataset $n=20$ and $c_{\text{num}} = 10$ for dataset $n=50$) to denote the number of groups. Then arriving probability $\lambda$ is generated by:
    $$
\lambda_i = \dfrac{\exp(\mu_i)}{\sum_{j=1}^{n} \exp(\mu_j)},
$$
where $\mu_i$, $i=1,...,n$ is identically and independently (i.i.d.) sampled from a Normal distribution $\mathcal{N}(0,\sigma^2)$. And the transition matrix $\rho$ is generated row by row. For all the $n$ rows, each row $\rho_i$ is generated following:

$$\rho_{i,j}=\dfrac{\exp(\nu_{i,j})}{\sum_{j=1}^{n} \exp(\nu_{i,j})},$$
where $\nu_{i,j}$, $i,j=1,...,n$ is independently sampled by Normal distribution $\mathcal{N}(\bar{\nu}_{i,j},\sigma^2)$ and
\begin{equation*}
\bar{\nu}_{i,j} =
    \begin{cases}
      2\sigma & \forall (i,j)\in \left\{(i,j)|\exists k\in \{0,...,c_{\text{num}}-1\}, kn/c_{\text{num}}< i,j\leq  (k+1)n/c_{\text{num}} \right\}, \\
      0  & \text{otherwise}.
    \end{cases}
\end{equation*}
The intuition for the above is the following: We assume the set of all products can be divided evenly into several groups, and when a wanted product turns out to be unavailable, the customer is most likely to choose an alternative within the same group of this product.

    \item NP: The non-parametric (NP) choice model assumes that there is a fixed number of alternating schemes/types amongst customers. Each alternating scheme denotes a permutation of the total set of products, and following this scheme customers will check from the first product to the last until they find a product that is in the assortment. The formal definition is introduced in Section~\ref{subsec:RUM}. Let $n_{\text{perm}}$ denote the number of candidate permutations with positive probability, we set $n_{\text{perm}} = 10$ for dataset $n=20$ and $n_{\text{perm}}=20$ for dataset $n=50$, and the parameters for the underlying model are the $n_{\text{perm}}$ permutations of $\{1,...,n\}$ and their corresponding probabilities. These parameters are randomly uniformly generated, and following the ground truth non-parametric model we generate synthetic samples.
    \item MMNL: We assume there are $5$ customer types with equal sampled probability, i.e., $\alpha_c=0.2$ for all $c=1,...,5$. For each type $c$, we set the mean utility $u_{c,i}\in \mathbb{R}$ for each product $i=1,...,n$ as:
\begin{equation*}
u_{c,i} \sim
    \begin{cases}
      \mathcal{N}(c+\frac{n}{5},1) & (c-1)n/5+1\leq i\leq cn/5,\\
      0 & i=0,\\
      -50  & \text{otherwise}.
    \end{cases}
\end{equation*}
Thus, each customer type $c$ will (almost) only consider a fixed set of products belonging to $(c-1)n/5+1\leq i\leq cn/5$, and we set no-purchase's mean utility as $0$ for all types.
\end{itemize}

\subsection{Benchmark Algorithms}

\begin{itemize}
     \item MCCM-EM: We follow \citet{csimcsek2018expectation}'s EM algorithm to estimate the parameters of the MCCM. In summary, the algorithm starts with some random initial transition matrix $\rho$ and arrival probabilities $\lambda$. For the expectation step, and based on the current transition matrix and arrival probabilities, the algorithm computes the expected probability of being interested in each product when a customer arrives and the number of times that a customer transitions from one product to another during the decision process for each sample. For the maximization step, it maximizes the likelihood to update the transition matrix and arrival probabilities by treating the previous two expected values as true values. The algorithm repeats above two steps until it converges. In our experiments, the initial transition matrix and arrival probabilities are chosen by uniformly randomizing each entry and normalizing them to stochastic matrix or distribution vector. We terminate the algorithm when the mean changes of updating transition matrix and arrival probabilities is smaller than $0.01$.
\item MNL-MLE: We assume the underlying choice model is the MNL model and ignore all observed features (if any). We estimate the unknown mean utilities of products by maximizing the likelihood (MLE).
\end{itemize}
\section{Appendix for Numerical Experiments with Features}
\label{sec:apx_fea_model}

\subsection{More Details for Table~\ref{tabFeat1}}

\paragraph{Feature-based Choice Models:}\

Given an assortment $\mathcal{S}\subset\mathcal{N}$ and an input feature vector $\bm{z}_i\in \mathbb{R}^d$ for product $i\in\mathcal{N}$ (which can be product features along or the concatenation of customer features and product features):
\begin{itemize}
    \item \textbf{Feature Based MNL Choice Model:} The mean utility $u_i(\bm{z}_i)$ for product $i\in \mathcal{N}$ is
    $$u_i(\bm{z}_i)=\bm{z}_i^\top \bm{\beta},$$
    where $\bm{\beta} \in \mathbb{R}^d$ is the coefficients of features. %See \citep{javanmard2020multi} for a pricing problem using the model.
    \item \textbf{Feature Based Markov Chain Choice Model:} This model is from \cite{article}. For product $i\in \mathcal{N}$, the arriving probability $\lambda_i$
      $$\lambda_i(\bm{z}_i) = \frac{\exp(\bm{z}_i^\top \bm{\beta} )}{\sum_{j=1}^n\exp( \bm{z}_j^\top \bm{\beta}  )},$$
    where $\bm{\beta} \in \mathbb{R}^d$. And the vector $\rho_i$ in transition matrix
    $$\rho_{i,j}(\bm{z}_i) = \frac{\exp(\bm{A}_j\bm{z}_{i})}{\sum_{j=1}^n\exp(\bm{A}_{j}\bm{z}_i)},$$
    where $\bm{A}\in \mathbb{R}^{n\times d}$ and $\bm{A}_j$ is $j$-th row of $A$. Then the choice probabilities can be computed  as in MCCM based on $\lambda$'s and $\rho$'s.
\end{itemize}

\paragraph{Data Generation:} \

 For the features, we generate the $j$-th feature $z_{i,j}$ of product $i$ from  i.i.d. standard Normal distribution $\mathcal{N}(0,1)$. For parameters $\bm{\beta}$ in feature-based MNL and ($\bm{A}, \bm{\beta}$) in feature-based MCCM, we generate their entries also from  i.i.d. standard Normal distribution $\mathcal{N}(0,1)$. The assortments (in samples) are generated by first deciding its size with a uniform distribution and then randomly select the items in the assortment.

\paragraph{Architecture Tuning:}\

\begin{table*}[ht!]
\centering
\begin{tabular}{c|c|c|c|c}

\midrule
 MCCM &Gated configuration & CE loss  & Res configuration & CE loss\\
\midrule
\multirow{2}{*}{Benchmark}
& Oracle & 1.932& Oracle & 1.932 \\
& MNL-MLE(f) & 2.212& MNL-MLE(f) & 2.212 \\
\midrule
\multirow{4}{*}{Assort-Net}
& [50,50] & 2.010 & [50,50] & 1.988\\
%\cline{2-5}
& [50] & 2.017 & [50] & 1.997\\
%\cline{2-5}
& [50,50,50] & 2.015 & [50,50,50] & 2.000\\
%\cline{2-5}
& [100,50] & 2.009 & [100, 50] & 2.008\\
\midrule
\multirow{10}{*}{Assort-Net(f)}
& [[1], [50,50]] & 2.010 & [[1], [100,50]] & 1.984\\
& [[5,1], [50,50]] & 2.010 & [[5,1], [100,50]] & 1.985\\
& [[5,5,1], [50,50]] & 2.008 & [[5,5,1], [100,50]] & 1.985\\
& [[5,5,5,1], [50,50]] & 2.007 & [[5,5,5,1], [100,50]] & 1.984\\
& [[10,1], [50,50]] & 2.008 & [[10,1], [100,50]] & 1.984 \\
& [[15,1], [50,50]] & 2.009 & [[15,1], [100,50]] & 1.986\\
& [[20,1], [50,50]] & 2.008 & [[20,1], [100,50]] & 1.983\\
%\cline{2-5}
& [[5,1], [50]] & 2.017 & [[5,1], [50]] & 2.018\\
%\cline{2-5}
& [[5,1], [50,50,50]] & 2.019 & [[5,1], [100,100,50]] & 1.987\\
%\cline{2-5}
& [[5,1], [100,50]] & 2.010 & [[5,1], [100,50]] & 1.985\\

\bottomrule
    \end{tabular}
    \caption{Extension of Table~\ref{tabFeat1} with more architectures. The data is generated by feature-based MCCM.}
    \label{tabFeat}
\end{table*}

%As an additional experiment based on Appendix~\ref{DepWidNN}, we aim to show that static product features will not help improve the model performance. This statement is intuitive in terms of that, in the absence of customer features, we can only make predictions in the sense of expectation, and in this case now that product features remain the same, encoding the product features should not provide additional information than simply taking assortments as input. The experiment results are given in Table~\ref{tabFeat}, and it generally verifies our assertion.

Table~\ref{tabFeat} extends Table~\ref{tabFeat1} with more neural network architectures. We find the model performance is quite stable with respect to architecture tuning.
%In Table~\ref{tabFeat}, we stick to the Gated-Assort-Net in columns ``Gated Config'' and ``Gated CE loss'', which indicated the configurations and testing losses evaluated by CE loss. And in the last two columns we give the corresponding versions for Res-Assort-Net. Whether the nets with or without features are denoted by row headers.``Assort-Net'' denotes nets without features and ``Feature Assort-Net'' denotes featured nets.

%The row headers, the ``Best'' benchmark is an oracle model that gives exactly the true underlying choice probability. The ``MNL-MLE(f)'' benchmark is the featured MNL model and the estimation of utilities is by maximizing log likelihood.  We use the MCCM dataset generated in Appendix~\ref{DepWidNN}, recall that $n=50$.

We use Linear$(a, b)$ as an affine function with ReLu as the activation function, there the input dimension is $a$ and the output dimension is $b$. For neural networks without features (under ``Assort-Net''), the model configurations for Gated-Assort-Net, $[50]$ denotes a one layer Linear$(50, 50)$, and $[50, 50]$ denotes a two-layer network, Linear$(50, 50)$ followed by a Linear$(50, 50)$. $[50, 50, 50]$ denotes a three-layer network, and $[100, 50]$ denotes a two-layer, Linear$(50, 100)$ followed by a Linear$(100, 50)$. For the Res-Assort-Net part, $[50]$ denotes one residual block, $[50, 50]$ denotes two residual blocks, $[50, 50, 50]$ three residual blocks, and $[100, 50]$ denotes a Res-Assort-Net with one residual block but contains a hidden layer of size $100$ (so each block contains more than one layers, which is different and extended from Figure~\ref{fig:net_no_feat}(b)).

For neural networks with features (under ``Assort-Net(f)''),  in the ``Gated configuration'' and ``Res configuration'' columns, the first list ($[1]$ or $[5,1]$ or $[5,5,1]$ etc.) denotes the structure of the product encoder, and the second list ($[50]$ or $[50,50]$ etc.) denotes the structure of the assortment network. The meanings of the configuration lists for the assortment network are generally the same as the feature-free cases, though some additional explanations need to be given for the Res-Assort-Net(f). According the network structure, the assortment network takes as input the concatenated vector of latent utilities and assortments, leading to a vector of length $100$. $[100, 50]$ indicates one residual block with the linear part Linear$(100, 100)$ and an ending layer Linear$(100, 50)$. $[50]$ indicates no residual block and just an ending layer Linear$(100, 50)$. $[100, 100, 50]$ indicates two residual blocks with linear part Linear$(100, 100)$ and an ending layer Linear$(100, 50)$. The first list gives the structure of product encoder. Recall that the length of the product feature $d=5$, and product features are encoded into latent utility values by several layers of fully connected network. $[1]$ denotes a single layer Linear$(5,1)$, $[5,1]$ denotes a Linear$(5,5)$ followed by a Linear$(5,1)$. $[10,1]$ denotes a Linear$(5,10)$ followed by a Linear$(10,1)$. The rest  follow similar rules.

%The experiment result shows that, the the performance of the network is mostly decided by the assortment part of the network. Encoding product features in different ways or even just ignore the product features, will not change much.

\subsection{More Details on Real Data Experiments}
\label{sec: real_data_exp}
\paragraph{Data Description:}\

\begin{itemize}
    \item \textbf{SwissMetro.}   The SwissMetro Data Set is a public survey data set collected to analyze preference between the Metro system against the other transport modes  car and train.  The description of the dataset is given in \cite{SwissMetro}. For convenience of network training, we clean the data by removing samples whose features are taking abnormal values. To be specific, we only keep samples whose ``CHOICE'' (final choice) is not $0$ (which indicates unknown), ``WHO'' (who pays the ticket) value is not $0$ (which indicates unknown), ``AGE'' is not $6$ (which indicates unknown), ``INCOME'' is not $4$ (which indicates unknown), and ``PURPOSE'' is in $[1,2,3,4]$ (1: Commuter, 2: Shopping, 3: Business, 4: Leisure). After cleaning, there are in total $9,135$ samples and the number of alternatives in assortment varies from $2$ to $3$ as only when owning the car the customer can have that choice. The sample sizes in train, validate and test process are $7,000$, $1,000$ and $1,000$ correspondingly. There are  $8$ customer features and $3$ product features, shown in Table~\ref{tab:Features_real}. For product features. Note that ``CAR'' has no ``HE'' (Headaway, which means interval time between two consecutive train/metro/car arrivals) feature in the original dataset, so we set it to $0$. This is reasonable since for private cars the waiting time is always $0$. In this dataset the product features are not static, as they vary among different types of customers.

\begin{comment}
The total number of products $n=3$(``TRAIN'', ``SM'', ``CAR''), and the length of product features is $d=3$. For customer features, we select part of the total features and use one-hot encoding to manipulate data. We keep features ``MALE'', ``AGE'', ``INCOME'', ``FIRST'', ``WHO'', ``PURPOSE'', ``LUGGAGE'' and ``GA'', and we one-hot encode each data. The customer feature has length $d' = 16$\\
The assortment is given by the availability of these three transportation methods. Amongst all the $9135$ pieces of data, $,7839$ of them have the full assortment. That is, all the three products(train, metro and car) are available. $1296$ of them lack the option of ``CAR''.
\end{comment}

\item \textbf{Expedia Search.} The Expedia Data Set is a public data set from a Kaggle competition ``Personalized Expedia Hotel Searches - ICDM2013''. It contains the ordered list of hotels according to the user's search. We clean the data by dropping the searches with outlier prices ($>\$1,000$) and days of booking in advance (gap between booking date and check-in date $>365$) and also the features with missing values. After cleaning, there are in total $386,557$ searches where the size of the offered hotel/searched results list (i.e., the number of alternatives in one assortment) is between $5$ to $38$, and $136,886$ unique hotels. The sample sizes in train, validate and test process are $200,000$, $20,000$ and $20,000$ correspondingly. There are  $6$ customer features and $7$ product features as shown in Table~\ref{tab:Features_real}.

For assortment vector $\bm{S}$, we encode the ranked positions in assortments ($1-39$ with $39$ as no-purchase option) and if some position has no shown product, we assign their product features as $0$. Thus the assortment vector captures the ranking and assortment structure within the assortment. Notice that we only use the positions of products in offered assortments instead of the products themselves. There are several reasons: The number of unique products is too large ($136,886$ unique hotels in $386,557$ offered assortments after cleaning) to be covered as assortment vector; Also in practice, hotels with different locations would never be included in the same assortment and the assortment effect from such hotels should be ignored; Further,  the neural choice model can utilize the product features (with customer features) to identify hotels' utilities and also explore the assortment effect from the different ranked positions thanks to the architecture.
\end{itemize}
\begin{table}[ht!]
    \centering
    \begin{tabular}{c|c|c}
    \toprule
           Data&Customer Feature&Product Feature \\
\midrule
 \multirow{8}{*}{SwissMetro}
 &Gender& Travel time\\
 &Age&Headaway\\
 &Income&Cost\\
 &First class or not&\\
 &Who pays (self, employer...)&\\
 &Purpose (commuter, shopping...)&\\
 &\# luggages&\\
 &Owning annual ticket or not&\\
 \midrule
 \multirow{7}{*}{Expedia} &\# nights stay &Hotel star rating\\
 &\# days booking in advance &Hotel chain or not\\
 &\# adults& Location score (by Expedia)\\
 &\# children& Historical prices\\
 &\# rooms &Price\\
 & Saturday included or not&Promotion or not\\
 &&Randomly ranked or not (by Expedia)\\
 \bottomrule
    \end{tabular}
    \caption{Features in SwissMetro and Expedia.}
    \label{tab:Features_real}
\end{table}

\paragraph{Tuning Architectures for Neural Networks and Performance:}\

The performances of different models on the SwissMetro and Expedia datasets are summarized in Table~\ref{tab:RD}, which is extended from Table~\ref{tabExp}. The configurations of both our assort-nets and the benchmark nets are given in the form of channel arrays. For our assort-nets, the first array denotes the structure of customer encoder, the second array denotes the product encoder, and the third denotes the assort-net part of the whole network. For instance, for the gated-assort-net in the configuration list $[[30,10],[10,10],[30,30,3/39]]$, the first array $[30, 10]$ denotes the structure of customer encoder, having one hidden layer of size $30$ and a final layer of size $10$. The second array $[10, 10]$ denotes the structure of the product encoder, having one hidden layer of size $10$ and a final layer of size $10$. The size of the final layer of the customer encoder equals that of the product encoder. The third array $[30, 30, 3/39]$ denotes that the assort-net part has two hidden layers of size $30$. The size of the final layer is dependent on the number of total products $n$, which is $3$ for SwissMetro and $39$ for Expedia. The Res-Assort-Net follows the same way to denote the configurations of customer encoder and product encoder. To denote the configuration of the Res-Assort-Net part, take the last configuration $[6/78, 60, 6/78]*2$ for example, it denotes that there are 2 residual blocks, each containing a hidden layer of size $60$ and the input and output of one block both have dimensions $6/78$. Note there is a final (fully-connected) layer with input dimension $6/78$ and output dimension $3/39$
 
We defer the description of benchmark algorithms in Appendix~\ref{sec:bench_alg}. The configurations ($[100, 9]$ or $[100, 30]$) denote the corresponding network structure, having a hidden layer of size $100$ and a final layer of size $9$ or $30$.

For Random Forest, the hyper-parameters including number of trees and maximum depth are chosen by validation dataset performance. Specifically, the number of trees is tested among $\{10, 20, 50, 100, 200, 400\}$ and the maximum depth is tested among $\{4, 6, 8, 12, 16, 18, 20\}$. Other parameters are set as default values in the Python package scikit-learn's function \textit{sklearn.ensemble.RandomForestClassifier}.
%We choose the testing structure of our networks following the rule of not getting over complex.

We restrict all three parts of our network (product encoder, customer encoder and assort-net) to a depth of no more than $2$. The widths are tuned a little bit loosely (due to their relatively lesser effect on the model performance), but no more than $200$. The architecture of TasteNet follows the original paper \cite{han2020neural}, in which they used 110 hidden units. The architecture of DeepMNL is set to keep some degree of similarity with the architecture of TasteNet.

\begin{table*}[ht!]
    \centering
    \begin{tabular}{c|c|c|c}
    \toprule
         Model&Config&SwissMetr&Expedia \\
         \midrule
\multirow{6}{*}{Gated-Assort-Net(f)}
&[[10],[10],[3/39]]&0.696&2.479\\
&[[100,10],[10],[3/39]]&0.638&2.468\\
&[[10],[10,10],[3/39]]&0.673&2.458\\
&[[10],[10],[10,3/39]]&0.708&2.463\\
&[[100,10],[10,10],[10,3/39]]&\textbf{0.598}&2.458\\
&[[30,10],[10,10],[30,30,3/39]]&0.623&2.403\\
\midrule
\multirow{3}{*}{Res-Assort-Net(f)}&
[[100, 10],
[10, 10],
[6/78, 6/78]*1]&0.607&2.468\\
&
[[100, 10],
[10, 10],
[6/78,60,6/78]*1]&0.608&2.355\\
&[[200, 20],
[20, 20],
[6/78,60,6/78]*2]&0.610&\textbf{2.325}\\
\midrule
MNL-MLE&-- & 0.883
& 2.827 \\
\midrule
MNL(f)-MLE&-- &0.810

&2.407\\
\midrule
\multirow{2}{*}{TasteNet}
&[100, 9]
 +MNL
&0.691&--\\
&[100, 9]
 +MNL(f)
&0.681&2.436\\
\midrule
\multirow{2}{*}{DeepMNL} &[100, 30]
 +MNL
&0.751&--\\
&[100, 30]
 +MNL(f)
&0.741&2.374\\
\midrule
Random Forest &--&0.633&2.458\\
   \bottomrule
    \end{tabular}
    \caption{Performance on the SwissMetro and Expedia (extended from Table~\ref{tabExp}).}
    \label{tab:RD}
\end{table*}

\subsection{Benchmark Algorithms}
\label{sec:bench_alg}
\begin{itemize}
\item MNL(f)-MLE: We assume the underlying choice model is the feature-based MNL choice model introduced in Appendix~\ref{sec:apx_fea_model}, where the $i$'s feature vector $\bm{z}_i=(\bm{f}^C,\bm{f}^P_i)$ is a concatenation of customer feature $\bm{f}^C$ (which may vary in different samples) and product feature $\bm{f}^P_i$. We estimate the unknown parameters by MLE.

\item TasteNet: We follow the architecture introduced by \cite{han2020neural}. We train a neural network $g_{\theta}(\cdot)$ to encode the customer feature vector $\bm{f}^C\in \mathbb{R}^{d'}$ into a vector $(\bm{\alpha}_1,...,\bm{\alpha}_n) \coloneqq g_{\theta}(\bm{f}^C)\in \mathbb{R}^{nd}$, where $\bm{\alpha}_i\in \mathbb{R}^d$ is interpreted as the coefficient vector of product features $\bm{f}^P_i$. For product $i\in \{1,..., n\}$, its mean utility is encoded by:
    $$
    u_i:=\bm{\alpha}^\top_i\bm{f}^P_i.
    $$
    This utility vector $\bm{u}\in \mathbb{R}^n$ is then passed to a softmax layer (i.e., an MNL operator), giving the predicted choice probability.

    \item DeepMNL: We modify the architecture introduced by \cite{sifringer2020enhancing}.
    We train a neural network $g_{\theta}(\cdot)$ to encode the concatenated feature vector $(\bm{f}^C,\bm{f}^P_i)\in \mathbb{R}^{d+d'} $of customer features $\bm{f}^C\in \mathbb{R}^{d'}$ and product features $\bm{f}^P_i \in \mathbb{R}^{d}$ into scalar $g_{\theta}\left( (\bm{f}^C,\bm{f}^P_i) \right)\in \mathbb{R}$ as the mean utility of product $i$ in MNL model. This utility vector is then passed to a softmax layer, giving the predicted choice probability.
    \item Random Forest: We follow the work \cite{chen2021estimating} to apply the random forest method to predict customer choice. Specifically, for each sample assortment, we concatenate the assortment vector $\bm{S}$, customer feature and all product features as its input (thus same as Gated-Assort-Net(f) and Res-Assort-Net(f)). For the products not offered in assortment, we assign their product feature values as $0$.
\end{itemize}
\paragraph{Dealing with Missing Products for TasteNet and DeepMNL:}\

Due to the fixed size of input vectors when training the neural networks, the DeepMNL and TasteNet cannot be applied directly when the size of assortment changes across different samples. For SwissMetro dataset, TasteNet and DeepMNL are only trained based on the samples with full assortments where all products are offered. The number of such samples in the dataset is $7,839$. We predict the full assortments choice probabilities by using trained TasteNet or DeepMNL, and use trained MNL-MLE or MNL(f)-MLE to predict the other assortments (with missing product) where the sample size is $1,296$ in the dataset. We denote the corresponding prediction models as TasteNet(DeepMNL)+MNL and TasteNet(DeepMNL)+MNL(f); For Expedia dataset, since there are only $9$ out of $386,557$ samples are full assortments, we cannot use the same method as in SwissMetro due to the lacking data (otherwise we are almost reproducing the MNL-MLE or MNL(f)-MLE). Thus, we just assign $0$ into the missing entries of product feature vectors, which is same as Gated-Assort-Net(f) and Res-Assort-Net(f). Since these methods are recovering the missing feature vectors by $0$ and the last layer is softmax, we also name them as TasteNet(DeepMNL)+MNL(f).

\section{Proof for Subsection~\ref{subsec:theorem}}
\label{apnx:proof}
The idea of obtaining a generalization bound for neural networks via Rademacher complexity is not new  \citep{wan2013regularization,neyshabur2015norm,golowich2018size}. We customize the previous analysis to the case of GAsN---the key is the peeling argument (see \cite{neyshabur2015norm} for an example). We first introduce the definition of \textit{empirical Rademacher complexity} which captures the complexity of a function class as a machine learning model. It is called as ``empirical'' because the complexity measure is contingent on the underlying dataset, and thus is a random variable itself.

\begin{definition}[Empirical Rademacher Complexity]
The empirical Rademacher complexity of function class $\mathcal{G}$ with respect to a dataset $\mathcal{D}=\{\bm{x}_1,...,\bm{x}_m\}$ is defined as
$$\widehat{\mathfrak{R}}_{\mathcal{D}}(\mathcal{G})\coloneqq \frac{1}{m}\mathbb{E}\left[\sup_{g\in \mathcal{G}}\sum_{k=1}^m\epsilon_k g(\bm{x}_k)\right]$$
where the expectation is taken with respect to  $\epsilon_k$'s -- a sequence of independent Rademacher random variables with $\mathbb{P}(\epsilon_k=1)=\mathbb{P}(\epsilon_k=-1)=\frac{1}{2}$.
\end{definition}
The derivation of generalization bound of the neural network reduces to the calculation of the corresponding Rademacher complexity.

\begin{lemma}
\label{apx:bound:Gen_bound}
Suppose $|r((i,\bm{S});\bm{\theta})|\leq C$ holds for all $(i,\bm{S})\in \mathcal{D}$ and $\bm{\theta} \in \Theta$ for some $C>0$, and $(i_k,\mathcal{S}_k)\in \mathcal{D}$ is i.i.d. sampled from distribution $\mathcal{P}$. Then the following inequality holds with probability $1-\delta$,
$$R(\hat{\bm{\theta}}) \le R(\bm{\theta}^*) +2\widehat{\mathfrak{R}}_{\mathcal{D}}(\{r(\cdot;\bm{\theta}):\bm{\theta} \in \Theta\})+5C\sqrt{\frac{2\log(8/\delta)}{m}}.$$
\end{lemma}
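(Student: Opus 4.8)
The plan is to control the excess risk $R(\hat{\bm{\theta}}) - R(\bm{\theta}^*)$ through the classical three-term decomposition and then bound each piece separately. Writing
$$R(\hat{\bm{\theta}}) - R(\bm{\theta}^*) = \underbrace{\left(R(\hat{\bm{\theta}}) - \hat{R}_m(\hat{\bm{\theta}})\right)}_{\text{(I)}} + \underbrace{\left(\hat{R}_m(\hat{\bm{\theta}}) - \hat{R}_m(\bm{\theta}^*)\right)}_{\text{(II)}} + \underbrace{\left(\hat{R}_m(\bm{\theta}^*) - R(\bm{\theta}^*)\right)}_{\text{(III)}},$$
term (II) is nonpositive because $\hat{\bm{\theta}}$ minimizes $\hat{R}_m$ over $\Theta$, so it can be dropped. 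Term (III) involves only the single fixed parameter $\bm{\theta}^*$ and is handled by Hoeffding's inequality: since $|r((i,\bm{S});\bm{\theta}^*)| \le C$, with probability at least $1-\delta'$ one gets $\text{(III)} \le C\sqrt{2\log(1/\delta')/m}$. The real work is in term (I), which requires a deviation bound uniform over the entire class $\Theta$.

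For term (I) I would bound $\sup_{\bm{\theta}\in\Theta}\left(R(\bm{\theta}) - \hat{R}_m(\bm{\theta})\right)$ in three moves. First, apply McDiarmid's bounded-differences inequality: replacing one sample $(i_k,\bm{S}_k)$ changes the supremum by at most $2C/m$ (because $|r|\le C$), so with probability $1-\delta'$ the supremum lies within $2C\sqrt{\log(1/\delta')/(2m)}$ of its expectation. Second, bound that expectation by the standard symmetrization argument—introducing a ghost sample and Rademacher variables—to obtain $\E\left[\sup_{\bm{\theta}}(R-\hat{R}_m)\right] \le 2\,\mathfrak{R}_m\!\left(\{r(\cdot;\bm{\theta}):\bm{\theta}\in\Theta\}\right)$, the expected Rademacher complexity. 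Third, because the lemma is phrased in terms of the \emph{empirical} complexity $\widehat{\mathfrak{R}}_{\mathcal{D}}$, I would apply McDiarmid once more (again with bounded differences $2C/m$) to swap the expected complexity for its empirical counterpart at the cost of a further $2C\sqrt{\log(1/\delta')/(2m)}$ term.

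Collecting the pieces and assigning each concentration event a share of the failure probability via a union bound—three or four events, which is the origin of the constant $8$ inside $\log(8/\delta)$—I would set each $\delta'$ to a fixed fraction of $\delta$ and bundle the several $C\sqrt{\log(\cdot)/m}$ contributions into the single clean term $5C\sqrt{2\log(8/\delta)/m}$. The factor $2$ in front of $\widehat{\mathfrak{R}}_{\mathcal{D}}$ is precisely the symmetrization constant, carried through unchanged.

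I expect the main obstacle to be the bookkeeping of constants rather than any conceptual difficulty: one must track the separate high-probability events (Hoeffding for $\bm{\theta}^*$, McDiarmid for the sup-deviation, McDiarmid for the empirical-versus-expected Rademacher swap), verify that each has bounded-difference constant $2C/m$ using only $|r|\le C$, and combine them so the additive concentration terms telescope into the stated $5C\sqrt{2\log(8/\delta)/m}$. A secondary point of care is the symmetrization step: it must be performed for the gated loss $r(\cdot;\bm{\theta}) = -\log g_{i}(\bm{S};\bm{\theta})$ so that the resulting object is exactly the Rademacher complexity of the composite loss class appearing in the statement; the subsequent peeling through the network layers is deferred to the explicit computation of $\widehat{\mathfrak{R}}_{\mathcal{D}}$ in the proof of Theorem~\ref{thm:gen_bound}.
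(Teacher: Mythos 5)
Your proposal is correct and is precisely the argument behind the result the paper invokes: the paper proves this lemma by direct citation to Theorem 26.5 of \cite{shalev2014understanding}, whose proof is exactly your three-term decomposition, with (II) dropped by the ERM property, Hoeffding for the fixed $\bm{\theta}^*$ term, symmetrization plus McDiarmid for the uniform deviation, and a second application of McDiarmid (with bounded-difference constant $2C/m$) to pass from the expected to the empirical Rademacher complexity, the union bound over these events producing the constants $5$ and $8$. Since you reconstruct the cited theorem's standard proof rather than taking a different route, there is nothing to flag beyond the constant bookkeeping, which you correctly identify as the only delicate point.
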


\begin{proof}
See Theorem 26.5 of \cite{shalev2014understanding}.
\end{proof}

The following lemma presents a basic property of the Rademacher complexity (with respect to the composition of a Lipschitz function).

\begin{lemma}
\label{bound:concen}
Let $\mathcal{X}$ be an arbitrary set and $\mathcal{G}$ be a class of functions $\bm{g}=(g_1,...,g_n): \mathcal{X}\rightarrow \mathbb{R}^n$.
Suppose $h_k: \mathbb{R}^n \rightarrow \mathbb{R}$ has Lipschitz constant $\kappa$ with respect to Euclidean norm for $k=1,...,m$. Then, the following inequality holds for any fixed $(\bm{x}_1,...,\bm{x}_m)
\in \mathcal{X}^m,$
$$\mathbb{E}\left[\sup_{\bm{g
}\in \mathcal{G}}\sum_{k=1}^m\epsilon_kh_k(\bm{g}(\bm{x}_k)) \right]\leq \sqrt{2}\kappa \mathbb{E}\left[\sup_{\bm{g}\in \mathcal{G}}\sum_{k=1}^m\sum_{i=1}^n\epsilon_{k,i}g_i(\bm{x}_k) \right],$$
where $\{\epsilon_k\}_{k=1,...,m}$ and $\{\epsilon_{k,i}\}_{k=1,...,m} \ \text{for} \  i=1,...,n$ are independent Rademacher sequences. Here the expectations on both sides are taken with respect to these Rademacher random variables.
\end{lemma}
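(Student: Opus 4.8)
The plan is to prove this vector-contraction inequality (which is a form of Maurer's inequality) by peeling off the sample terms one at a time, reducing everything to a single one-term comparison. First I would normalize: since both sides are positively homogeneous in $\kappa$, it suffices to treat the case $\kappa=1$, so I may assume each $h_k$ is $1$-Lipschitz with respect to the Euclidean norm.

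The core is the following one-term lemma: for any $1$-Lipschitz $h:\mathbb{R}^n\to\mathbb{R}$, any fixed point $\bm{x}$, and any bounded offset functional $a(\bm{g})$ on $\mathcal{G}$,
$$\mathbb{E}_{\epsilon}\sup_{\bm{g}\in\mathcal{G}}\left[a(\bm{g})+\epsilon\, h(\bm{g}(\bm{x}))\right]\le \mathbb{E}_{(\epsilon_i)}\sup_{\bm{g}\in\mathcal{G}}\left[a(\bm{g})+\sqrt{2}\sum_{i=1}^n\epsilon_i g_i(\bm{x})\right],$$
where $\epsilon$ is a single Rademacher variable and $(\epsilon_i)_{i=1}^n$ are independent Rademachers. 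Granting this, the full inequality follows by induction on $k=1,\dots,m$: conditioning on all Rademacher variables except those attached to the $k$-th sample, I would absorb the already-converted terms $\sqrt{2}\sum_i\epsilon_{j,i}g_i(\bm{x}_j)$ for $j<k$ and the yet-untouched terms $\epsilon_j h_j(\bm{g}(\bm{x}_j))$ for $j>k$ into the offset $a(\bm{g})$, and apply the one-term lemma to term $k$. The key point is that the factor $\sqrt{2}$ enters only as a scalar multiplying the linear form $\sum_i\epsilon_{k,i}g_i(\bm{x}_k)$, so it does \emph{not} compound across the $m$ peeling steps; pulling it outside the final supremum yields exactly the claimed $\sqrt{2}\kappa$ constant.

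To prove the one-term lemma I would expand the single-Rademacher expectation on the left as a supremum over pairs, $\mathbb{E}_\epsilon\sup_{\bm{g}}[a(\bm{g})+\epsilon h(\bm{g}(\bm{x}))]=\tfrac12\sup_{\bm{g},\bm{g}'}[a(\bm{g})+a(\bm{g}')+h(\bm{g}(\bm{x}))-h(\bm{g}'(\bm{x}))]$, and bound $h(\bm{g}(\bm{x}))-h(\bm{g}'(\bm{x}))\le\|\bm{g}(\bm{x})-\bm{g}'(\bm{x})\|_2$ using the Lipschitz hypothesis. For the right-hand side I would restrict the supremum to the same pair $\{\bm{g},\bm{g}'\}$, rewrite the maximum of the two affine-in-$\epsilon$ terms via $\max(p,q)=\tfrac{p+q}{2}+\tfrac{|p-q|}{2}$, discard the zero-mean linear part, and then invoke two elementary facts: the symmetrization bound $\mathbb{E}|c+X|\ge\mathbb{E}|X|$ for a symmetric random variable $X$, and the lower Khintchine inequality $\mathbb{E}_\epsilon|\sum_i\epsilon_i w_i|\ge\tfrac{1}{\sqrt{2}}\|\bm{w}\|_2$. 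Combined, these show that the restricted right-hand side already dominates $\tfrac12[a(\bm{g})+a(\bm{g}')+\|\bm{g}(\bm{x})-\bm{g}'(\bm{x})\|_2]$ for every pair, and taking the supremum over pairs closes the argument.

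The main obstacle is this one-term comparison, and within it the sharp appearance of $\sqrt{2}$: it is precisely the optimal constant $1/\sqrt{2}$ in the lower Khintchine inequality that both forces and matches the $\sqrt{2}$ in the statement, so I would need to use Khintchine with its best $L_1$ constant rather than any crude estimate. A secondary but routine care point is the bookkeeping in the peeling induction, namely verifying that conditioning legitimately lets one treat all remaining terms as an offset $a(\bm{g})$ and that the relevant suprema are measurable, which is handled by restricting to countable subclasses or using the usual essential-supremum conventions standard for empirical Rademacher complexity.
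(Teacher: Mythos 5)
Your proof is correct, and it is in substance the same argument as the paper's: the paper handles this lemma simply by citing Corollary 4 of Maurer (2016), and what you have written out --- the conditioning/peeling reduction to a one-term comparison, the pair-supremum expansion of the single-Rademacher expectation, the identity $\max(p,q)=\tfrac{p+q}{2}+\tfrac{|p-q|}{2}$ combined with the symmetrization bound $\mathbb{E}|c+X|\ge\mathbb{E}|X|$, and the optimal $L_1$ Khintchine constant $1/\sqrt{2}$ yielding the non-compounding factor $\sqrt{2}$ --- is precisely Maurer's own proof of that cited result. The one point worth flagging is that the sharp Khintchine constant you invoke is itself a nontrivial theorem (due to Szarek), so your argument is self-contained only modulo that ingredient, exactly as in the source the paper defers to.
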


\begin{proof}
See Corollary 4 of \cite{maurer2016vector}.
\end{proof}

For convenience, we rewrite the risk function $r((i,\bm{S});\bm{\theta})$ as a function of $L$-th layer's output $\bm{z}_L$:
$$\phi\left((i,\bm{S}),\bm{z}_L\right)\coloneqq -\log \frac{\exp(z_{L,i})}{\sum_{i'\in \mathcal{S}}\exp(z_{L,i'})}=r((i,\bm{S});\bm{\theta})$$

Given a data sample $(i,\bm{S})$, the following lemma says that $\phi\left((i,\bm{S}),\bm{z}_L\right)$ is a Lipschitz function of $\bm{z}_L$.
\begin{lemma}
\label{bound:Lip}
$\phi\left((i,\bm{S}),\bm{z}_L\right)$ has Lipschitz constant $\sqrt{2}$ with respect to $\bm{z}_L$ under the Euclidean norm.
\end{lemma}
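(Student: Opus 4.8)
The plan is to establish the bound by controlling the Euclidean norm of the gradient $\nabla_{\bm{z}_L}\phi$ uniformly over the domain. Since $\phi\left((i,\bm{S}),\bm{z}_L\right)$ is continuously differentiable in $\bm{z}_L$, the mean value inequality implies that its Lipschitz constant with respect to the Euclidean norm equals $\sup \|\nabla_{\bm{z}_L}\phi\|_2$, so it suffices to show this supremum is at most $\sqrt{2}$.

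First I would rewrite the risk in log-sum-exp form,
$$\phi\left((i,\bm{S}),\bm{z}_L\right) = -z_{L,i} + \log\sum_{i'\in\mathcal{S}}\exp(z_{L,i'}),$$
which is smooth and depends only on the coordinates $\{z_{L,i'}: i'\in\mathcal{S}\}$. Introducing the softmax weights $p_{i'} = \exp(z_{L,i'})/\sum_{j\in\mathcal{S}}\exp(z_{L,j})$ for $i'\in\mathcal{S}$, a direct differentiation gives
$$\frac{\partial \phi}{\partial z_{L,k}} = p_k - \mathbbm{1}[k=i] \text{ for } k\in\mathcal{S}, \qquad \frac{\partial \phi}{\partial z_{L,k}} = 0 \text{ for } k\notin\mathcal{S}.$$

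The crux is then to bound the gradient norm. Since a data sample always satisfies $i\in\mathcal{S}$ and the out-of-assortment coordinates contribute nothing, I would split off the $k=i$ term and write
$$\|\nabla_{\bm{z}_L}\phi\|_2^2 = (p_i-1)^2 + \sum_{k\in\mathcal{S},\,k\neq i} p_k^2.$$
The one genuinely useful inequality is $\sum_{k\neq i} p_k^2 \le \big(\sum_{k\neq i} p_k\big)^2 = (1-p_i)^2$, which holds because the cross terms in the expansion of the square are nonnegative (the $p_k$ are nonnegative and sum to $1$ over $\mathcal{S}$). Combining these, $\|\nabla_{\bm{z}_L}\phi\|_2^2 \le 2(1-p_i)^2 \le 2$ since $p_i\in[0,1]$, and hence $\|\nabla_{\bm{z}_L}\phi\|_2\le\sqrt{2}$.

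I do not anticipate a real obstacle here; the computation is routine once the log-softmax form is used. The only points requiring a little care are (i) noting that the gated structure forces the partial derivatives outside $\mathcal{S}$ to vanish, so those coordinates do not inflate the norm, and (ii) the elementary bound $\sum_{k\neq i}p_k^2\le(1-p_i)^2$, which is exactly what keeps the constant at $\sqrt{2}$ rather than something larger. This Lipschitz constant then feeds directly into Lemma~\ref{bound:concen} with $\kappa=\sqrt{2}$, allowing one to pass from the Rademacher complexity of the composed loss to that of the network outputs $\bm{z}_L$.
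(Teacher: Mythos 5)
Your proposal is correct and follows essentially the same route as the paper: both compute the gradient of the log-softmax loss coordinate-wise (with vanishing partials outside $\mathcal{S}$), bound its squared Euclidean norm by $2$ using the nonnegativity and unit sum of the softmax weights, and conclude via the mean value theorem. The only cosmetic difference is the final algebraic step — you bound $\sum_{k\neq i}p_k^2\le(1-p_i)^2$ to get $2(1-p_i)^2\le 2$, while the paper writes $(p_i-1)^2+\sum_{k\neq i}p_k^2\le\sum_{k\in\mathcal{S}}p_k^2+1\le 2$ — which is an equivalent rearrangement of the same estimate.
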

\begin{proof}
For simplicity we drop the subscript $L$. Then the $j$-th component of $\phi((i,\bm{S}),\bm{z})$'s gradient with respect to $\bm{z}$ is:
$$
(\nabla\phi((i,\bm{S}),\bm{z}))_j= \begin{cases}
\frac{\exp(z_{j})}{\sum_{j' \in \mathcal{S}} \exp(z_{j'})}-1, & j=i,\\
\frac{\exp(z_{j})}{\sum_{j' \in \mathcal{S}} \exp(z_{j'})},& j\in \mathcal{S}/i,\\
0,& j\notin \mathcal{S}.
\end{cases}$$

Thus
\begin{align*}
&\|\nabla\phi((i,\bm{S}),\bm{z})\|_2^2\\
=&\left(\frac{\exp(z_{i})}{\sum_{i' \in \mathcal{S}} \exp(z_{i'})}-1\right)^2+\sum_{j\in \mathcal{S}/i} \left(\frac{\exp(z_{j})}{\sum_{i' \in \mathcal{S}} \exp(z_{i'})}\right)^2\\
\leq &\frac{\sum_{i' \in S}\exp(z_{i'})^2}{\left(\sum_{i' \in \mathcal{S}} \exp(z_{i'})\right)^2}+1\leq 2.
\end{align*}

By the mean value theorem, the proof is completed.
\end{proof}

Note that for $l=1,...,L$, the $l$-th layer's output $\bm{z}_l$ is a function of $\bm{\theta}$ and $\bm{S}$. We can define function classes as follows
$$\mathcal{G}_{l,i}\coloneqq \left\{\bm{S}\in \{0,1\}^n\rightarrow z_{l,i}(\bm{S};\bm{\theta}): \bm{\theta} \in \Theta \right \}$$
for $l=1,...,L$ and $i=1,...,n_l.$

The following corollary relates the Rademacher complexities after and before the softmax operator in the last layer.

\begin{corollary}
\label{bound:loss_peel}
The following inequality holds,
$$\widehat{\mathfrak{R}}_{\mathcal{D}}(\{r(\cdot;\bm{\theta}):\bm{\theta} \in \Theta\})\leq 2\sum_{i=1}^n\widehat{\mathfrak{R}}_{\mathcal{D}}(\mathcal{G}_{L,i}).$$
\end{corollary}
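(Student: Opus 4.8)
The plan is to recognize the risk class as a composition of a per-sample Lipschitz map $\phi$ with the vector-valued last-layer class, and then invoke the vector-contraction inequality of Lemma~\ref{bound:concen} to peel off $\phi$, paying a factor controlled by its Lipschitz constant. Concretely, I would start from the definition of empirical Rademacher complexity,
$$\widehat{\mathfrak{R}}_{\mathcal{D}}(\{r(\cdot;\bm{\theta}):\bm{\theta}\in\Theta\}) = \frac{1}{m}\mathbb{E}\left[\sup_{\bm{\theta}\in\Theta}\sum_{k=1}^m \epsilon_k\, r((i_k,\bm{S}_k);\bm{\theta})\right],$$
and rewrite each summand via $r((i_k,\bm{S}_k);\bm{\theta}) = \phi\bigl((i_k,\bm{S}_k),\bm{z}_L(\bm{S}_k;\bm{\theta})\bigr)$.

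Next I would set $h_k(\cdot) := \phi((i_k,\bm{S}_k),\cdot):\mathbb{R}^n\to\mathbb{R}$, so that each $h_k$ absorbs the sample label $i_k$, and take the vector-valued class $\mathcal{G}=\{\bm{S}\mapsto \bm{z}_L(\bm{S};\bm{\theta}):\bm{\theta}\in\Theta\}$ whose $i$-th coordinate function ranges over $\mathcal{G}_{L,i}$. By Lemma~\ref{bound:Lip}, every $h_k$ is $\sqrt{2}$-Lipschitz in the Euclidean norm, which matches the hypothesis of Lemma~\ref{bound:concen} with $\kappa=\sqrt{2}$ and $\bm{x}_k=\bm{S}_k$. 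Applying that lemma gives
$$\mathbb{E}\left[\sup_{\bm{\theta}\in\Theta}\sum_{k=1}^m \epsilon_k h_k\bigl(\bm{z}_L(\bm{S}_k;\bm{\theta})\bigr)\right] \le \sqrt{2}\cdot\sqrt{2}\;\mathbb{E}\left[\sup_{\bm{\theta}\in\Theta}\sum_{k=1}^m\sum_{i=1}^n \epsilon_{k,i}\, z_{L,i}(\bm{S}_k;\bm{\theta})\right],$$
where the product $\sqrt{2}\cdot\sqrt{2}=2$ is exactly the leading constant in the target inequality. Bounding the supremum of a sum by the sum of the suprema across the coordinate index $i$, each resulting term is $m\,\widehat{\mathfrak{R}}_{\mathcal{D}}(\mathcal{G}_{L,i})$; dividing by $m$ then yields $\widehat{\mathfrak{R}}_{\mathcal{D}}(\{r(\cdot;\bm{\theta})\})\le 2\sum_{i=1}^n\widehat{\mathfrak{R}}_{\mathcal{D}}(\mathcal{G}_{L,i})$.

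The step requiring the most care is aligning the hypotheses of Lemma~\ref{bound:concen}: its maps $h_k$ are allowed to depend on $k$, and the crucial observation is that Lemma~\ref{bound:Lip} delivers the \emph{same} constant $\sqrt{2}$ for every data point $(i_k,\bm{S}_k)$, so $\kappa$ is genuinely uniform over the sample. I would also flag that although $\phi$ only reads the entries of $\bm{z}_L$ indexed by $\mathcal{S}$, its gradient vanishes on the remaining entries, so it remains $\sqrt{2}$-Lipschitz as a map on all of $\mathbb{R}^n$; this is what legitimizes treating $\bm{z}_L$ as a full $n$-vector and summing over all $i=1,\dots,n$ on the right-hand side. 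The concluding distribution of the supremum over coordinates is elementary and introduces no additional constant, so the only nontrivial content is the correct invocation of the contraction lemma.
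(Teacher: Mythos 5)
Your proposal is correct and takes essentially the same route as the paper's own proof: the paper likewise instantiates Lemma~\ref{bound:concen} with $\bm{g}(\cdot)=\bm{z}_L(\cdot;\bm{\theta})$ and $h_k(\cdot)=\phi((i_k,\bm{S}_k),\cdot)$, uses the $\sqrt{2}$-Lipschitz bound from Lemma~\ref{bound:Lip} so that $\sqrt{2}\kappa = 2$, and then bounds the supremum of the double sum by the sum over coordinates $i$ of the suprema, recovering $\sum_{i=1}^n\widehat{\mathfrak{R}}_{\mathcal{D}}(\mathcal{G}_{L,i})$. Your additional remarks (uniformity of the Lipschitz constant over samples, and $\phi$ being Lipschitz on all of $\mathbb{R}^n$ because its gradient vanishes on coordinates outside $\mathcal{S}$) are sound and only make explicit what the paper leaves implicit.
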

\begin{proof}
Denote
$$\mathcal{G}_{L}\coloneqq \left\{\bm{S}\in \{0,1\}^n\rightarrow \bm{z}_{L}(\bm{S};\bm{\theta}): \bm{\theta} \in \Theta \right \}.$$

Let $\mathcal{G}=\mathcal{G}_L$, $\bm{g}(\cdot)=\bm{z}_L(\cdot;\bm{\theta})$ and $h_k(\cdot)=\phi((i_k,\bm{S}_k),\cdot)$ in Lemma~\ref{bound:concen}. Together with Lemma~\ref{bound:Lip}, we have the following
$$\widehat{\mathfrak{R}}_{\mathcal{D}}(\{r(\cdot;\bm{\theta}):\bm{\theta} \in \Theta\})\leq \frac{2}{m}\mathbb{E}\left[\sup_{\bm{\theta}\in \Theta}\sum_{i=1}^n \sum_{k=1}^m \epsilon_{k,i}z_{L,i}(\bm{S}_k;\bm{\theta})\right],$$
where expectations are with respect to $\bm{\epsilon}$. Then for any sequence $\{\epsilon_{k,i}\}$,
$$\sup_{\bm{\theta}\in \Theta}\sum_{i=1}^n \sum_{k=1}^m \epsilon_{k,i}z_{L,i}(\bm{S}_k;\bm{\theta})\leq \sum_{i=1}^n \sup_{\bm{\theta}\in \Theta} \sum_{k=1}^m \epsilon_{k,i}z_{L,i}(\bm{S}_k;\bm{\theta}),$$
where $\epsilon_{k,i}$ are i.i.d. Rademacher random variables. Finally by definition of $\widehat{\mathfrak{R}}_{\mathcal{D}}(\mathcal{G}_{L,i})$ we complete the proof.
\end{proof}

Then we can apply the standard analysis of fully connected networks \citep{wan2013regularization,neyshabur2015norm,golowich2018size}:

\begin{lemma}
\label{bound:NN_peel}
For $i=1,...,n$,
$$\widehat{\mathfrak{R}}_{\mathcal{D}}(\mathcal{G}_{L,i})\leq \frac{1}{\sqrt{m}}\left(\bar{b}\cdot\frac{(2\bar{W})^L-1}{2\bar{W}-1}+(2\bar{W})^{L} \sqrt{2\log(2n)}\right).$$
\end{lemma}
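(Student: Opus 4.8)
The plan is to bound $\widehat{\mathfrak{R}}_{\mathcal{D}}(\mathcal{G}_{L,i})$ by a \emph{peeling} (layer-by-layer) argument that tracks a single scalar quantity through the network. For $l=0,1,\dots,L$ I would define
\[ T_l := \mathbb{E}_{\bm{\epsilon}}\Big[\sup_{\bm{\theta}\in\Theta}\max_{j=1,\dots,n_l}\Big|\sum_{k=1}^m \epsilon_k\, z_{l,j}(\bm{S}_k;\bm{\theta})\Big|\Big], \]
so that, since $\sum_k \epsilon_k z_{L,i}\le\max_j|\sum_k\epsilon_k z_{L,j}|$, one immediately gets $m\cdot\widehat{\mathfrak{R}}_{\mathcal{D}}(\mathcal{G}_{L,i})\le T_L$. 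The base case $T_0$ is controlled directly: here $z_{0,j}(\bm{S}_k)=S_{k,j}\in\{0,1\}$, so $T_0=\mathbb{E}_{\bm{\epsilon}}\max_{j,\,s\in\{\pm1\}} s\sum_k\epsilon_k S_{k,j}$ is the expected maximum of $2n$ Rademacher sums, and Massart's finite-class lemma gives $T_0\le \sqrt{m}\,\sqrt{2\log(2n)}$ because each column has Euclidean norm at most $\sqrt{m}$. The whole proof then reduces to establishing the one-layer recursion $T_l\le 2\bar{W}\,T_{l-1}+\bar{b}\,\sqrt{m}$ and unrolling it.

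For the recursion I would exploit that the constraint $\|\bm{W}_l\|_\infty\le\bar{W}$ is exactly a per-row $\ell_1$ bound and that distinct rows act on disjoint parameters. Consequently, for every fixed $\bm{\epsilon}$ the maximum over the $n_l$ output nodes collapses: optimizing over $\bm{\theta}$ already allows the best admissible row to be placed in coordinate $j$, so $\sup_{\bm{\theta}}\max_j|\cdots|$ equals a supremum over a single row $(\bm{w},b)$ with $\|\bm{w}\|_1\le\bar{W}$, $|b|\le\bar{b}$, together with the lower-layer parameters. Writing $a_k=\langle\bm{w},\bm{z}_{l-1}(\bm{S}_k)\rangle+b$, I would then (i) drop the absolute value via $\sup|Y|=\max(\sup Y,\sup(-Y))$, where both terms are nonnegative and equal in expectation by the sign-symmetry of $\bm{\epsilon}$, which costs a factor $2$; (ii) strip the ReLU using the Ledoux--Talagrand contraction (it is $1$-Lipschitz and vanishes at $0$), replacing $\sigma(a_k)$ by $a_k$; (iii) split off the bias via $\sup_{|b|\le\bar{b}}b\sum_k\epsilon_k=\bar{b}\,|\sum_k\epsilon_k|$ with $\mathbb{E}|\sum_k\epsilon_k|\le\sqrt{m}$; and (iv) apply H\"older with the dual pair $(\ell_1,\ell_\infty)$ to the weight term, $\sup_{\|\bm{w}\|_1\le\bar{W}}\langle\bm{w},\sum_k\epsilon_k\bm{z}_{l-1}(\bm{S}_k)\rangle=\bar{W}\max_j|\sum_k\epsilon_k z_{l-1,j}(\bm{S}_k)|=\bar{W}\,T_{l-1}$ in expectation. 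Steps (i)--(iv) together produce the per-layer amplification $T_l\le 2\bar{W}\,T_{l-1}+O(\bar{b}\sqrt{m})$, which is the desired recursion (the factor $2\bar{W}$, with the $2$ coming from the symmetrization in (i)).

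Finally I would unroll: iterating $T_l\le 2\bar{W}\,T_{l-1}+\bar{b}\sqrt{m}$ from $T_0\le\sqrt{m}\sqrt{2\log(2n)}$ gives $T_L\le (2\bar{W})^L\sqrt{m}\sqrt{2\log(2n)}+\bar{b}\sqrt{m}\sum_{j=0}^{L-1}(2\bar{W})^j$, and the geometric sum equals $\tfrac{(2\bar{W})^L-1}{2\bar{W}-1}$; dividing by $m$ and using $m\,\widehat{\mathfrak{R}}_{\mathcal{D}}(\mathcal{G}_{L,i})\le T_L$ yields the claimed inequality. I expect the delicate part to be steps (i)--(iv) of the peel, namely handling the ReLU simultaneously with the maximum over the $n_l$ nodes and the absolute value while keeping the constants sharp --- in particular securing the node-max collapse and the factor $2\bar{W}$, and tracking the bias contribution through the contraction so that it lands on the constant claimed. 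The base-case $\sqrt{2\log(2n)}$ via the finite-class bound and the geometric unrolling are otherwise routine.
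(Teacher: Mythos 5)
Your overall strategy is the same peeling argument as the paper's: one-sided contraction to strip the ReLU, H\"older with the $(\ell_1,\ell_\infty)$ dual pair against the per-row constraint $\|\bm{W}_l\|_\infty\le\bar{W}$, the observation that all node classes at a given layer coincide (the paper's $\mathcal{G}_{l,1}=\cdots=\mathcal{G}_{l,n_{l}}$), a Massart-type base case producing $\sqrt{2\log(2n)}$, and a geometric unroll. Almost every individual step is sound, but there is one genuine flaw, and it is precisely at the spot you flagged as delicate: the constant in your recursion. Because you track $T_l$ with the absolute value and node-max \emph{inside} the expectation, your steps are forced into the order (i) symmetrize away the absolute value, (ii) contraction, (iii)--(iv) split off bias and weight: the bias cannot be separated while the ReLU still wraps it, and the one-sided contraction cannot be applied while the absolute value is still present. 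Consequently the factor $2$ from step (i) multiplies the \emph{entire} pre-activation, bias included, and what steps (i)--(iv) actually deliver is
\[
T_l \le 2\left(\bar{W}\,T_{l-1}+\bar{b}\sqrt{m}\right)=2\bar{W}\,T_{l-1}+2\bar{b}\sqrt{m},
\]
not $T_l\le 2\bar{W}\,T_{l-1}+\bar{b}\sqrt{m}$ as you assert. Unrolling the recursion you can actually prove gives the lemma with $2\bar{b}\cdot\frac{(2\bar{W})^L-1}{2\bar{W}-1}$ in place of $\bar{b}\cdot\frac{(2\bar{W})^L-1}{2\bar{W}-1}$, i.e.\ a strictly weaker statement. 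This is not repairable within your bookkeeping: using the Ledoux--Talagrand contraction in its absolute-value form instead carries its own factor $2$, and even the sharp estimate $\mathbb{E}\left|\sum_k\epsilon_k\right|$ cannot absorb it (for $m=1$ the bias contribution is exactly $2\bar{b}$).

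The fix is the paper's bookkeeping: track the one-sided, single-coordinate quantity $\widehat{\mathfrak{R}}_{\mathcal{D}}(\mathcal{G}_{l,1})=\frac{1}{m}\mathbb{E}\left[\sup_{\bm{\theta}\in\Theta}\sum_{k=1}^m\epsilon_k z_{l,1}(\bm{S}_k;\bm{\theta})\right]$, with no absolute value and no max over nodes. Then contraction applies immediately (no prior symmetrization needed), the bias splits off at cost $\bar{b}\sqrt{m}$ \emph{before} any factor $2$ appears, and only the weight term passes through H\"older, where the $\ell_\infty$ norm reintroduces the absolute value and node-max transiently; at that point your node-collapse observation removes the max, and the symmetrization $\sup|Y|\le\sup Y+\sup(-Y)$ (both terms nonnegative since the zero function is in the class, equal in expectation by sign-symmetry) attaches the factor $2$ to $\bar{W}$ alone. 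This yields $\widehat{\mathfrak{R}}_{\mathcal{D}}(\mathcal{G}_{l,i})\le 2\bar{W}\widehat{\mathfrak{R}}_{\mathcal{D}}(\mathcal{G}_{l-1,1})+\bar{b}/\sqrt{m}$, and unrolling against the base case recovers the stated constants exactly. Your base case via Massart (the $2n$ signed columns of $\{0,1\}$-valued inputs, each of Euclidean norm at most $\sqrt{m}$) and your geometric summation are correct as written and match what the paper does via Lemma 26.11 of Shalev-Shwartz and Ben-David.
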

\begin{proof}
All expectations below are with respect to $\bm{\epsilon}$. We begin with the last layer:
\begin{align}
\widehat{\mathfrak{R}}_{\mathcal{D}}(\mathcal{G}_{L,i})=&\frac{1}{m}\mathbb{E}\left[\sup_{\bm{\theta}\in \Theta} \sum_{k=1}^m \epsilon_{k}z_{L,i}(\bm{S}_k;\bm{\theta}) \right] \nonumber \\
=&\frac{1}{m}\mathbb{E}\left[\sup_{\bm{\theta}\in \Theta} \sum_{k=1}^m \epsilon_{k}(\bm{w}_{L,i}^\top \bm{z}_{L-1}(\bm{S}_k;\bm{\theta})+b_{L,i})^+ \right]\nonumber \\
\leq& \frac{1}{m}\mathbb{E}\left[\sup_{\bm{\theta}\in \Theta} \sum_{k=1}^m \epsilon_{k}(\bm{w}_{L,i}^\top \bm{z}_{L-1}(\bm{S}_k;\bm{\theta})+b_{L,i}) \right] \nonumber \\
= &\frac{1}{m}\mathbb{E}\left[\sup_{\bm{\theta}\in \Theta} \sum_{k=1}^m \epsilon_{k}\bm{w}_{L,i}^\top \bm{z}_{L-1}(\bm{S}_k;\bm{\theta}) \right]+\frac{1}{m}\mathbb{E}\left[\sup_{|b_{L,i}|\leq \bar{b}} \sum_{k=1}^m \epsilon_{k}b_{L,i}\right] \label{apx:bound:decompose}.
\end{align}
Here the first inequality comes from the contraction property of Rademacher complexity and the fact that ReLU is $1$-Lipschitz (see Lemma 26.9 of \citep{shalev2014understanding}).

For the first term of (\ref{apx:bound:decompose}),
\begin{align}
\sup_{\bm{\theta}\in \Theta} \sum_{k=1}^m \epsilon_{k}\bm{w}_{L,i}^\top \bm{z}_{L-1}(\bm{S}_k;\bm{\theta})     &\leq \sup_{\bm{\theta}\in \Theta} \|\bm{w}_{L,i}\|_1\left\|\sum_{k=1}^m \epsilon_{k} \bm{z}_{L-1}(\bm{S}_k;\bm{\theta})\right\|_{\infty} \nonumber \\
    &\leq \bar{W} \sup_{\bm{\theta}\in \Theta} \left\|\sum_{k=1}^m \epsilon_{k} \bm{z}_{L-1}(\bm{S}_k;\bm{\theta})\right\|_{\infty} \nonumber\\
    &= \bar{W} \sup_{i=1,...n_{L-1},\bm{\theta}\in\Theta} \left|\sum_{k=1}^m \epsilon_{k} z_{L-1,i}(\bm{S}_k;\bm{\theta})\right| \nonumber\\
    &\leq \bar{W} \sup_{\bm{\theta}\in\Theta} \left|\sum_{k=1}^m \epsilon_{k} \bm{z}_{L-1,1}(\bm{S}_k;\bm{\theta})\right| \nonumber\\
    &\leq \bar{W} \sup_{\bm{\theta}\in\Theta}\left[ \sum_{k=1}^m \epsilon_{k} \bm{z}_{L-1,1}(\bm{S}_k;\bm{\theta})\right]+\bar{W} \sup_{\bm{\theta}\in\Theta}\left[ -\sum_{k=1}^m \epsilon_{k} \bm{z}_{L-1,1}(\bm{S}_k;\bm{\theta})\right]\nonumber,
\end{align}
where the first inequality follows the Cauchy–Schwarz inequality, the second inequality comes from $\|\bm{w}_{L,i}\|_1\leq \bar{W}$, the third inequality comes from $\mathcal{G}_{L-1,1}=\mathcal{G}_{L-1,2}=...=\mathcal{G}_{L-1,n_{L-1}}$ and the last inequality is because $\mathcal{G}_{L-1,1}$ includes the zero function.

Then we note that $\bm{\epsilon}$ and $-\bm{\epsilon}$ have same distribution,
\begin{align*}
\mathbb{E}\left[\sup_{\bm{\theta}\in \Theta} \sum_{k=1}^m \epsilon_{k}\bm{w}_{L,i}^\top \bm{z}_{L-1}(\bm{S}_k;\bm{\theta}) \right]    &\leq  \bar{W} \mathbb{E}\left[\sup_{\bm{\theta}\in\Theta} \sum_{k=1}^m \epsilon_{k} \bm{z}_{L-1,1}(\bm{S}_k;\bm{\theta})\right]+\bar{W}\mathbb{E}\left[ \sup_{\bm{\theta}\in\Theta} -\sum_{k=1}^m \epsilon_{k} \bm{z}_{L-1,1}(\bm{S}_k;\bm{\theta})\right]\\
    &=2\bar{W} \mathbb{E}\left[\sup_{\bm{\theta}\in\Theta} \sum_{k=1}^m \epsilon_{k} \bm{z}_{L-1,1}(\bm{S}_k;\bm{\theta})\right]\\
    &=2\bar{W} m\widehat{\mathfrak{R}}_{\mathcal{D}}(\mathcal{G}_{L-1,1})
\end{align*}

For the second term of (\ref{apx:bound:decompose}), we have

\begin{align*}
    \mathbb{E}\left[\sup_{|b_{L,i}|\leq \bar{b}}\sum_{k=1}^m\epsilon_k b_{L,i}\right] & \leq \mathbb{E}\left[\sup_{|b_{L,i}|\leq \bar{b}}\left|\sum_{k=1}^m\epsilon_k\right| |b_{L,i}|\right]\\
    & \leq \bar{b} \mathbb{E}\left[\left|\sum_{k=1}^m\epsilon_k\right|\right]
    =\bar{b} \mathbb{E}\left[\sqrt{\left(\sum_{k=1}^m\epsilon_k\right)^2}\right]\\
   &\leq  \bar{b} \sqrt{\mathbb{E}\left[\left(\sum_{k=1}^m\epsilon_k\right)^2\right]}
   \leq \bar{b} \sqrt{m},
\end{align*}

where the second last inequality is by Jensen's inequality and the last inequality the definition of Rademacher random variables.

Combining these two parts,, we have $\widehat{\mathfrak{R}}_{\mathcal{D}}(\mathcal{G}_{L,i})\leq 2\bar{W}\widehat{\mathfrak{R}}_{\mathcal{D}}(\mathcal{G}_{L-1,1})+\frac{\bar{b}}{\sqrt{m}}$ for $i=1,...,n$. We can repeat this process $L-1$ times. From Lemma 26.11 in \citep{shalev2014understanding} and by noting $\|\bm{S}\|_{\infty}\leq 1$,
$$\widehat{\mathfrak{R}}_{\mathcal{D}}(\mathcal{G}_{1,1})\leq 2\bar{W}\sqrt{\frac{2\log (2n)}{m}}+\frac{\bar{b}}{\sqrt{m}}.$$
Thus we complete the proof.
\end{proof}

\textbf{Proof for Theorem~\ref{thm:gen_bound}}
\begin{proof}
Combining Lemma~\ref{apx:bound:Gen_bound}, Corollary~\ref{bound:loss_peel}, and Lemma~\ref{bound:NN_peel}, the proof is completed.
\end{proof}

\section{Appendix for Assortment Optimization}
\label{apnx:assort_opt}

For each underlying choice model with a fixed number of available products ($|\mathcal{N}|= 20, 40, 60$), we randomly generate $5$ datasets/instances (with different parameters in the generating choice model) and for each dataset, we randomly generate $20$ revenue vectors and constraints (if any). Thus, each underlying choice model has $100$ randomly sampled optimization problems. Each  method has two steps: the estimation step and the optimization step. The estimation step estimates the parameters of the choice model based on the training data and the optimization step  treats such estimated choice model as the true one and (approximately) optimize it. The precise generation methods, estimation methods, and assortment optimization formulations with corresponding used parameters are introduced below. For the optimization step, we use Gurobi version 9.5.2 for solving all MIPs and LPs. Optimizing the assortment with trained Gated-Assort-Net can be slow since the relaxed problem is not convex. As discussed in Section~\ref{sec:OPT},  we tighten the upper bounds of decision variables (the big-M) following the methods from \cite{fischetti2017deep}, use exponential function's second order approximation $1+x+x^2/2$, and set the maximal solving time limit as $300$ seconds.

\paragraph{Data Generation:}\

Each instance has a training dataset with $30,000$ samples for estimation. The offered assortment $\mathcal{S}_k$ is generated by first  randomly choosing assortment size $|\mathcal{S}_k|$ in $\{1,...,n\}$, and then  randomly picking products from $n$ and form the assortment $\mathcal{S}_k$.  The choice $i_k$ is generated by corresponding choice models including MNL, MCCM, NP, and MMNL, with the same generating process as in Appendix~\ref{apnx:no_fea_model} for their parameters.

For the revenue vector, the $i$-th product's revenue is generated uniformly in the interval $[10,50]$ and we always set the revenue of the no-purchase option as $0$.

For the capacity constraint (if appliable), the constraint is defined by $\bm{a}^T\bm{z}_0\leq c$, where $\bm{a}\in \mathbb{R}^n$ has non-negative elements and $c$ is a positive scalar. We call the $i$-th element $a_i$ as the coefficient of $i$-th item and $c$ as the budget. The $i$-th product's coefficient is generated uniformly in $[10,50]$ and we always set the coefficient of the no-purchase option as $0$. The budget $c$ is generated uniformly among $\max\{\frac{\|\bm{a}\|_{1}}{n},\|\bm{a}\|_{\infty}\}$ and $\max\{\frac{4\|\bm{a}\|_{1}}{n},\|\bm{a}\|_{\infty}\}$, where $\|\bm{a}\|_{1}=\sum_{i=1}^n a_i$ and $\|\bm{a}\|_{\infty}=\max_{i=1,...,n} a_i$. Note such budget can make every product available to be offered in the assortment.

\paragraph{Estimation Step:}\
The estimation step includes Neural Network (NN), MNL-MLE (MNL), and  MCCM-EM (MC). For the neural network estimation method, we test one layer (NN(1)) and two layers (NN(2)) Gated-Assort-Nets with width in each layer equal to the number of products. For MNL-MLE (MNL), and  MCCM-EM (MC), they apply the same process introduced in Appendix~\ref{apnx:no_fea_model}.

\paragraph{Optimization Formulation:}\

Below are the methods implemented to solve the (true) assortment optimization problems of each underlying choice model:
\begin{itemize}
\item MNL and MMNL: When there is no constraint,  we use the revenue ordering method \citep{talluri2004revenue} to get the optimal assortment for MNL and use the MILP formulation in \cite{mendez2014branch} for solving the assortment optimization problem for MMNL. We remark that MNL is a special case of MMNL and  we can easily add the corresponding capacity constraint in the above MILP for both models.
\item MCCM: When there is no constraint, the optimal assortment of MCCM can be computed by repeatedly using Bellman operators \citep{blanchet2016markov} or by solving an LP \citep{feldman2017revenue}. The latter LP method can be extended to an MILP when there exists a cardinality constraint \citep{desir2020constrained}.
\item NP: When there is no constraint, the assortment optimization problem can be formulated as a MILP:
\begin{lemma}
For an NP choice model with revenue vector $\bm{\mu}$, distribution vector $\bm{\lambda}$ on permutations $\text{Perm}_{\mathcal{N}}$, and permutation matrix $\bm{P}_j\in \{0,1\}^{n\times n}$ for $j=1,...,|\text{Perm}_{\mathcal{N}}|$, whose element $P_{i,i'}=1$ if $i'$-th item is ranked in position $i$ and otherwise $0$ in $j$-th permutation, the assortment optimization problem can be formulated by following MILP with optimal assortment as solved $\bm{s}$:
\begin{align}
    \max_{\bm{s},\bm{\tilde{s}},\bm{\eta}} \ \ &\sum_{j=1}^{|\text{Perm}_{\mathcal{N}}|}   \lambda_j \bm{\eta}_j^\top  \bm{P}_j \bm{\mu} \nonumber\\
    \text{s.t. }\  & \bm{P}_j \bm{s}=\tilde{\bm{s}}_j, \ \forall j=1,...,|\text{Perm}_{\mathcal{N}}| \nonumber\\
    &\bm{\eta}_j\leq \tilde{\bm{s}}_j, \ \forall j=1,...,|\text{Perm}_{\mathcal{N}}| \label{NPOpt:3}\\
    & \eta_{j,i}\leq 1-\tilde{s}_{j,i'}, \ \forall j=1,...,|\text{Perm}_{\mathcal{N}}|, \ i=1,...,n, \ i'=1,...,i-1 \label{NPOpt:4}\\
    &\bm{s} \in \{0,1\}^n, \ \bm{\eta}_j=(\eta_{j,1},...,\eta_{j,n})\geq 0, \ \forall j=1,...,|\text{Perm}_{\mathcal{N}}|\nonumber
\end{align}
\end{lemma}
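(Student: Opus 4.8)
The plan is to show that for every fixed binary assortment vector $\bm{s}$, the inner maximization over the continuous variables $\bm{\eta}$ recovers exactly the expected revenue $\mathrm{Rev}(\mathcal{S})$ of the corresponding assortment $\mathcal{S}$ under the NP model; maximizing over $\bm{s}$ then yields the optimal assortment, so the MILP is correct. First I would fix the interpretation of the auxiliary quantities. Since $\bm{P}_j$ is a permutation matrix whose $i$-th row selects the product occupying position $i$ in the $j$-th preference list, the product $\tilde{\bm{s}}_j=\bm{P}_j\bm{s}$ merely reorders the offer indicator: $\tilde{s}_{j,i}=1$ exactly when the product ranked $i$-th in permutation $j$ belongs to $\mathcal{S}$, and $(\bm{P}_j\bm{\mu})_i$ is the revenue of the product in position $i$. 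A customer of type $j$ buys the product in the first offered position $i^*(j):=\min\{i:\tilde{s}_{j,i}=1\}$, contributing revenue $(\bm{P}_j\bm{\mu})_{i^*(j)}$, so that $\mathrm{Rev}(\mathcal{S})=\sum_{j}\lambda_j(\bm{P}_j\bm{\mu})_{i^*(j)}$.

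Next I would read off what the constraints impose on $\bm{\eta}_j$ for a fixed $\bm{s}$. Constraint \eqref{NPOpt:3} forces $\eta_{j,i}=0$ at any position $i$ whose product is not offered, while constraint \eqref{NPOpt:4} forces $\eta_{j,i}=0$ whenever some earlier position $i'<i$ is offered. Consequently $\eta_{j,i}$ can be nonzero only at the first offered position $i^*(j)$, where both constraint families collapse to $\eta_{j,i^*(j)}\le 1$. I would then invoke nonnegativity of the objective coefficients, namely $\lambda_j\ge 0$ and the revenues $(\bm{P}_j\bm{\mu})_i\ge 0$, to conclude that the maximization drives each $\eta_{j,i}$ to its upper bound; hence at optimum $\eta_{j,i^*(j)}=1$ and every other entry vanishes. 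This gives $\bm{\eta}_j^\top\bm{P}_j\bm{\mu}=(\bm{P}_j\bm{\mu})_{i^*(j)}$, so the objective equals $\mathrm{Rev}(\mathcal{S})$ for the fixed $\bm{s}$, and maximizing over $\bm{s}\in\{0,1\}^n$ returns $\max_{\mathcal{S}}\mathrm{Rev}(\mathcal{S})$ with the optimizing $\bm{s}$ encoding $\mathcal{S}^*$.

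The main subtlety, and the step I would treat most carefully, is the exactness of the continuous relaxation of $\bm{\eta}$: even though $\bm{\eta}_j$ is not constrained to be binary, the first-offered-position constraints together with nonnegative revenues make the inner linear program attain an integral optimum automatically. I would flag the edge cases explicitly. When $i^*(j)$ is the no-purchase option (revenue $0$) the optimizer is indifferent to the value of $\eta_{j,i^*(j)}$, but the objective contribution is $0$ either way, so correctness is unaffected; and constraint \eqref{NPOpt:4} is precisely what prevents the relaxation from fraudulently assigning weight to a later, higher-revenue position. I would also note that the standing assumption $n\in\mathcal{S}$ (the no-purchase option is always offered) guarantees $i^*(j)$ is well defined for every $j$; absent it, a permutation with no offered product simply contributes $0$, which is still correct. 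Assembling these observations yields the claimed equivalence between the MILP optimum and the NP assortment optimization problem.
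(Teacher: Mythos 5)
Your proposal is correct and follows essentially the same argument as the paper's proof: interpret $\tilde{\bm{s}}_j=\bm{P}_j\bm{s}$ as the offer indicators reordered by permutation $j$, observe that constraints \eqref{NPOpt:3} and \eqref{NPOpt:4} zero out $\bm{\eta}_j$ everywhere except the first offered position, and use nonnegativity of the objective coefficients to conclude the inner maximization sets that entry to $1$, making the objective equal the expected revenue. Your explicit treatment of the zero-revenue (no-purchase) edge case, where the optimal $\eta_{j,i^*(j)}$ need not equal $1$ but the objective value is unaffected, is in fact slightly more careful than the paper's claim that $\bm{\eta}_j$ is always a $0$-$1$ vector with a single $1$.
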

\begin{proof}
Given an assortment vector $\bm{s}\in \{0,1\}^n$ whose element equals $1$ means the corresponding product is included in the assortment, equality constraint sets $\tilde{\bm{s}}_j$ to recover the available positions in permutation $j$: if $i$-th position's item is offered, then $\tilde{s}_{j,i}=1$ otherwise $0$. Constraints (\ref{NPOpt:3}) and (\ref{NPOpt:4}) will together push $\bm{\eta}_j$'s elements into $0$ if the corresponding position is not available (by (\ref{NPOpt:3})) or if any of former positions is available (by (\ref{NPOpt:4})). Thus, since the objective function is maximizing a linear function of $\bm{\eta}_j$ with positive coefficients, $\bm{\eta}_j$ will be a $\{0,1\}^n$ vector with only one element $1$ whose position is the chosen position (the first available position) under assortment vector $\bm{s}$ under $j$-th permutation. By noting $ \bm{P}_j\bm{\mu}$ will return the revenues of each position in $j$-th permutation, the objective function is the expected revenue given assortment $\bm{s}$.
\end{proof}
Note that with additional capacity constraint, we can directly add it into the above formulation to get the optimal assortment.
\end{itemize}

\section{Appendix for Section~\ref{sec:extension}}
\label{apnx:extension}

\subsection{Appendix for Subsection~\ref{subsec:OOD}}
\label{apnx:OOD}
\paragraph{Details for Figure~\ref{fig:EMMC-assort}:}\

 Both the sample generation of the underlying MCCM and EM algorithm implementation are introduced in Appendix~\ref{apnx:no_fea_model}. % for the case $n=20$.
 We compute the mean of absolute differences between estimations and true values as estimation errors, shown on the y-axis. We independently train $10$ times based on different training data: the training data size is always set as $10,000$ with total product number $n=20$, but the assortment size (used in training) is varied and chosen from $2,4,6,8,10$ as shown on the x-axis. The products in each sampled assortment are randomly  chosen. The reported value in the figure is averaged over $10$ independent trails.

 \paragraph{Out-of-domain Performance under Assortment Distribution Shift:}\

In this experiment, we demonstrate whether a different distribution of generating assortments  between training and testing will affect model performance, also known as the out-of-domain generalization. Specifically, we use one distribution to generate the assortment in the training data and test the performance with assortments generated from another distribution. Intuitively, this shows whether the model is capable of generalizing over the domain of assortments.

In the experiment, we generate four groups of training data:
\begin{itemize}
\item For the first dataset D-1, assortments are generated by first  randomly choosing $s=|\mathcal{S}|$,  and then  randomly picking $s$ products from $\mathcal{N}$ to form the assortment $S$.
\item For the second dataset D-2, assortments are generated from a Bernoulli distribution. That is, for each product, with probability $1/2$ it appears in the assortment independent of others.
\item For the third dataset D-3, the assortment is given by first randomly specifying either the first half (product 1 to product $n/2$, suppose $n/2$ is integer for convenience) or the second half (product $n/2+1$ to product $n$) of the total products will definitely not appear in the assortment. Then we generate the assortment within the other half following the same rule as in D-1.
\item For the last dataset D-4, we require all the assortments to have a size of  $n/3$ or $n/3+1$ (again assume they are integers for convenience) products, and then pick products randomly into the assortment. This design is inspired by the results of \cite{gupta2020parameter}.
\end{itemize}

The total number of products $n=30$ and each training dataset consists of $m=100,000$ samples. The underlying true model for generating the data is the MCCM model.

\begin{table}[ht!]
\centering
\begin{tabular}{c|cccc}
\toprule
  & D-1  & D-2  & D-3 & D-4 \\
\midrule
D-1 & 2.364 & 2.603 & 1.820 & 2.298
\\
D-2 & 2.366 & 2.600 & 1.819 & 2.298
\\
D-3 & 2.636	& 2.781 & 1.819 & 2.383
\\
D-4 & 2.434 & 2.619 & 1.829 & 2.297
\\ \midrule
Mix & 2.366 & 2.604 & 1.819 & 2.299 \\
\midrule
Oracle & 2.358 & 2.595 & 1.817 & 2.293
\\
\bottomrule
\end{tabular}
\caption{Out-of-sample CE loss for out-of-domain performance of Gated-Assort-Net. Row headers denote the training dataset and column headers denote the testing dataset. The ``Mix'' header denotes a randomly mixed training set with equal proportion from the $4$ datasets, and the ``Oracle'' header denotes the performance of the true model.}
\label{table:AssortEffect}
\end{table}

We train a neural network model with each of the training data and test its performance on the other datasets. In addition, we train a neural network model with a mixture of training data from the four datasets. Table~\ref{table:AssortEffect} summarizes the out-of-sample CE loss for each training-test pair. For illustration, we train a 1-layer Gated-Assort-Net. The row headers denote the training dataset, and the column headers denote the testing dataset.

The result shows that, except for the dataset D-3 (the third row), Gated-Assort-Net performs well in out-of-domain datasets. It is understandable that the datasets D-1 and D-2 are informative enough for a neural network model to catch the whole picture of the true underlying model. The dataset D-4 also enjoys this privilege, which is a bit surprising but also reinforces the findings in \cite{gupta2020parameter} (though not in an exact same spirit). The distribution of D-3 clearly fails to capture the interplay between the first half and the second half of the total products, and thus cannot reach the performance of the oracle or generalize to different assortment distributions.

\textbf{Details on the true model generation.} The training data is generated from an MCCM. The arriving probability $\lambda_i$ for product $i$ is generated by:
$$
\lambda_i = \dfrac{\exp(\mu_i)}{\sum_{j=1}^{n} \exp(\mu_j)},
$$
where $\mu_i$, $i=1,...,n$ is identically and independently (i.i.d.) sampled by standard Normal distribution $\mathcal{N}(0,1)$. And the transition matrix $\rho$ is generated row by row. For all the $n$ rows, each row $\rho_i$ is generated following the same rule:

$$
\rho_{i,j} = \dfrac{\exp(\nu_{i,j})}{\sum_{j=1}^{n} \exp(\nu_{i,j})},
$$
where $\nu_{i,j}$, ${i,j=1,...,n}$  is i.i.d. sampled by standard Normal distribution $\mathcal{N}(0,1)$.

\subsection{Appendix for Subsection~\ref{subsec:depth}}
\label{apnx:DepWidNN}

In this subsection, we present an experiment on the depth and width of the neural network. Arguably, a wider or deeper neural network gives larger model capacity. However, for the task of choice modeling, we do not recommend a too wide or too deep neural network, mainly for several reasons. First, a complicated neural network architecture requires more number of samples to train, and there might be not enough amount of samples in the application contexts of choice modeling. Second, the customer choice behavior may not need a too complicated neural network to describe. In addition, the assortment optimization's MIP prefers a smaller size one.

Figure~\ref{fig:LenWidNN} presents an experiment on varying the width and depth of the four networks to fit an MCCM with product features. We do not observe a significant change of performance by further widening or deepening the architecture.
\begin{figure}[ht!]
    \centering
    \begin{subfigure}[b]{0.4 \textwidth}
        \centering
        \includegraphics[width=1.\textwidth]{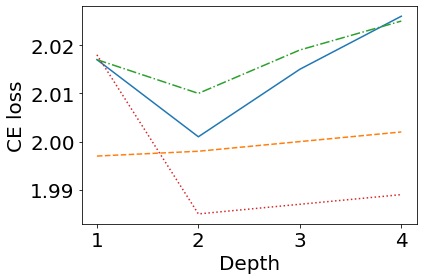}
    \end{subfigure}%
    \begin{subfigure}[b]{0.68 \textwidth}
        \centering
        \includegraphics[width=1.\textwidth]{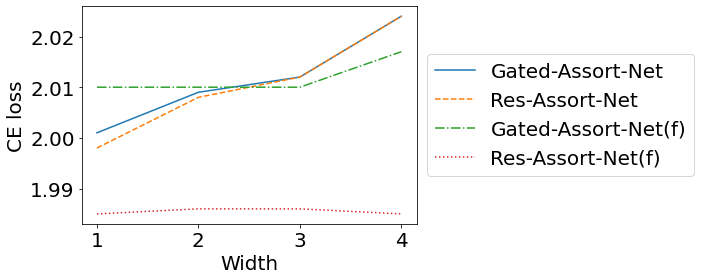}
    \end{subfigure}
    \caption{Network structures and corresponding CE losses.}
    \label{fig:LenWidNN}
\end{figure}

\textbf{Experiment details of Figure~\ref{fig:LenWidNN}:} The underlying model is an MCCM with product features. The sample generation method is the same as that of Table~\ref{tabFeat1}. We set the total number of products $n=50$, and the length of product features $d=5$. For choice networks with features, we use an encoder containing one middle layer to encode each product feature vector to its latent utility. The product encoder is (fixed as) a fully connected layer $\text{Linear}(5,5)$ followed by another fully connected layer $\text{Linear}(5,1)$. And for choice networks without features, we ignore product features and simply take assortments as input.

We carry out our training on $100,000$ samples and the testing set has size $10,000$, and we tune the structure of each neural network model to compare their performances on the testing set.
\begin{itemize}
    \item Gated-Assort-Net: $\text{Depth}=1$ indicates a network with no hidden layer, just a fully connected layer $\text{Linear}(50,50)$ that takes as input the assortment and outputs the final layer. $\text{Depth}=2$ indicates a network with one hidden layer, which is a $\text{Linear}(50,50)$ followed by another $\text{Linear}(50,50)$, and $\text{Depth}=3$ indicates two hidden layers. For the experiment that tunes the width, we fix $\text{Depth}=2$, that is one hidden layer. And we tune the size of the hidden layer. $\text{Width}=2$ indicates a $\text{Linear}(50, 100)$ followed by a $\text{Linear}(100,50)$, and $\text{Width}=3$ indicates a $\text{Linear}(50, 150)$ followed by a $\text{Linear}(150,50)$.
    \item Res-Assort-Net: Depth is exactly the number of residual blocks. For the experiment that tunes the width, we fix that there is one residual block, though this block contains one hidden layer. $\text{Width}=2$ indicates that the fully connected part of the residual block is a
    $\text{Linear}(50, 100)$ followed by a $\text{Linear}(100,50)$, and $\text{Width}=3$ indicates a $\text{Linear}(50, 150)$ followed by a $\text{Linear}(150,50)$.
    \item Gated-Assort-Net(f): Following the same rule introduced above, product features are encoded into a vector of latent utilities. The meanings of depth and width are exactly the same as in Gated-Assort-Net.
    \item Res-Assort-Net(f): According to the structure of the Res-Assort-Net with feature, latent utilities and assortments are concatenated into a vector of length $100$, as the input of the residual blocks. Set aside the ending fully connected layer $\text{Linear}(100, 50)$, the rest parts of the residual blocks are exactly the same as ones in Res-Assort-Net. And so depth is the number of residual blocks, and width indicates the size of the hidden layer. $\text{Width}=2$ indicates that the fully connected part of the residual block is a
    $\text{Linear}(100, 200)$ followed by a $\text{Linear}(200,100)$, and $\text{Width}=3$ indicates a $\text{Linear}(100, 300)$ followed by a $\text{Linear}(300,100)$.
\end{itemize}
\subsection{Appendix for Subsection~\ref{subsec:warm_start}}
We compare the training procedure of the following two plans:

\begin{itemize}
\item Cold start: We ignore the previous network of $\mathcal{N},$ and train a new network from scratch for the new set $\mathcal{N} \cup \mathcal{N}'$.
\item Warm start: We initialize the weights of the new network according to the previously well-trained network of $\mathcal{N}$.
\end{itemize}

Figure~\ref{fig:WarmStart} presents the validation losses under the two training schemes for the two models with the old set $|\mathcal{N}|=20$ and the new set $|\mathcal{N}'|=5$. The experiment shows that, when there is an abundant amount of data for augmented models, warm start training will converge to the optimal solution faster in starting stages as the right panel. More importantly, when there is not enough data, warm start training will not only be faster but also produce a better solution. The second observation can be practically useful, as the retails may carry out small-scale experiments to test if it is profitable to introduce more products to their shops. The experiment tells that the neural networks can be trained with a small amount of new data if most of its parameters are inherited from a well-trained model.

\textbf{Experiment details of Figure~\ref{fig:WarmStart}:} We use MCCM to generate the synthetic data for Figure~\ref{fig:WarmStart}. We first generate the arriving probability $\lambda$ and the transition matrix $\rho$ for the underlying MCCM. We consider $25$ products and so $\lambda \in \mathbb{R}^{26}, \rho \in \mathbb{R}^{26\times 26}$, as product 0 (the no-purchase option) needs to be included in addition. Following this MCCM we generate $200,000$ pieces of samples, with each sample containing an assortment and the corresponding choice probability and the one-hot encoded final choice vector. We call this synthetic dataset D-augment. Then, based on the $\lambda$ and $\rho$ for D-augment, we can do some modifications to classify product $21$ to $25$ into product $0$. We get $\lambda'\in \mathbb{R}^{21}$ such that $\lambda'_i=\lambda_i$ for $i=1,...,20$ and $\lambda'_0=\lambda_0+\sum_{i=21}^{25}\lambda_i$. In similar manners we get $\rho'\in \mathbb{R}^{21\times 21}$, corresponding to a shrunk version of the original MCCM, i.e., a new MCCM with total number of products $n=20$ plus one no-purchase option. For this new model we also generate $200,000$ pieces of samples, and we call this dataset D-shrink.

We first train models on dataset D-shrink, the Gated-Assort-Net has one hidden layer of size $50$, that is a Linear$(20, 50)$ followed by a Linear$(50, 20)$, and the Res-Assort-Net has one residual block, also with one hidden layer of size $50$. We train the Gated-Assort-Net and Res-Assort-Net on $100,000$ training samples. After that, we consider the augmented dataset D-augment, and we train Gated-Assort-Net and Res-Assort-Net with one hidden layer of size $50$. But we consider two methods for training the augmented model. One is the cold start method, in which we randomly initialize the model parameters and train the model on D-augment; the other is the warm start method, in which we initialize parameters at corresponding positions with the models trained on D-shrink and then train the model on D-augment. The results are given in Figure~\ref{fig:WarmStart}, on which we plot how validation loss varies with training epochs. Then left panel is the training on D-augment with training sample size $m=2,000$, and the right panel is the training with sample size $m=100,000$.

\subsection{Appendix for Subsection~\ref{subsec:meta}: Hotel Data Example}
\label{apnx:Hotel}
We apply  Algorithm~\ref{alg:meta} to a public hotel dataset. The data is from \cite{bodea2009data} and it is originally collected from five U.S. properties of a major hotel chain. For each customer, the offered products in assortment are the different room types, such as king room smoking and 2 double beds room non-smoking. We preprocess the data  as in \cite{csimcsek2018expectation},  removing purchases without matched room type in the assortment,  room types with few purchases (less than $10$), and adding an auxiliary no-purchase option (create four no-purchase records for each purchase record). The summary statistics of the dataset is given in Table~\ref{tab:Hotel_data}.
\begin{table}[ht!]
    \centering
    \begin{tabular}{c|c|c|c|c|c}
    \toprule
       &Hotel $1$&Hotel $2$&Hotel$3$&Hotel $4$&Hotel$5$ \\
\midrule
\# Samples&8935&870&6180&1330&1065\\
\# Products&12&7&17&8&7\\
\# Train Samples&6000&600&4000&900&800\\
\# Validate Samples&1000&100&1000&100&100\\
\# Test Samples&1000&100&1000&200&100\\
 \bottomrule
    \end{tabular}
    \caption{Description of Hotel data (\cite{bodea2009data}).}
    \label{tab:Hotel_data}
\end{table}

Table~\ref{tab:Hotel_result} demonstrates the performance of two benchmark methods and the two neural network models. The experiments is repeated over 10 times varying the training and test sets randomly and the averaged results  reported: each
time the train/validation/test sets are independently and randomly separated. In this experiment, we train our Gated-Assort-Net and Res-Assort-Net for each of hotels using only samples for the hotel. Both networks are taking the simplest configuration: Gated-Assort-Net has no middle layer and Res-Assort-Net has one residual block. Compared to the benchmark MCCM-EM algorithm, we observe that our models have better performance on the dataset of Hotel $1$ and Hotel $3$. This is due to the fact that these two datasets have much larger number of samples than the other $3$ hotels. This coincides with our previous observations that,  a moderately large number of samples is necessary to fully show the advantage of a neural network model. When the number of samples is small, or possibly, the customer choice pattern is simple, the more analytical model of MNL-MLE or MCCM-EM is preferrable. To improve the performances of neural choice model, we can apply Algorithm~\ref{alg:meta}: we generate abundant synthetic data from the well-trained MCCM, which has the lowest losses in validation data, to feed our neural choice model. Specifically, for each hotel in Hotel 2, Hotel 4 and Hotel 5,  we generate $150,000$ samples as training dataset $\Tilde{\mathcal{D}}_{\text{train}}=\{(i_k,\mathcal{S}_k),k=1...,150,000\}$. The offered assortment $\mathcal{S}_k$ is generated by first randomly choosing assortment size $|\mathcal{S}|$ in $\{1,...,|\mathcal{N}|\}$, and then  randomly picking products from $\mathcal{N}$ and form the assortment $S$ and the choice $i_k$ is generated by the corresponding fitted MCCM for each hotel. We re-train a new neural network on the synthetic data  $\Tilde{\mathcal{D}}_{\text{train}}$ and fine-tune it on $\mathcal{D}_{\text{train}}$. We report the performances on Table~\ref{tab:Hotel_result}: by Algorithm~\ref{alg:meta}, using neural choice model as a meta choice model can outperform or at least as good as the benchmarks.

\begin{table}[ht!]
    \centering
    \begin{tabular}{c|cccccc}
    \toprule
  \multirow{2}{*}{Model}      &\multicolumn{5}{c}{Dataset} \\
       &Hotel $1$&Hotel $2$&Hotel$3$&Hotel $4$&Hotel$5$  \\
\midrule
MNL-MLE
&0.775&0.902&0.980&0.661&0.660\\
MCCM-EM
&0.757&\textbf{0.856}&0.945&0.656&0.655\\

\midrule
Gated-Assort-Net & \textbf{0.748} & 0.869 & 0.934 & 0.678 & 0.680\\
Res-Assort-Net & \textbf{0.748} & 0.879 & \textbf{0.928} & 0.688 & 0.679\\
\midrule
Gated-Assort-Net(Meta) & - & \textbf{0.856} & - & \textbf{0.647} & \textbf{0.648}\\
Res-Assort-Net(Meta) & - & 0.876 & - & 0.648 & \textbf{0.648}\\
 \bottomrule
    \end{tabular}
    \caption{Performance on the Hotel data. The Meta means using neural networks as meta choice models by Algorithm~\ref{alg:meta} and the reported number is the out-of-sample CE loss.}
    \label{tab:Hotel_result}
\end{table}

\section{More Extensions}
\subsection{Assortment Effect Processed by Fully Connected Layers}
\label{subsec:assort_eff}

\begin{figure}[ht!]
    \centering
    \begin{subfigure}[b]{0.5 \textwidth}
        \centering
        \includegraphics[width=1\textwidth]{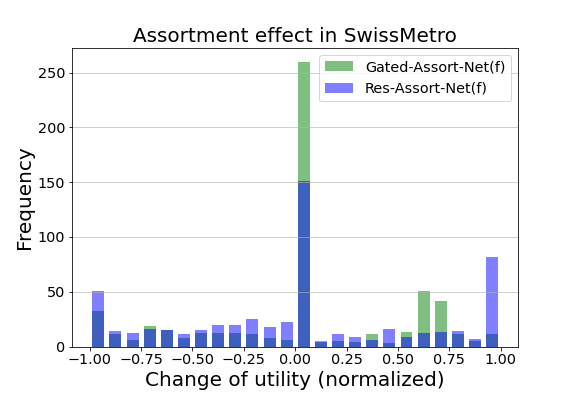}
    \end{subfigure}%
    \begin{subfigure}[b]{0.5 \textwidth}
        \centering
        \includegraphics[width=1\textwidth]{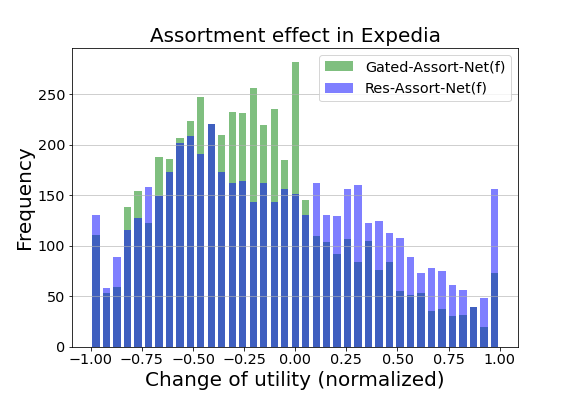}
    \end{subfigure}
    \caption{Effects of fully connected layers in SiwssMetro and Expedia. A larger absolute value means a larger assortment effect from fully connected layers.}
    \label{fig:Eff_layer}
\end{figure}

In practice, we want to understand how the neural networks leverage assortment effect into purchase prediction. Recall the feature-based neural choice models in Figure~\ref{fig:net_feat}, the latent utilities are processed by fully connected layers with assortment information progagated through logic gates in Gated-Assort-Net(f) or directly   as input features in Res-Assort-Net(f).  To interpret such processing, we compute a normalized difference between the input latent utility and output utility of each network. Such a difference reflects the assortment effect in the network.  We denote such difference as $\Delta u$ and plot the histograms of Gated-Assort-Net and Res-Assort-Net in Figure~\ref{fig:Eff_layer}.

From Figure~\ref{fig:Eff_layer}, we find that effects from fully connected layers will vary due to different dataset structures and networks.  For the SwissMetro data set, the majority of changes in normalized utilities $\Delta u$ are located in $0$ for both networks. One plausible reason is that there are only three products and only a small number of samples ($13.5\%$)  do not offer all  the products. Thus, the effect of assortments through the fully connected layer is very small. However, since the Expedia data has more complicated assortment patterns, the $\Delta u$'s do not concentrate at $0$ anymore and have more variance. This also explains why the Res-Assort-Net(f) that has a stronger involvement of the assortment vector, shows an advantage on the Expedia dataset.

Further, by comparing Gated-Assort-Net(f) and Res-Assort-Net(f) in both data sets, there are two interesting observations: (1) There are more $\pm1$'s in Res-Assort-Net(f), which means more dramatic  changes in normalized utilities after passing the fully connected layers; (2) The histograms of Res-Assort-Net(f) are flatter than those of Gated-Assort-Net(f), which means the influences of fully connected layers have more variations. Both of the above observations can be partially explained by the architectures: the Gated-Assort-Net(f) only uses assortments as logical gates but Res-Assort-Net(f) also concatenates it as input features and can squeeze more information to process the input latent utilities, which leads to potentially larger degree of influence with more variations.

\textbf{Experiment details of Figure~\ref{fig:Eff_layer}:}  For each offered product $i\in \mathcal{S}$, we first normalize its input and output utilities of the fully connected layers (from Gated-Assort-Net(f) or Res-Assort-Net(f)) by
$$\tilde{u}_i^{I/O}=\frac{u^{I/O}_i-\min_{j\in \mathcal{S}}u^{I/O}_j}{\max_{j\in \mathcal{S}}u^{I/O}_j-\min_{j\in \mathcal{S}}u^{I/O}_j},$$
where $u^{I}$ denotes input utilities and $u^{O}$ denotes outputs of the fully connected layers. Note that the normalization is within assortment: different assortments may have different scales on utilities and what matters is the relative value.

Then we compute the difference:
$$\Delta u_i\coloneqq \tilde{u}_i^{O}-\tilde{u}^{I}_i$$
for each $i\in \mathcal{S}$ of the fully connected layers. $\Delta u_i=0$ means the normalized utilities are same before and after the processing of fully connected layers. If $\Delta u_i$ is $-1$ or $1$, product $i$ is dramatically influenced: before processing it has the largest utility (smallest utility), i.e. $\tilde{u}^{I}_i=1$ ($\tilde{u}^{I}_i=0$), but after processing it becomes the smallest (largest).

We randomly choose $200$ samples from the test data to plot Figure~\ref{fig:Eff_layer}. The configurations of Gated-Assort-Net(f) and Res-Assort-Net(f) are chosen as the one with the best performance shown in Table~\ref{tab:RD}.

\subsection{Model Calibration}
\label{subsec:calibration}

 Another important aspect of overall performance is the matter of calibration. Ideally, we hope the predicted probability matches the empirical/true probability. For deep learning models, a high accuracy may often come with an overconfidence in predictions (overestimation of the true probabilities) \citep{guo2017calibration}, which is called \textit{miscalibration}. Figure~\ref{fig:calibrate_swiss} is the calibration plot of predicting Metro for the SwissMetro dataset, and we observe that all the neural network models stay mostly below the 45-degree reference line, i.e., suffer from an over-confidence in predicting the probability.
\begin{figure}
    \centering
    \includegraphics[scale=0.4]{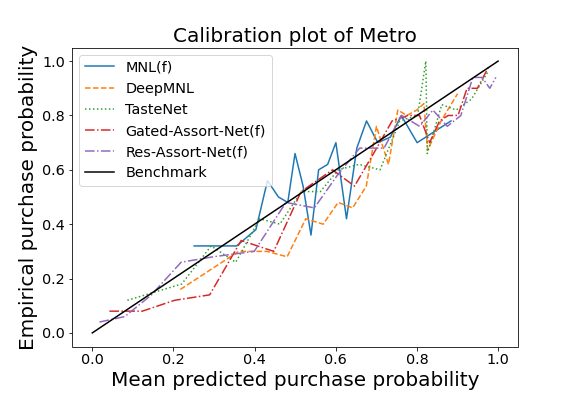}
    \caption{Overconfidence of predicting Metro. The predicted probabilities of the test samples are binned into $20$ bins with the same number of samples. For each bin we compute the mean predicted prob. as x-axis and the mean empirical prob. as y-axis. The benchmark is black solid line where $x=y$.}
    \label{fig:calibrate_swiss}
\end{figure}

 Table~\ref{tab:Calib} summarizes the calibration performances of different models averaged for all choices. In general, both Gated-Assort-Net(f) and Res-Assort-Net(f) have good performances in calibration, which complements the low out-of-sample CE losses shown in Table~\ref{tab:RD} in the sense of reliability.

\begin{table}[ht!]
    \centering
    \begin{tabular}{c|cc}
    \toprule
  \multirow{2}{*}{Model}    &\multicolumn{2}{c}{Datasets}\\
%\midrule
& SwissMetro & Expedia \\
\midrule
MNL-MLE
&4.56&0.79\\
MNL(f)-MLE
&6.22&0.61\\
\midrule
TasteNet+MNL
&5.50&--\\
TasteNet+MNL(f)
&4.66&0.60\\
DeepMNL+MNL
&8.19&--\\
DeepMNL+MNL(f)
&7.26&0.59\\
Random Forest
&7.56&0.41\\
\midrule
Gated-Assort-Net(f)
&4.67&0.39\\
Res-Assort-Net(f)&4.65&0.42\\
 \bottomrule
    \end{tabular}
    \caption{Calibration errors on SwissMetro and Expedia by ACE ($\times10^{-2}$).}
    \label{tab:Calib}
\end{table}

\textbf{Metric calculation for Table~\ref{tab:Calib}:} We use the Adaptive Expected Calibration Error (ACE) as our metric to measure calibration performance \citep{nixon2019measuring}:
$$\text{ACE}=\frac{1}{mB}\sum_{i=1}^n n_i\sum_{b=1}^B|\text{acc}(b,i)-\text{conf}(b,i)|,$$
where $m,B,n$ are the number of samples (assortments), bins and products. Here $n_i$ is the number of assortment samples where product $i$ is included, $\text{acc}(b,i)$, $\text{conf}(b,i)$ are the mean of empirical purchase probabilities (confidence) and mean of predictions (accuracy) in $b$-th bin for product $i$. Further, for each product $i$, the bins are chosen by equal-mass, i.e., each bin will contain the same number of samples.

For Table~\ref{tab:Calib}, we choose the number of bins $B=25$. The configurations of models are chosen as the one with lowest CE losses in Table~\ref{tab:RD}. The training and testing data are same as in Appendix~\ref{sec: real_data_exp}.
\end{document}